\documentclass[11pt]{article}
\PassOptionsToPackage{dvipsnames}{xcolor}
\usepackage[margin=1.25in]{geometry}

\usepackage[english]{babel}
\usepackage[sort]{natbib}
\renewcommand{\cite}{\citep}
\usepackage{times}

\usepackage{newpxtext,newpxmath}

\usepackage{amsthm}

\usepackage{
  mathtools,
  thmtools,
  url
}

\usepackage[
    colorlinks=true,
    linkcolor=blue,
    anchorcolor=blue,
    citecolor=blue
]{hyperref}

\newtheorem{theorem}{Theorem}[section]

\newtheorem{lemma}{Lemma}[section]

\newtheorem{fact}[lemma]{Fact}
\newtheorem{corollary}[lemma]{Corollary}
\newtheorem{definition}[lemma]{Definition}
\newtheorem{assumption}[lemma]{Assumption}

\newenvironment{remark}[1][Remark]
  {
  \begin{proof}[\textnormal{\textbf{#1}}]}
  {\end{proof}}

\ifdefined\usebigfont

\usepackage{times}
\usepackage[fontsize=13pt]{scrextend}
\makeatletter
\@ifpackageloaded{geometry}{\AtBeginDocument{\newgeometry{letterpaper,left=1.56in,right=1.56in,top=1.71in,bottom=1.77in}}}{\usepackage[letterpaper,left=1.56in,right=1.56in,top=1.71in,bottom=1.77in]{geometry}}
\AtBeginDocument{\newgeometry{letterpaper,left=1.56in,right=1.56in,top=1.71in,bottom=1.77in}}
\usepackage{hyperref} %
\else
\usepackage{times}
\fi

\title{Algorithmic Separation between Constant-Depth and Logarithmic-Depth Neural Networks}

\author{
  Yunwei Ren \\ Princeton University \\ \texttt{yunwei.ren@princeton.edu} \and 
  Zihao Wang \\ Stanford University \\ \texttt{zihaow@stanford.edu} \and 
  Jason D.~Lee \\ University of California, Berkeley \\ \texttt{jasondlee@berkeley.edu}
}

\usepackage{
  bbm, 
  bm,
  cancel
}

\usepackage{algorithm}
\usepackage{algpseudocodex}

\usepackage{tikz}
\usetikzlibrary{positioning,fit,shapes.geometric,calc,arrows.meta}
\usepackage{caption}
\usepackage[dvipsnames]{xcolor}

\usepackage[shortlabels]{enumitem}

\usepackage{thm-restate}

\newcommand\locallabel[1]{\label{\currentprefix:#1}}
\newcommand\localref[1]{\ref{\currentprefix:#1}}

\newcommand{\mbb}{\mathbb}
\newcommand{\mrm}{\mathrm}
\newcommand{\mbf}{\bm}
\newcommand{\mcal}{\mathcal}
\newcommand{\tnbf}[1]{\textnormal{\textbf{#1}}}

\newcommand{\eps}{\varepsilon}
\renewcommand{\a}{\mbf{a}}
\newcommand{\A}{\mbf{A}}

\newcommand{\C}{\mbb{C}}
\newcommand{\rd}{\mathrm{d}}
\newcommand{\e}{\mbf{e}}

\newcommand{\h}{\mbf{h}}

\newcommand{\M}{\mbf{M}}

\newcommand{\R}{\mathbb{R}}
\renewcommand{\S}{\mbb{S}}

\newcommand{\Id}{\mbf{I}}
\renewcommand{\u}{\mbf{u}}
\renewcommand{\v}{\mbf{v}}
\newcommand{\w}{{\mbf{w}}}
\newcommand{\W}{{\mbf{W}}}
\newcommand{\x}{{\mbf{x}}}

\newcommand{\y}{\mbf{y}}
\newcommand{\z}{\mbf{z}}

\newcommand{\cA}{\mcal{A}}
\newcommand{\cB}{\mcal{B}}

\newcommand{\cC}{\mcal{C}}

\newcommand{\cE}{\mcal{E}}
\newcommand{\cF}{\mcal{F}}
\newcommand{\cG}{\mcal{G}}
\newcommand{\cH}{\mcal{H}}

\newcommand{\cK}{\mcal{K}}

\newcommand{\bR}{\mbf{R}}

\newcommand{\cS}{\mcal{S}}

\newcommand{\cX}{\mcal{X}}
\newcommand{\bbN}{\mbb{N}}

\newcommand{\balpha}{\mbf{\alpha}}

\newcommand{\bPhi}{\mbf{\Phi}}
\newcommand{\bphi}{\mbf{\phi}}
\newcommand{\bpsi}{\mbf{\psi}}

\renewcommand{\Im}{\mrm{Im}}
\renewcommand{\Re}{\mrm{Re}}
\newcommand{\inprod}[2]{\left\langle #1, #2\right\rangle}
\newcommand{\norm}[1]{\left\|#1\right\|}
\newcommand{\abs}[1]{ {\left| #1 \right|} }
\newcommand{\braces}[1]{ \left\{ #1 \right\} }
\newcommand{\inv}{^{-1}}
\newcommand{\trans}{^\top}
\newcommand{\ps}[1]{^{(#1)}}
\newcommand{\indi}{\mathbbm{1}}

\newcommand{\E}{\mathop{\mathbb{E\/}}}
\renewcommand{\P}{\mathop{\mathbb{P\/}}}

\DeclareMathOperator{\poly}{poly}
\DeclareMathOperator{\diag}{diag}

\DeclareMathOperator{\arccosh}{arccosh}

\newcommand{\One}{\mbf{1}}

\newcommand{\Unif}{\mrm{Unif}}
\newcommand{\Proj}{\mbf{\Pi}}

\newcommand{\OP}{\mrm{OP}}
\newcommand{\Loss}{\mcal{L}}
\newcommand{\Tmp}{\texttt{Tmp}}

\newcommand{\Select}{\texttt{Sel}}

\newcommand{\HS}{\textnormal{HS}}
\newcommand{\Addr}{\textnormal{Addr}}

\begin{document}

\maketitle

\begin{abstract}
  Despite the empirical advantages of deep networks over shallow ones, theoretical depth separations largely concern approximation power, while algorithmic results are mostly limited to comparisons between two- and three-layer networks. 
  In this work, we prove the first algorithmic separation between constant-depth and logarithmic-depth networks.

  Specifically, we identify a class of Boolean functions with hierarchically structured Fourier spectra that logarithmic-depth networks can learn efficiently using layerwise coordinate descent by reconstructing the spectra hierarchically and adaptively. 
  We also exhibit a subclass for which every constant-depth, polynomial-width network with sufficiently regular activations and controlled spectral norms must incur constant $L^2$ approximation error under the uniform distribution over the hypercube. 
\end{abstract}

\section{Introduction}

One of the central questions in deep learning theory is how to formalize the advantage of deep models over shallow ones.
Beginning with the works \cite{eldan_power_2016,telgarsky_benefits_2016}, a long line of research has sought to separate the approximation power of deep and shallow networks \cite{daniely_depth_2017,safran_depth-width_2017,venturi_depth_2022,safran_depth_2025}.
These works exhibit functions that deep networks can approximate efficiently but shallow networks cannot.

However, due to the nonconvex nature of deep network training, being able to approximate a function does not imply that a network trained with gradient-based methods can efficiently find that solution. 
Therefore, it is unclear whether this approximation-based separation can properly explain the empirical advantage of deep networks. 
This motivates the pursuit of \emph{algorithmic} separation between deep and shallow networks. 
Namely, the goal is to find function classes that cannot be efficiently learned, or better, approximated by a shallow network, but can be efficiently learned by a deep network. 

In contrast to approximation-based separation, algorithmic-separation results are much rarer, and most concern models with two or three layers \cite{safran_optimization-based_2022,ren_depth_2023,nichani_provable_2023}.
In this work, we prove the first algorithmic separation between constant-depth and logarithmic-depth networks. 
Our target class consists of hierarchical staircase functions --- Boolean functions whose Fourier spectra have a hierarchical structure, in the sense that high-degree terms can be constructed by combining a constant number of low-degree terms.

We formally define hierarchical staircase functions in Section~\ref{sec: upper bound}. 
Here, as a running example and prototype of hierarchical staircase functions, we introduce the base deep quadratic function (cf.~Appendix~C of \cite{ren_provable_2026}). 
Suppose that $d = 2^{L_*}$ for some $L_* \in \bbN_+$. 
The base depth-$L_*$ deep quadratic function is defined as $f_*(\x) = L_*^{-1/2} \sum_{l=1}^{L_*} 2^{-(L_*-l)/2} f_l(\x)$, where $f_1(\x) = x_1 x_2 + x_3 x_4 + \cdots + x_{d-1} x_d$, $f_2(\x) = x_1 \cdots x_4 + \cdots x_{d-3} \cdots x_d$, $\dots$, $f_{L_*}(\x) = x_1 \cdots x_d$. 
General deep quadratic functions can be obtained by permuting the coordinates of this $f_*$ in each layer.
See Definition~\ref{def: deep quad function} for the formal definition. 

It is clear that for any deep quadratic function $f_*$, the total mass of terms with degree at least $\sqrt{d} = 2^{L_*/2}$ is at least $1/3$. 
Therefore, to learn $f_*$ to $o(1)$-accuracy, the learner network must capture those high-degree terms. 
The inability to do so is what we show in the lower bound against constant-depth networks. 

On the other hand, to learn $f_*$ using an $L$-layer network, it suffices to learn the features $(x_1 x_2, \dots, x_{d-1} x_d)$ in the first layer, combine them into the features $(x_1 \cdots x_4, \dots, x_{d-3}\cdots x_d)$ in the second layer using a quadratic mapping, and continue recursively. 
In other words, a deep network can learn the features \emph{hierarchically} and \emph{adaptively}. 

\begin{theorem}[Informal version of Theorem~\ref{thm: lower bound against constant-depth networks}, \ref{thm: upper bound} and \ref{thm: sep}]
  Suppose that $d = 2^{L_*}$. 
  \begin{itemize}
    \item \tnbf{(Lower bound against shallow networks)}
      No constant-depth fully connected network with sufficiently regular activations and $d^{c_0}$-spectrally bounded weights can approximate any depth-$L_*$ deep quadratic function, where $c_0 > 0$ is a sufficiently small constant.
    \item \tnbf{(Upper bound for deep networks)}
      An $(L_*+1)$-layer network, trained by layerwise one-pass coordinate descent, can efficiently learn the class of hierarchical staircase functions with bounded $L^2$-norm, which includes all depth-$L_*$ deep quadratic functions. 
  \end{itemize}
\end{theorem}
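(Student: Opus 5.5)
This informal theorem has two halves, and the plan is to prove each one separately, as the formal versions (Theorems~\ref{thm: lower bound against constant-depth networks}, \ref{thm: upper bound}, \ref{thm: sep}) will do.

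\textbf{Lower bound.} The argument rests on a Fourier-concentration statement for constant-depth networks. Fix a network of depth $L=O(1)$ and width $\poly(d)$, with smooth activations and weights of spectral norm at most $d^{c_0}$. I would show that its output has little Fourier mass on degrees $\ge \sqrt d$ under the uniform measure on $\{\pm1\}^d$. The proof goes by induction on layers, using a low-degree polynomial approximation of each neuron.
\begin{itemize}
\item In the first layer, each neuron is $\sigma(\langle \w,\x\rangle)$ with $\norm{\w}\le d^{c_0}$. The preactivation is sub-Gaussian with scale $d^{c_0}$. Approximating $\sigma$ on $[-d^{c_0}\log d,\,d^{c_0}\log d]$ by a Jackson-type polynomial of degree $k$ gives error $\to 0$ once $k \gg d^{c_0}\cdot\mathrm{polylog}$, using regularity of $\sigma$ (e.g.\ analyticity in a strip).
\item Composing layers, degrees multiply. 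This gives effective degree $d^{O(c_0 L)}$. Choosing $c_0$ small relative to $L$ keeps this below $\sqrt d$, with $L^2$ error $o(1)$.
\item The error bounds are propagated through later layers using Lipschitzness together with the spectral-norm bound. Width enters only polynomially.
\end{itemize}
Since a deep quadratic $f_*$ has mass at least $1/3$ on degrees $\ge\sqrt d$, orthogonality of Fourier characters then forces $\norm{f-f_*}_{L^2}^2\ge 1/3-o(1)$.

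\textbf{Upper bound.} I would run layerwise one-pass coordinate descent on an $(L_*+1)$-layer network whose neurons use quadratic activations. Layer $l$ is trained, with all previous layers frozen, so that its neurons recover the degree-$2^l$ features $x_{S}$ with $|S|=2^l$. The approach is an induction over levels.
\begin{itemize}
\item Assume layer $l-1$ outputs features $\phi_1,\dots,\phi_m$ approximating the degree-$2^{l-1}$ monomials, up to rotation.
\item The correlation $\E[f_*\,\phi_i\phi_j]$ isolates the coefficient of $x_{S_i\cup S_j}$, by the hierarchical staircase structure. A single large-batch gradient step on second-layer weights therefore estimates the matrix $\E[f_*\phi_i\phi_j]$.
\item Its nonzero pattern identifies which pairs merge at the next level. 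The coefficient scaling $2^{-(L_*-l)/2}L_*^{-1/2}$ is only $1/\poly(d)$ per term, so $\poly(d)$ samples suffice.
\item Thresholding or normalizing yields the next-level features.
\item The final layer is a linear regression on all learned features, which is convex.
\end{itemize}

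\textbf{Main obstacle.} I expect the hardest step to be controlling error propagation across $\log d$ levels in the upper bound. Errors can compound multiplicatively as features are squared, and $(1+\epsilon)^{2^{L_*}}$ blowup must be avoided. My first attempt would be to renormalize features at each layer using their empirical second moment. I would then show the per-level error is additive, so that sample size $\poly(d)/\epsilon^2$ per level gives a total error of $L_*\epsilon$. For the lower bound, the delicate point is making the degree-multiplication argument quantitative with only spectral-norm control rather than entrywise control.
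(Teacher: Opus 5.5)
\textbf{Lower bound.} Your lower bound follows the paper's route: Chebyshev approximation of each activation on an interval, composition, then orthogonality of characters. But it omits the step that carries the real difficulty, namely controlling the surrogate network on inputs where \emph{its own} preactivations leave the approximation intervals.

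Lipschitzness only handles $\bphi_l(\W\ps{l}\x\ps{l})-\bphi_l(\W\ps{l}\tilde\x\ps{l})$. The other term, $\bphi_l(\W\ps{l}\tilde\x\ps{l})-\tilde\bphi_l(\W\ps{l}\tilde\x\ps{l})$, is small only inside the intervals. Outside them, $\tilde\bphi_l$ is a degree-$p$ polynomial that can grow like $(1+2|u|)^{p}$.

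With a common radius $R\approx d^{c_0}\log d$ and degree $p\gtrsim R$ at every layer, as you propose, $\tilde f_L$ has degree $\gtrsim R^{L}$. The bad-event contribution is then roughly $e^{-\Theta(R^2)}R^{\Theta(LR^{L})}$, which does not vanish once $L\ge 2$. So the claimed $o(1)$ error does not follow.

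The paper fixes this with three ingredients:
\begin{itemize}
\item the first-failure decomposition $\{\pm1\}^d=\cB_1\cup\cdots\cup\cB_L\cup\cG_{\le L}$;
\item conditional subgaussianity of $\tilde\x\ps{l}-\E\x\ps{l}$ given $\cG_{<l}$;
\item radii chosen backward, $R_l\gtrsim C_W^2\sigma_X^2\Lambda\,R_L\cdots R_{l+1}$ with $\Lambda$ polylogarithmic, so that the layer-$l$ failure probability $e^{-\Theta(R_l^2)}$ beats the size $d^{O(p_l\cdots p_L)}$ of the surrogate on $\cB_l$.
\end{itemize}
As a consequence, the exponent of $d$ in the degree is $O(c_0L4^{L})$, not $O(c_0L)$. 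This is why $c_0<2^{-2L-7}/L$ is required.

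You would also need two further pieces:
\begin{itemize}
\item The intervals must be centered at $\W\ps{l}\E\x\ps{l}$, which can be polynomially large.
\item Subgaussianity of deeper layers needs its own proof. The paper uses adaptive bounded differences and the recursion for $\norm{\cA_l}_{\OP}$ in Lemma~\ref{lemma: lb: norm of Al}.
\end{itemize}
Incidentally, that recursion alone already gives the Fourier tail bound you are after, with no polynomial approximation. It yields $\sum_S|S|\hat f_L(S)^2=\frac14\E V_{f_L}(\x)\le\frac14(4C_\phi C_W^2)^{2^{L+1}}$. Hence $\sum_{|S|\ge\sqrt d}\hat f_L(S)^2\le d^{-1/2}(4C_\phi C_W^2)^{2^{L+1}}=o(1)$ under Assumption~\ref{assumption: lower bound assumption}.

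\textbf{Upper bound.} Your upper bound analyzes a different procedure (a one-shot correlation/gradient step) and is tailored to deep quadratic functions: degree-$2^l$ levels and pairwise merging of disjoint siblings. The statement, however, is about the whole class $\HS_{L,D,m,\kappa}$ under layerwise one-pass coordinate descent. The idea that makes the general case work is missing: deduplication.

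For a general hierarchical staircase function, a set $S\in\cS_*$ is typically produced by many $H\in\cH_l$. These include different $D$-tuples, and also an already-learned feature carried forward as a singleton. The nonzero pattern of a correlation matrix therefore flags every copy. The copies then breed further copies at later layers, and the width can no longer be held at $m$.

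In the paper, this is exactly what the \emph{sequential} coordinate descent with truncation handles. Only $\sum_{H'\in\cC_H}\alpha\ps{l}_{H'}w\ps{l}_{H'}$ enters the loss. So once the first representative of $S$ is fit, later representatives see residual $O(\eps_1+\eps_2)$ and are set exactly to zero by $\lambda_w=2^{-D-1}\kappa$ (Lemma~\ref{lemma: one-pass coordinate descent}).

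Two further points in your upper-bound plan need changing:
\begin{itemize}
\item Each layer must see $\x$ together with all previously learned features, since new sets may combine features from different levels. The paper feeds $\x\circ\y\ps{l-1}$, which contains all of $\cK_l$.
\item The induction cannot be ``up to rotation''. A rotation $U$ replaces the sparse matrix $(\hat f_*(S_a\oplus S_b))_{a,b}$ by $U(\hat f_*(S_a\oplus S_b))_{a,b}U\trans$, whose nonzero pattern no longer says which features merge.
\end{itemize}
The paper keeps every feature an exact scaled monomial $\alpha\chi_S$, because each output coordinate is a single product selected by $\Select$. With that in place, the error compounding you flag as the main obstacle never arises. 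Only the scalars are inexact, each layer's fit resets them to $(1\pm0.2)\hat f_*(S)$, and spurious weights are exactly zero.
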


\subsection{Our contributions}

\paragraph{Polynomial approximation and lower bound.}

Let $f_*: \{\pm 1\}^d \to \R$ be such that the total Fourier mass of terms with degree at least $d^{1/2}$ is at least $1/2$.
Then, by the orthogonality of Boolean monomials under $\Unif(\{\pm 1\}^d)$, no polynomial of degree at most $o(d^{1/2})$ can approximate $f_*$ to $L^2$-error smaller than $1/2$. 
We prove the following polynomial approximation lemma for constant-depth networks, which may be of independent interest, and then derive the lower bound from it. 
\begin{lemma}[Informal version of Lemma~\ref{lemma: lb: approx f with poly}]
  Any constant-depth fully connected network with sufficiently regular activations and $C_W$-spectrally bounded weights can be approximated by a polynomial of degree $(C_W \log d)^{O(1)}$.
\end{lemma}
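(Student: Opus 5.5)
We argue by induction over the layers, replacing each activation by a low-degree polynomial on the range its input actually occupies, and controlling how the errors compound through the (spectrally bounded) weight matrices. Write the network as $\h^{(0)} = \x$, $\h^{(l)} = \sigma(\W^{(l)} \h^{(l-1)} + \mbf{b}^{(l)})$ for $l = 1,\dots,L$, with output $\a\trans \h^{(L)}$. Here $L = O(1)$ and $\|\W^{(l)}\|_{\mathrm{op}} \le C_W$. By "sufficiently regular" we will mean that $\sigma$ is Lipschitz and admits good polynomial approximations on intervals: either $\sigma$ is analytic in a strip (so Chebyshev approximation on $[-R,R]$ to accuracy $\eps$ uses degree $O(R\log(1/\eps))$), or $\sigma$ is merely Lipschitz (so Jackson's theorem gives degree $O(R/\eps)$). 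The target guarantee is an $L^2(\Unif(\{\pm1\}^d))$ approximation; we only need error $o(1)$ at degree $(C_W\log d)^{O(1)}$.

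\emph{Step 1 (range control).} The pre-activations are not bounded uniformly, since $\|\x\|_2=\sqrt d$, so operator-norm bounds alone give ranges of order $C_W^l\sqrt d$. Instead we bound each coordinate in high probability. A single neuron in layer 1 is $\w\trans\x+b$ with $\|\w\|\le C_W$, and Hoeffding gives $|\w\trans\x|\le C_W\sqrt{2\log(d^{K})}$ except with probability $d^{-K}$. For deeper layers, each neuron of $\h^{(l)}$ is a Lipschitz function of $\x$ with Lipschitz constant (in $\ell_2$) at most $\prod_l C_W\|\sigma\|_{\mathrm{Lip}}$ in the direction of a row vector. Concentration of Lipschitz functions on the hypercube (e.g.\ Talagrand's or McDiarmid's inequality, applied row by row) then keeps every pre-activation within $\pm R := O((C_W)^{L}\sqrt{\log d})$ of its mean. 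A union bound over the $\poly(d)$ neurons gives a good event $G$ of probability $1-d^{-K}$. The means are bounded by the bias plus $C_W\cdot$ the norm of the mean vector, and we assume these are controlled as part of the spectral bound.

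\emph{Step 2 (layerwise substitution).} On $[-R,R]$, replace $\sigma$ by a polynomial $p$ of degree $k=\poly(R,\log d)$ with $\sup_{[-R,R]}|\sigma-p|\le d^{-K'}$. Define the polynomial network $\tilde\h^{(l)}=p(\W^{(l)}\tilde\h^{(l-1)}+\mbf{b}^{(l)})$. Its degree is $k^L=(C_W\log d)^{O(1)}$ because $L=O(1)$. On $G$ we compare $\tilde\h^{(l)}$ with $\h^{(l)}$ by induction. The discrepancy vector picks up at most $d^{-K'}\sqrt{\text{width}}$ per layer from approximating $\sigma$, and is amplified by $C_W\,\mathrm{Lip}(p)$. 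The Lipschitz constant of $p$ is the delicate point: it must stay bounded on a slightly enlarged interval so that the perturbed inputs remain where $p\approx\sigma$. With Chebyshev approximation of an analytic or smooth $\sigma$, the derivative is also approximated, so $\mathrm{Lip}(p)=O(1)$ on $[-2R,2R]$. Taking $K'$ large enough relative to the width exponent keeps the total error at $d^{-\Omega(1)}$ on $G$.

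\emph{Step 3 (off the good event).} Off $G$ we cannot control $\tilde f$ pointwise, but we need only $L^2$ error. Cauchy–Schwarz gives
\[
  \E[(f-\tilde f)^2\indi_{G^c}]\le \P(G^c)^{1/2}\,\E[(f-\tilde f)^4]^{1/2},
\]
so it suffices to show that $\E\tilde f^4$ and $\E f^4$ are at most $\poly(d)$. For $f$ this follows from its Lipschitz growth together with $\|\x\|=\sqrt d$. For $\tilde f$, a degree-$D$ polynomial with coefficients bounded by $\poly(d)^{D}$ evaluated at $\|\x\|=\sqrt d$ gives only $d^{O(D)}$, which would force $\P(G^c)$ to be super-polynomially small. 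We handle this by choosing the tail parameter $K\asymp D$, which increases $R$ only by a $\sqrt{D}$ factor. This yields a self-consistent fixed point $D=(C_W\log d)^{O(1)}$, since $R$ depends on $K$ only through $\sqrt{\log d^{K}}=\sqrt{K\log d}$.

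We expect Step 3, closing this loop between the degree and the tail probability, together with the enlarged-interval Lipschitz control in Step 2, to be the main obstacle. The first attempt will be hypercontractivity: it gives $\|\tilde f\|_4\le 3^{D/2}\|\tilde f\|_2$ for degree-$D$ polynomials, and this quasi-polynomial factor is absorbed by taking $K=\Theta(D)$.
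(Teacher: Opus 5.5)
There is a genuine gap, and it sits in Step 3. You correctly flagged Step 3 as the crux, but your fix does not close the degree-versus-tail loop. This is exactly the obstruction the paper's proof is organized around.

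\textbf{Why the global fixed point does not exist.} You use one radius $R$ for every layer. Take an activation that is analytic only in a strip, such as $\tanh$: the Bernstein ellipse available on $[-R,R]$ then has parameter $1+\Theta(1/R)$. So the per-layer degree must satisfy $k \gtrsim R\log(1/\eps) \ge R\log d$, since your Step 2 needs polynomially small $\eps$. This gives $D = k^L \gtrsim (R\log d)^L$. On the other hand, $K \asymp D$ forces $R \gtrsim \sqrt{D\log d}$. Together these give $D \gtrsim (D\log d)^{L/2}$, which has no solution once $L \ge 2$. Put differently, a tail of size $e^{-\Theta(R^2)}$ cannot pay for a moment of size $d^{\Theta(R^L)}$.

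Hypercontractivity does not rescue this, for two reasons. First, it presupposes a bound on $\norm{\tilde f}_{L^2}$, which is the very quantity left uncontrolled off $G$. Second, even granting such a bound, $\P(G^c)^{1/2}\,3^{D} \ll 1$ still forces $R^2 \gtrsim D \gtrsim (R\log d)^L$, again impossible for $L \ge 2$.

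\textbf{The missing idea.} You need a first-failure decomposition with layer-dependent radii. Define $\cG_l$ through the approximate network's own preactivations, and split the bad set into $\cB_l = \cG_{<l}\cap\cG_l^c$. On $\cG_{<l}$ the layer-$l$ input $\tilde\x\ps{l}$ is still conditionally subgaussian. So the moment of $\tilde f$ you must pay on $\cB_l$ involves only the degree $p_l\cdots p_L$ of the remaining layers. The balance at layer $l$ becomes $R_l^2 \gg \Lambda R_l R_{l+1}\cdots R_L$, that is, $R_l \gg \Lambda R_{l+1}\cdots R_L$. This can be solved backward from $R_L$, because the right-hand side does not involve $R_l$. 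For constant $L$, the radii and the total degree remain polylogarithmic.

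\textbf{A second gap in Step 1.} Euclidean-Lipschitz functions on $\{\pm1\}^d$ do not concentrate with dimension-free constants unless they are convex, which is what Talagrand's inequality requires. McDiarmid with the crude bound $c_i \le 2\,\mathrm{Lip}$ gives a variance proxy of order $d$. What you actually need is a uniform bound on $\sum_i \big(g(\x)-g(\x_{\oplus i})\big)^2$.

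The paper gets this from the operator norm of the flip-difference matrix $\cA_l = [\x\ps{l+1}(\x_{\oplus k})-\x\ps{l+1}(\x)]_{k}$. It propagates that norm through the second-order expansion $\cA_l = \diag(\bphi_l')\W\ps{l}\cA_{l-1} + \tfrac12\bR\odot(\W\ps{l}\cA_{l-1})^{\odot 2}$. This step genuinely uses $\norm{\phi''}_{L^\infty}\le C_\phi$, because entrywise Lipschitz domination does not control operator norms (consider $\abs{\cdot}$ applied entrywise to a normalized Hadamard matrix).

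\textbf{Smaller points.} Your ``merely Lipschitz'' option fails for two reasons. It lacks the second-derivative bound that Step 1 needs. Also, Jackson's degree $O(R/\eps)$ with $\eps = d^{-K'}$ is polynomial in $d$, not polylogarithmic.

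Finally, your worry about $\mathrm{Lip}(p)$ in Step 2 is unnecessary. Telescope over the layer at which activations are swapped, so that only the true Lipschitz activations act downstream. This bounds the error on $\cG_{\le L}$ by $L(C_\phi C_W)^L\sqrt{d_{\max}}\,\eps_\phi$, with no derivative control on the polynomials at all.
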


\paragraph{Hierarchical learning and upper bound.}
For our upper bound, we show that a depth-$L$ network can efficiently learn depth-$L$ hierarchical staircase functions, which are formally defined below. 
\begin{definition}[Hierarchical staircase function]
  Let $f: \{\pm 1\}^d \to \R$ be a Boolean function and $\hat{f}: 2^{[d]} \to \R$ be its Fourier transform. 
  Let $\cS_f := \braces{ S \subset [d] \,:\, \hat{f}(S) \ne 0 }$ denote the (Fourier) spectrum of $f$. 
  For $L, D \in \bbN$, we say that $f$ is a hierarchical staircase function of depth $L$ and local degree $D$ if $\cS_f \subset \cK_L$ with $(\cK_l)_{l \in \bbN} \subset 2^{[d]}$ defined inductively by 
  \begin{equation}
    \label{eq: def of cK}
    \cK_1 = \{ \varnothing, \{1\}, \dots, \{d\} \}, \quad 
    \cK_{l+1} 
    = \cK_l \cup \left(  \cK_l^{\oplus D} \cap \cS_f  \right),
  \end{equation}
  where $\cK^{\oplus D} := \{ K_1 \oplus \cdots \oplus K_D \;:\; K_1, \dots, K_D \in \cK \}$ with $\oplus$ denoting the symmetric difference.
  In addition, we call $|\cS_f|$ the width of $f$ and $\min_{S \in \cS_f} |\hat{f}(S)|$ the signal strength of $f$.

  For any $L, D, m \in \bbN_+$ and $\kappa > 0$, we use $\HS_{L, D, m, \kappa}$ to denote the collection of hierarchical staircase functions with depth at most $L$, local degree at most $D$, width at most $m$ and signal strength at least $\kappa$. 
\end{definition}

It is clear that $\cK_l \subset \cK_{l+1} \subset \cS_f \cup \{\varnothing, \{1\}, \dots, \{d\}\}$. 
In words, \eqref{eq: def of cK} means that at each step, we add to $\cK_l$ the sets in $\cS_f$ that can be constructed by combining $D$ existing terms in $\cK_l$. 
Thus, if every higher-order term in $\cS_f$ can be constructed from lower-order terms in $\cS_f$, possibly together with terms of degree at most $D$, then $\cK_l$ eventually contains $\cS_f$.
We initialize $\cK_1 = \{ \varnothing, \{1\}, \dots, \{d\} \}$ to include staircase functions such as $x_1 + x_1 x_2 + \cdots + x_1 \cdots x_L$ (\cite{abbe_staircase_2021,abbe_merged-staircase_2022,abbe_sgd_2023}). 

Also note that to learn hierarchical staircase functions, the model needs to learn the monomials not only hierarchically but also adaptively. 
Since there are potentially exponentially many monomials, the model cannot simply check if each of them belongs to $\cS_f$ even when $|\cS_f| = \poly(d)$.

\begin{theorem}[Informal version of Theorem~\ref{thm: upper bound}]
  For any $L, D, m \in \bbN_+, \kappa \in \R_+$ with $D = O(1)$, there is a depth-$L$ network that, when trained with empirical coordinate descent, can learn the class $\HS_{L, D, m, \kappa}$ of hierarchical staircase functions. 
  Moreover, the model size, sample complexity, and running time are all polynomials in $d, L, m, 1/\kappa$. 
\end{theorem}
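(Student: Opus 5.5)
The plan is an induction over layers. After layer $l$ has been trained, its units should represent, up to small error and sign, every monomial $\chi_K(\x)=\prod_{i\in K}x_i$ with $K\in\cK_l$. Concretely, the hidden representation $\h_l(\x)$ will contain $\chi_K$ for all $K \in \cK_l$, together with at most $\poly(m,d)$ spurious or duplicate units of small magnitude. The base case is $\cK_1$, which the input layer supplies for free. Each subsequent layer is built from products of $D$ units of the previous layer, for example $\sigma$ chosen so that degree-$D$ products of $\pm1$-valued units can be represented exactly, as through a polynomial activation or a fixed product gadget of width $O(2^D)$. The candidate set for layer $l+1$ is then $\cK_l^{\oplus D}$, of size at most $(m+d+1)^D=\poly(d,m)$ because $D=O(1)$. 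Since each unit takes values in $\{\pm1\}$ and monomials are orthonormal, the product of $D$ units is exactly $\chi_{K_1\oplus\cdots\oplus K_D}$. This is why adaptivity avoids the exponential search: we only ever try combinations of features already certified.

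\textbf{Selection step.} For layer $l+1$, I would run one-pass coordinate descent on the second-layer weights $a_{K}$ attached to each candidate product feature, using the squared loss against the residual $f_*-\hat f_l$. Here $\hat f_l$ is the readout fitted on the features of layers up to $l$. A single coordinate step from zero initialization with step size $1$ sets
\[
a_K=\widehat{\E}\bigl[(f_*(\x)-\hat f_l(\x))\chi_K(\x)\bigr].
\]
By orthogonality, its population value is $\hat f_*(K)$ whenever $K\notin\cK_l$. Hoeffding's inequality plus a union bound over the $\poly(d,m)$ candidates shows that $n=\poly(d,m,1/\kappa)$ fresh samples estimate every such correlation within $\kappa/4$. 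We then keep exactly those candidates with $|a_K|\ge\kappa/2$, which can be enforced by an $\ell_1$/threshold proximal step in the coordinate descent. The surviving set equals $\cK_l^{\oplus D}\cap\cS_f$, so the new feature set is exactly $\cK_{l+1}$. Duplicates across layers are harmless because orthogonal features with the same index are merged in the readout. After $L$ layers the features span $\cS_f\subset\cK_L$. A final coordinate-descent pass on the readout, which is a convex least-squares problem over $\poly$ orthonormal features, then learns $f_*$ to error $\eps$ with $\poly(m,1/\eps)$ samples.

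\textbf{Main obstacle.} The delicate point is making the learned features exact, or at least robust enough that errors do not compound across $L$ layers. If thresholding is done correctly, each layer's features are exactly monomials and there is no compounding. Correctness therefore reduces to the event that every one of the $L$ selection steps succeeds, which holds with probability $1-L\delta$ by a union bound. I would also need the network to forward previously learned features to later layers, via skip or identity units, so that $\cK_l\subset\cK_{l+1}$ is realized. This costs a width of at most $m+d+1$ per layer. Totaling up, the width, number of samples ($L$ batches of $\poly(d,m,1/\kappa)\log(L/\delta)$), and running time are all $\poly(d,L,m,1/\kappa)$.
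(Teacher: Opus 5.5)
There is a genuine gap, and it sits at the point the paper treats as essential: \emph{deduplication within a layer}. Your layer is built from products of at most $D$ units, so its candidates are really indexed by tuples of units, and different tuples can compute the same monomial ($K = K_1\oplus K_2 = K_3\oplus K_4$). Your bound $|\cK_l^{\oplus D}|\le (m+d+1)^D$ instead indexes candidates by distinct \emph{sets} $K$, and nothing in your argument justifies that.

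Under the update you actually wrote, each candidate gets $a_K=\hat{\E}[(f_*-\hat f_l)\chi_K]$ against a residual that stays fixed during the layer. So \emph{every} tuple representing a new $K\in\cS_f$ passes the threshold and becomes a unit. The copies then multiply at the next layer: a set formed from $K_1,K_2$ with multiplicities $\mu_1,\mu_2$ appears $\mu_1\mu_2$ times.

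Here is a concrete instance. Take a dyadic tree in which each bottom $4$-block $\{a,b,c,e\}$ has the two representations $\{a,b\}\oplus\{c,e\}=\{a,c\}\oplus\{b,e\}$, with all four pairs in the spectrum. Then the $2^j$-blocks appear $2^{2^{j-2}}$ times. At depth $\log_2 d$ this is exponential in $d$, even though $m=O(d)$ and $1/\kappa$ is polynomial. So neither the width nor your union bound over candidates stays polynomial.

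Your remark that duplicates are ``merged in the readout'' only concerns the final layer, not this propagation through products. Separately, ``spurious or duplicate units of small magnitude'' is inconsistent with your later claim that units are exactly $\pm1$-valued.

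The missing idea is to make the coordinate descent genuinely sequential within a layer. When a coordinate is processed, the residual must already include all same-layer coordinates fitted before it. Then the first copy of a monomial absorbs $\hat f_*(S)$, and every later copy sees a gradient of the order of the estimation error, so the truncation step sets it to zero. This is how the paper keeps at most one unit per set in $\cS_*$ (the uniqueness clause of its induction hypothesis and its one-pass coordinate-descent lemma), which caps the width at $m$ plus the $d$ inputs.

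Proving this needs more than Hoeffding on the correlations $\hat{\E}[(f_*-\hat f_l)\chi_K]$. You must also control, uniformly over bounded weight vectors, the cross terms $\sum_{K'}a_{K'}\hat{\E}[\chi_K\chi_{K'}]$ coming from already-fitted coordinates; this is the paper's error-term concentration lemma. Alternatively, you could deduplicate by symbolically tracking which subset each unit computes, but that is external bookkeeping rather than learning by coordinate descent.

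With this fix, the rest of your plan goes through as a mild variant of the paper's. You fit residuals with explicit skip connections, whereas the paper refits $f_*$ at every layer and re-selects old features through singleton products. You use $\pm1$-normalized units, which avoids the paper's $\kappa^{-2D}$ factors but requires a gating nonlinearity.
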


\subsection{Related work}

\paragraph{Depth separation and lower bounds.}
Classical depth-separation results show that functions efficiently represented by deep networks may require much larger shallow networks \cite{eldan_power_2016,telgarsky_benefits_2016,daniely_depth_2017,safran_depth-width_2017,venturi_depth_2022,wang2024theoreticalanalysisinductivebiases,safran_depth_2025}. 
These results primarily concern approximation, while the comparatively few algorithmic separations focus on networks with two or three layers \cite{safran_optimization-based_2022,ren_depth_2023,nichani_provable_2023}. 
Our lower bound rules out the approximation of deep quadratic functions by constant-depth, polynomial-width networks.
Combined with our efficient learning guarantee, it yields an algorithmic separation between constant and logarithmic depth. 
Technically, most prior lower bounds rely on oscillation counting or special structural properties of two-layer networks. 
In contrast, our lower bound is based on approximating constant-depth networks by Chebyshev polynomials. 
The depth-$2$ lower bound in \cite{daniely_depth_2017} also uses the failure of low-degree polynomial approximation, but the argument relies heavily on harmonic analysis on the sphere and cannot be easily extended to more than two layers.

\paragraph{Hierarchical learning and upper bounds.}
Our upper bound is related to work showing that neural networks can progressively construct hierarchical features. 
This includes staircase and merged-staircase functions \cite{abbe_staircase_2021,abbe_merged-staircase_2022,abbe_sgd_2023,montanari2026phasetransitionsfeaturelearning}, two-level compositional targets learned by three-layer networks \cite{allen-zhu_what_2019,nichani_provable_2023,wang_learning_2024,fu_learning_2025,dandi_computational_2025,tabanelli_deep_2026}, and guarantees for genuinely deep hierarchies \cite{allen-zhu_backward_2023,daniely_deep_2026,ren_provable_2026,bruna_multiscale_2026}. 
Our hierarchical staircase condition allows each new Fourier feature to be composed from up to $D$ previously constructed features, so feature degree can grow exponentially with depth. 
This function class is much larger than the staircase functions considered in \cite{abbe_staircase_2021} and does not require the number of relevant coordinates to be constant as in \cite{abbe_merged-staircase_2022,abbe_sgd_2023}.
See Appendix~\ref{appendix: examples: staircase} for further discussion.
In addition, unlike the preceding hierarchical-learning results, our learning guarantee is paired with a lower bound showing that constant-depth networks (with potentially more than two layers) cannot learn the subclass of deep quadratic functions.

Our upper bound proof is built on the shallow-to-deep chaining principle (cf.~Informal Principle~2 of \cite{ren_provable_2026}). 
See Section~\ref{sec: upper bound} for more discussion. 
Our proof substantially extends the discussion in Appendix~C of \cite{ren_provable_2026}, which is informal and additionally assumes that features at the same level have disjoint supports.

\subsection{Notation and conventions}

For a vector $\x$, $\norm{\x}_p$ denotes its $\ell_p$-norm. 
When $p = 2$, we will often omit the subscript and write $\norm{\x}$. 
For a matrix $\W$, $\norm{\W}_{\OP}$ denotes its operator norm. 
For $a, b, \delta \in \R$, $a = b \pm \delta$ means $|a - b| \le \delta$. 
We define $[N] = \{1, \dots, N\}$ and $[N]_0 = \{0\} \cup [N]$ for any $N \in \bbN_{>0}$.
In addition, $\binom{[N]}{\le D}$ denotes the collection of subsets of $[N]$ of size at most $D$. 
For any $S \subset [d]$, let $\chi_S: \R^d \to \R$, $\x \mapsto \prod_{i \in S} x_i$ denote the multilinear monomial associated with $S$. 
The functions $(\chi_S)_{S \subset [d]}$ form an orthonormal basis of real-valued Boolean functions with respect to the uniform distribution over $\{\pm 1\}^d$.
We will only use this basic property in this paper. 
For more background on Boolean analysis, see \cite{odonnell_analysis_2014}.
Throughout this paper, the input distribution is always the uniform distribution over $\{\pm 1\}^d$ with $d \gg 1$.

\subsection{Outline}
The rest of this paper is organized as follows. 
We formally state our lower and upper bounds and sketch their proofs in Sections~\ref{sec: lower bound} and \ref{sec: upper bound}, respectively. 
Then in Section~\ref{sec: sep thm}, we instantiate the above bounds on deep quadratic functions and prove our depth-separation theorem.
We conclude in Section~\ref{sec: conclusion}. 
The appendix contains a table of contents, background on Chebyshev polynomials and approximation, complete formal proofs of our results, and more examples and discussion of hierarchical staircase functions.

\section{Lower bound against constant-depth networks}
\label{sec: lower bound}

In this section, we describe our lower bound against constant-depth networks and sketch its proof.
The complete proof can be found in Appendix~\ref{appendix: lower bound proofs}. 

\subsection{Setup and main result}

Let $\x \in \{\pm 1\}^d$ denote the input and set $\x\ps{1} := \x$ and $d_1 := d$. 
Then, we inductively define our learner network as 
\begin{equation}
  \label{eq: lb: network definition}
  \x\ps{l+1}(\x)
  = \bphi_l\left(\W\ps{l} \x\ps{l}(\x) \right), \quad 
  \forall l \in [L],  \quad
  f_L(\x) := \inprod{ \a }{ \x\ps{L+1}(\x) }, 
\end{equation}
where $\W\ps{l} = (\smash{ \w\ps{l}_k })_{k \in [d_{l+1}]}\trans \in \R^{d_{l+1} \times d_l}$ and $\a \in \R^{d_{L+1}}$ are the weights of the network, and $\bphi_l: \R^{d_{l+1}} \to \R^{d_{l+1}}$ are activation functions. 
We assume $\bphi_l$ are applied entrywise in the sense that $\bphi_l(\z) = ( \phi_{l, k}(z_k) )_{k \in [d_{l+1}]}$, where $\phi_{l, k}: \R \to \R$. 
Allowing coordinate-dependent activation functions lets us absorb the bias terms into the activations and use more compact notation.
Also, we use $\bphi_l'(\z) = ( \phi_{l, k}'(z_k) )_{k \in [d_{l+1}]}$ to denote the entrywise derivative of $\bphi_l$. 
We will assume the following about the network. 
\begin{assumption}
  \label{assumption: lower bound assumption}
  Consider the network defined in \eqref{eq: lb: network definition}. 
  Put $d_0 = d$ and $d_{\max} = \max_{l \in [L+1]_0} d_l$. 
  We assume that there are $C_W, C_\phi \ge 1$ and universal constants $c_\phi > 0$ and $m_\phi \ge 0$ such that the following conditions hold:
  \begin{enumerate}[(a)]
    \item $L$ is a universal constant; $d_{\max} \le d^{O(1)}$; and $C_W, C_\phi \le d^{c_0}$ with $c_0 < 2^{-2L-7}/L$.
    \item \label{itm: lb: op norm}
      $\norm{\W\ps{l}}_{\OP} \le C_W$ for all $l \in [L]$ and $\norm{\a} \le 1$.
    \item \label{itm: lb: sigma derivatives bounds}
      For every $l \in [L]$ and $k \in [d_{l+1}]$, $ |\phi_{l, k}(0)| \vee \big\|\phi_{l, k}'\big\|_{L^\infty} \vee \big\| \phi_{l, k}'' \big\|_{L^\infty} \le C_\phi$.
    \item \label{itm: lb: analytic, strip growth}
      For every $l \in [L]$ and $k \in [d_{l+1}]$, $\phi_{l, k}$ can be analytically continued to the strip $\{ z \in \C \,:\, |\Im\,z| < c_\phi \}$ and the continued $\phi_{l, k}$ satisfies the following strip-growth condition:
      \[
        \abs{ \phi_{l, k}(a + i b) } \le C_\phi ( 1 + |a| )^{m_\phi},  \quad 
        a, b \in \R, \, |b| \le c_\phi / 2.
      \]
  \end{enumerate}
\end{assumption}
\begin{remark}
  Condition~\ref{itm: lb: op norm} prevents the norm of intermediate representations from blowing up too quickly. 
  Under standard schemes such as Xavier initialization \cite{glorot_understanding_2010} and He initialization \cite{he_delving_2015}, this condition holds with $C_W = O(1)$ at initialization with high probability when adjacent layer widths are of the same order. 
  This condition is also compatible with the $\mu\mrm{P}$ and feature learning theory \cite{yang_tensor_2021,yang_spectral_2024}.
  
  Condition~\ref{itm: lb: analytic, strip growth} is used to show that $\phi$ can be approximated uniformly by Chebyshev polynomials on finite intervals (cf.~Lemma~\ref{lemma: chebyshev approx: strip}). 
  We verify this condition for common smooth activation functions, such as $\tanh$, sigmoid, softplus, and GELU (cf.~Lemma~\ref{lemma: chebyshev approx: common activations}).
\end{remark}

As noted in the introduction, our lower bound is based on polynomial approximation of the network.
The following is our main polynomial approximation lemma, which may be of independent interest. 
\begin{restatable}{lemma}{LBApproxFwithPoly}
  \label{lemma: lb: approx f with poly}
  Suppose that Assumption~\ref{assumption: lower bound assumption} holds.
  Then, for any $\eps_* \ge d^{-O(1)}$, there exists a polynomial $\tilde{f}_L: \{\pm 1\}^d \to \R$ such that $\norm{ f_L - \tilde{f}_L }_{L^2} \le \eps_*$ and 
  \[
    \deg \tilde{f}_L 
    \lesssim 
      \left( 
        (C_\phi C_W^2)^{2^{L+3}}
        (\log d)^{2L+2}
      \right)^{L(2^L+1)}
    \ll d^{1/2}.
  \]
\end{restatable}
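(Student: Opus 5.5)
The plan is to replace the activations one layer at a time by Chebyshev polynomials, using the high-probability concentration of pre-activations to choose the approximation interval. The key point is that a Lipschitz function of a uniform Boolean vector concentrates, so most pre-activations live in an interval of length $\mathrm{polylog}(d)$ around their mean. Polynomial approximation therefore only needs to hold on a short interval, and the degree stays polylogarithmic. The rare inputs outside that interval are handled in $L^2$ by the strip-growth condition together with Cauchy--Schwarz.

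\textbf{Step 1: concentration of the pre-activations.} By Assumption~\ref{assumption: lower bound assumption}\ref{itm: lb: op norm}--\ref{itm: lb: sigma derivatives bounds}, each coordinate map $\x \mapsto \inprod{\w\ps{l}_k}{\x\ps{l}(\x)}$ is Lipschitz in $\x$ with respect to $\ell_2$. Its constant is at most $(C_\phi C_W)^{l}$, since the Lipschitz constants simply multiply through the layers. Bounded-differences (McDiarmid) or Talagrand's convex concentration on the hypercube then gives deviation at most $(C_\phi C_W)^l\sqrt{\log d}$ from the median, except with probability $d^{-K}$ for any fixed $K$. Call the centre $\mu_{l,k}$ and set $I_{l,k} = [\mu_{l,k} - R_l, \mu_{l,k} + R_l]$ with $R_l \asymp (C_\phi C_W)^l \sqrt{\log d}$. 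After a union bound over all $d^{O(1)}$ neurons, a good event $G$ of probability $1 - d^{-K}$ has every pre-activation inside its interval. A crude bound of $d^{O(1)}$ on all pre-activations holds everywhere, via operator norms and the $C_\phi$ growth bound.

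\textbf{Step 2: Chebyshev approximation on each interval.} Lemma~\ref{lemma: chebyshev approx: strip} gives a polynomial $p_{l,k}$ of degree $n \approx (R_l/c_\phi)\log(C_\phi R_l/\eta)$ with $\sup_{I_{l,k}} |\phi_{l,k} - p_{l,k}| \le \eta$. This uses analyticity on the strip of width $c_\phi$ around the shifted interval: the Bernstein-ellipse parameter is $1 + c_\phi/R_l$. The polynomial network $\tilde f$ is obtained by substituting $p_{l,k}$ for $\phi_{l,k}$. Its degree is at most $\prod_l n_l$, which is $(\mathrm{poly}(C_\phi C_W)\,\mathrm{polylog}\, d)^{O(L)}$; the stated bound is a generous version of this.

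\textbf{Step 3: error propagation on $G$, and control off $G$.} On $G$ we run a perturbation induction on the representations. One subtlety is that the perturbed pre-activations must also lie in the intervals, or at least in slightly enlarged ones. To allow for this, the Chebyshev approximation is taken on $2I_{l,k}$, so a perturbation of size at most $R_l$ is harmless. The error then grows by a factor of $C_W(C_\phi+1)\sqrt{d_{\max}}$ per layer, which gives total error $\eta\, d^{O(L)}$ on $G$. Choosing $\eta = d^{-O(L)}\eps_*$ costs only a $\log d$ factor in degree. Off $G$, we bound $\E[(f_L - \tilde f)^2 \indi_{G^c}] \le \P(G^c)^{1/2}\,\E[(f_L-\tilde f)^4]^{1/2}$. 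Here $f_L$ is polynomially bounded everywhere. For $\tilde f$, Chebyshev approximants of strip-analytic functions with the growth condition can be controlled polynomially on moderately enlarged intervals; outside them, $|p(x)|$ grows like $(|x|/R)^{n}$. Since every pre-activation is at most $d^{O(1)}$, $\tilde f$ is bounded by $d^{O(n)}$ on the whole cube, and this is too large for the simple bound. I expect this to be the main obstacle. The first fix I would try is to exploit the fact that the input is Boolean. Because $\tilde f$ is multilinearizable on $\{\pm1\}^d$, the comparison can instead be done in $L^2$ via the tails of the pre-activations. Their sub-Gaussian tails beat the polynomial growth: $\P(|z-\mu|>tR) \le e^{-t^2}$ while $p$ grows only like $t^{n}$, so $\E[\tilde f^4]$ is bounded by $n^{O(n)}$. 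This is then absorbed by taking $K$ larger than a polylogarithmic multiple, i.e. by taking $R_l$ an extra $\sqrt{n\log n}$ larger. That adds only polylogarithmic factors in the degree, which the stated exponents $2^{L+3}$ and $(\log d)^{2L+2}$ appear to accommodate. Finally, the degree bound is at most $d^{O(2^{L}L c_0)}\mathrm{polylog}\,d \ll d^{1/2}$, by the choice $c_0 < 2^{-2L-7}/L$.
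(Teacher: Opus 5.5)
Your skeleton (Chebyshev replacement on intervals centred at the preactivation means, a good event, Cauchy--Schwarz off it) is right. However, the step you flag as the main obstacle is not resolved by your fix, and that step is exactly where the paper's key idea is needed. With one global good event, $\P(G^c)^{1/2}$ must beat the fourth moment of the whole composed polynomial. That polynomial has degree $N=\prod_l n_l$, and its moments are only controlled as $d^{O(N)}$, or at best $N^{O(N)}$. Your single-layer heuristic ``$e^{-t^2}$ versus $t^{n}$'' does not transfer. Beyond the first layer, the approximate preactivations $\inprod{\w\ps{l}_k}{\tilde\x\ps{l}}$ are polynomials in earlier approximants, and these are not subgaussian off the intervals. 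Moreover, $\P(G^c)\approx e^{-cR^2}$ while $n_l\gtrsim R_l$, so enlarging the radii enlarges $N$ multiplicatively. Hence ``taking $R_l$ an extra $\sqrt{n\log n}$ larger'' is circular. With equal radii you would need $R^2\gtrsim R^L\,\mathrm{polylog}\,d$. With unequal radii, the smallest one would need $R_{\min}^2\gtrsim \prod_l R_l\cdot\mathrm{polylog}\,d$. Both are impossible for $L\ge 2$.

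The missing idea is the first-failure decomposition $\cB_l=\cG_{<l}\cap\cG_l^c$, with $\cG_l$ defined through the \emph{approximate} preactivations. On $\cG_{<l}$ one has $\norm{\tilde\x\ps{l}-\x\ps{l}}\le 1$, so $\tilde\x\ps{l}$ is conditionally subgaussian. Consequently $\norm{\tilde f_L}_{L^4|\cG_{<l}}\le d^{O(p_l\cdots p_L)}$, which involves only the remaining degree. The exponent $p_l\cdots p_L\approx R_l\,(p_{l+1}\cdots p_L)\,\mathrm{polylog}\,d$ is linear in $R_l$. By contrast, $\P[\cB_l]\le 2d_{\max}\exp\bigl(-R_l^2/(8C_W^2\sigma_X^2)\bigr)$ has an exponent quadratic in $R_l$. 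So the constraints can be solved backward, $R_l\gtrsim C_W^2\sigma_X^2\,\mathrm{polylog}(d)\,R_{l+1}\cdots R_L$. The radii therefore grow toward the input, the opposite of your $R_l\asymp(C_\phi C_W)^l\sqrt{\log d}$, and their exponents double per layer. This recursion is the source of the $2^L$-type exponents in the statement. They are not slack over a $(\mathrm{poly}(C_\phi C_W)\,\mathrm{polylog}\,d)^{O(L)}$ bound, and your argument does not actually deliver that smaller bound.

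There is a second gap in Step 1. An $\ell_2$-Lipschitz function of a uniform point of $\{\pm1\}^d$ need not concentrate at its Lipschitz scale. For example, the Euclidean distance from $\x$ to $\{\y\in\{\pm1\}^d:\sum_i y_i\le 0\}$ is $1$-Lipschitz but fluctuates at scale $d^{1/4}$. Talagrand's inequality needs convexity, which the network lacks. McDiarmid with worst-case flip differences $2(C_\phi C_W)^l$ only gives scale $\sqrt d$. The paper instead uses the adaptive bounded-difference inequality. It bounds $\sup_\x\sum_k\bigl(g(\x)-g(\x_{\oplus k})\bigr)^2$ by $\norm{\cA_l}_{\OP}^2$, where $\cA_l$ is the matrix of flip differences, and controls this via $\norm{\cA_l}_{\OP}\le C_\phi C_W\norm{\cA_{l-1}}_{\OP}+\tfrac12 C_\phi C_W^2\norm{\cA_{l-1}}_{\OP}^2$. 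That recursion relies on the bound on $\phi''$, which your proposal never uses, through the estimate $\norm{\bR\odot\M^{\odot 2}}_{\OP}\le C_\phi\norm{\M}_{\OP}^2$. It yields the subgaussian scale $\sigma_X=(4C_\phi C_W^2)^{2^L}$ rather than $(C_\phi C_W)^l$. This is where the factor $(C_\phi C_W^2)^{2^{L+3}}$ in the stated degree comes from.
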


The following theorem is a direct corollary of Lemma~\ref{lemma: lb: approx f with poly}.

\begin{theorem}[Lower bound against constant-depth networks]
  \label{thm: lower bound against constant-depth networks}
  Let $f_*: \{\pm 1\}^d \to \R$ be a Boolean function whose Fourier coefficients satisfy $\sum_{|S| \ge d^{1/2}} \hat{f}_*^2(S) \ge 1/3$. 
  Then, for any network $f$ satisfying Assumption~\ref{assumption: lower bound assumption}, we have $\norm{ f_L - f_* }_{L^2(\Unif(\{\pm 1\}^d))} \ge 1/4$ for all sufficiently large $d$.
\end{theorem}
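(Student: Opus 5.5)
We derive the theorem directly from Lemma~\ref{lemma: lb: approx f with poly}, using the orthogonality of the characters $(\chi_S)_{S \subset [d]}$ under $\Unif(\{\pm 1\}^d)$.

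First, apply Lemma~\ref{lemma: lb: approx f with poly} with $\eps_* = 1/100$ (which is certainly $\ge d^{-O(1)}$). This gives a polynomial $\tilde f_L$ with $\|f_L - \tilde f_L\|_{L^2} \le 1/100$ and $\deg \tilde f_L \le k$, where $k \lesssim \big((C_\phi C_W^2)^{2^{L+3}} (\log d)^{2L+2}\big)^{L(2^L+1)}$.

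Next, check that $k < d^{1/2}$ for large $d$. Assumption~\ref{assumption: lower bound assumption}(a) gives $C_\phi C_W^2 \le d^{3c_0}$ and $c_0 < 2^{-2L-7}/L$. The exponent of $d$ in the bound for $k$ is then at most
\[
  3 c_0 \, 2^{L+3} L (2^L+1) \le 3 \cdot 2^{-2L-7} \cdot 2^{L+3} (2^L+1) = \tfrac{3}{16}\,(1+2^{-L}) \le \tfrac{3}{8} < \tfrac12 .
\]
The factor $(\log d)^{O(1)}$ and the implicit constant are only polylogarithmic, since $L$ is a universal constant. Hence $k \le d^{3/8}\polylog(d) < d^{1/2}$ for $d$ large. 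This is the only place where care is needed, and it is essentially the bookkeeping already recorded as ``$\ll d^{1/2}$'' in the lemma.

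Now expand in the Fourier basis. Since $\tilde f_L$ is a polynomial on $\{\pm1\}^d$, it is multilinear there. Because $\deg \tilde f_L < d^{1/2}$, we have $\hat{\tilde f}_L(S) = 0$ whenever $|S| \ge d^{1/2}$. By Parseval,
\[
  \|\tilde f_L - f_*\|_{L^2}^2 = \sum_{S} \big(\hat{\tilde f}_L(S) - \hat f_*(S)\big)^2 \ge \sum_{|S| \ge d^{1/2}} \hat f_*^2(S) \ge \tfrac13 .
\]
So $\|\tilde f_L - f_*\|_{L^2} \ge 1/\sqrt3$. The triangle inequality then gives
\[
  \|f_L - f_*\|_{L^2} \ge \tfrac{1}{\sqrt3} - \tfrac1{100} > \tfrac14 ,
\]
which proves the theorem for all sufficiently large $d$.
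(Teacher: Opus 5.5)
Your proposal is correct and matches the paper's proof: approximate $f_L$ by a low-degree polynomial via Lemma~\ref{lemma: lb: approx f with poly}, then use orthogonality of the characters (Parseval) and the triangle inequality. The only differences are minor. You take $\eps_* = 1/100$ where the paper takes $1/d$, and you explicitly check that the degree exponent stays below $1/2$, which the paper records only as ``$\ll d^{1/2}$'' in the lemma.
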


\subsection{Proof sketch}

For ease of presentation, we assume in this subsection that $C_W$ and $C_\phi$ are also universal constants.
The general case is rigorously handled in Appendix~\ref{appendix: lower bound proofs}.
Here, we call a polynomial low degree if its degree is at most $\poly\log d$, and we use high degree to mean degree at least $d^{1/2}$.
As we have mentioned earlier, it suffices to show that $f_L$ can be approximated by a low-degree polynomial. 

To this end, we replace each non-polynomial activation $\bphi_l$ with its polynomial approximation. 
For each $l \in [L]$, put $\bar{\z}\ps{l} := \W\ps{l}\E\x\ps{l} \in \R^{d_{l+1}}$, and let $R_l \ge 1, \eps_\phi > 0$ be parameters to be chosen later. 
For every $k \in [d_{l+1}]$, let $\tilde\phi_{l, k}$ be the degree-$p_l$ Chebyshev approximation to $\phi_{l, k}$ on the interval $I\ps{l}_k := [ \bar{z}\ps{l}_k - R_l,  \bar{z}\ps{l}_k + R_l ]$, where the degree $p_l$ is chosen so that the $\norm{\cdot}_{L^\infty}$-difference between $\phi_{l, k}$ and $\tilde{\phi}_{l, k}$ over $I\ps{l}_k$ is bounded by $\eps_\phi$. 
See Appendix~\ref{sec: chebyshev} for background on Chebyshev polynomials and approximation. 
Then, define 
\begin{equation}
  \label{eq: lb: tilde x}
  \tilde\x\ps{l+1}(\x)
  = \tilde\bphi_l\left(\W\ps{l} \tilde\x\ps{l}(\x) \right), \quad 
  \forall l \in [L],  \quad
  \tilde{f}_L(\x) := \inprod{ \a }{ \tilde\x\ps{L+1}(\x) }, 
\end{equation}
where $\tilde\x\ps{1} = \x\ps{1} = \x$. 
It is clear that $\tilde{f}_L$ is a polynomial of degree $p_1 \cdots p_L$. 
Our goal is to show that we can choose $(R_l)_l, \eps_\phi$ properly, so that $\norm{ f_L - \tilde{f}_L }_{L^2}$ is small and the total degree $p_1 \cdots p_L$ is much smaller than $d^{1/2}$.

Consider an intermediate layer $l \in [L]$ and fix $q \ge 1$. 
For the approximation error at layer $l$, we have 
\begin{align*}
  \norm{ \x\ps{l+1} - \tilde\x\ps{l+1}  }_{L^q}
  &= \norm{ \bphi_l\left( \W\ps{l} \x\ps{l} \right) - \tilde\bphi_l\left( \W\ps{l} \tilde\x\ps{l} \right)  }_{L^q} \\
  &\le \norm{ 
      \bphi_l\left( \W\ps{l} \x\ps{l} \right) - \bphi_l\left( \W\ps{l} \tilde\x\ps{l} \right)  
    }_{L^q}
  + \norm{ 
    \bphi_l\left( \W\ps{l} \tilde\x\ps{l} \right) - \tilde\bphi_l\left( \W\ps{l} \tilde\x\ps{l} \right)  
  }_{L^q}. 
\end{align*}
By the Lipschitzness of $\bphi_l$ and the operator-norm bound on $\W\ps{l}$, we can bound the first error term using $\norm{\x\ps{l} - \tilde\x\ps{l}}_{L^q}$. 
Hence, to obtain a recurrence relation that controls the approximation error, it suffices to analyze the second error term. 

One can show that, for every $l \in [L]$, the exact intermediate representation $\x\ps{l}$ is subgaussian after centering (cf.~Lemma~\ref{lemma: lb: tail bounds for the intermediate representations}). Consequently, the corresponding preactivations lie in intervals of length at most $\poly\log(d)$ with high probability.
We seek an analogous statement for the approximate preactivations $(\w\ps{l}_k \cdot \tilde{\x}\ps{l})_k$.
Because $\tilde\bphi_l$ approximates $\bphi_l$ over the intervals $(I\ps{l}_k)_k$, a natural strategy is to choose intervals of length $\poly\log d$, show that the approximate preactivations lie in them with high probability, and control the contribution from the complement.

Unfortunately, the above strategy cannot be implemented in a straightforward global way. 
To approximate $\bphi_l$ on larger intervals using polynomials, generally we need polynomials of higher degrees (cf.~Appendix~\ref{sec: chebyshev} and Lemma~\ref{lemma: chebyshev approx}). 
Consequently, the contribution outside the intervals can also grow. 

To see why this prevents a direct global argument, suppose that the degree and interval length at every layer are $p$ and $R$, respectively. 
By Lemma~\ref{lemma: chebyshev approx}, this requires choosing $p \gtrsim R \log R \ge R$. 
Hence, the total degree will be at least $R^L$ and the $R^L$-th moment of a subgaussian variable can be as large as $R^{L R^L}$. 
On the other hand, the bound on the failure probability is at best $e^{-\Theta(R^2)}$. 
Hence, in order for $e^{-\Theta(R^2)} R^{L R^L} \ll 1$ to hold, we would need $L R^L \ll R^2$, which is impossible when $L \ge 2$. 

To fix the above issue, we will use a layerwise argument. 
Let $\cG_l$ denote the event that all preactivations at layer $l$ lie in the corresponding $I\ps{l}_k$.
Then, let $\cB_l$ denote the event that the first failure happens at layer $l$. 
That is, $\cB_l = \cG_1 \cap \cdots \cap \cG_{l-1} \cap \cG_l^c$.
Now, consider layer $l$ and event $\cB_l$. 
Since the preactivations in earlier layers all lie in their corresponding approximation interval, one can show that the layer-$l$ inputs are subgaussian (cf.~Lemma~\ref{lemma: lb: bad event probability, cond subg}).
As a function of the layer-$l$ input, $\tilde{f}_L$ is a polynomial of degree $D_l := \prod_{k=l}^L p_k = p_l D_{l+1}$. 
Hence, typical size of the final output is of order at most $d^{D_l} = d^{p_l D_{l+1}} \approx d^{R_l (\log R_l) D_{l+1}}$.
Meanwhile, the failure probability at this layer is roughly $e^{- \Theta(R_l^2)}$. 
Therefore, the constraint at this layer is $R_l^2 \gg R_l (\log R_l)(\log d) D_{l+1}$. 
Hence, when $R_l \le d$, it suffices to require $R_l \gg (\log d)^2 D_{l+1}$. 
Note that $D_{L+1} = 1$ and $D_{l+1}$ does not depend on $R_l$. 
Hence, we can solve the above constraints backward, from layer $L$ to layer $1$. 
At each layer, the degree roughly doubles. 
Because the number of layers is constant, both the degree and the interval length remain polylogarithmic.

\section{Upper bound for logarithmic-depth networks}
\label{sec: upper bound}

In this section, we first describe our learner architecture and training algorithm. 
Then, we formally state our upper bound for logarithmic-depth networks and sketch the proof. 
The complete proof can be found in Appendix~\ref{appendix: upper bound proofs}.

Throughout this section, $f_*: \{\pm 1\}^d \to \R$ will denote our target function and $\cS_*$ its Fourier spectrum. 
We assume $f_* \in \HS_{L, D, m, \kappa}$. 
That is, $f_*$ is a hierarchical staircase function with depth $L$, local degree $D$, width $m$, and signal strength $\kappa$. 
In addition, we assume $\norm{f_*}_{L^2} \le 1$.

Recall from the introduction that a hierarchical staircase function $f_*$ is defined to have a Fourier spectrum $\cS_*$ that can be recovered through the recurrence relation:
\[
  \cK_1 = \{ \varnothing, \{1\}, \dots, \{d\} \}, \quad 
  \cK_{l+1} 
  = \cK_l \cup \left(  \cK_l^{\oplus D} \cap \cS_*  \right). 
\]
This definition suggests the following conceptual algorithm.
At step $l$, we first form all products of at most $D$ learned features in $\cK_l$.
Instead of checking all high-degree features, we test only these candidates for membership in $\cS_*$ and discard those outside $\cS_*$.
We then add the remaining features to $\cK_l$.
The use of sets ensures we keep exactly one copy of each feature. 
The above pruning and deduplication operations ensure that at each step, we only check and store polynomially many features. 

In the following (and Appendix~\ref{appendix: upper bound proofs}), we show that the above conceptual algorithm can be implemented by training a deep network using layerwise coordinate descent.

\subsection{Setup and main result}

Our learner network is an $L$-layer network. 
For $l \in [L]$, let $\y\ps{l} \in \R^m$ denote the output of the $l$-th layer of the network.\footnote{
  Our analysis is still valid when the intermediate width of the learner is strictly larger than $m$. 
  We choose the intermediate width to be exactly $m$ to avoid introducing an unnecessary symbol for it. 
} 
In addition, we set $\y\ps{0} = \mbf{0} \in \R^m$. 
Let $M = \sum_{k=0}^{D} \binom{d + m}{k}$ denote the number of subsets of $[d + m]$ of size at most $D$. 
Fix an arbitrary ordering of $\binom{[d+m]}{\le D}$, so that we can identify $[M]$ with $\binom{[d+m]}{\le D}$ and index the entries of vectors in $\R^M$ using subsets of $[d+m]$.
Define the degree-$D$ polynomial feature mapping $\bPhi$ to be
\begin{equation}
  \label{eq: poly feature mapping}
  \bPhi: \R^{d + m} \to \R^M, \quad 
  \x' \mapsto \left( \chi_H(\x') \right)_{H \subset [d + m], |H| \le D}, 
\end{equation}
where $\chi_H(\x') := \prod_{i \in H} x_i'$ is the multilinear monomial associated with the set $H$. 
Let $\odot$ and $\circ$ denote entrywise multiplication and vector concatenation, respectively. 
Then, we define our learner network inductively as 
\begin{equation}
  \label{eq: learner network}
  \begin{aligned}
    & \text{$l$-th layer input:}
    && \x\ps{l}
    := \x \circ \y\ps{l-1} \in \R^{d + m}, \quad \\
    & \text{$l$-th layer output:}
    && \y\ps{l}
    := \Select_{\w\ps{l}}\left( \w\ps{l} \odot \bPhi\big( \x\ps{l} \big) \right) 
    =: \Select_{\w\ps{l}}\left( \w\ps{l} \odot \z\ps{l} \right) \in \R^m, \\
    & \text{model output:}
    && f(\x)
    := \inprod{\w\ps{L}}{\z\ps{L}} \in \R,
  \end{aligned}
\end{equation}
where $l \in [L]$, $(\w\ps{l})_{l \in [L]}$ are the trainable weights and $\Select: \R^M \to \R^m$ is defined as 
\[
  \Select_{\w}(\w \odot \z) := ( z_j' )_{ j \in [m] } 
  \quad\text{where}\quad 
  \z' := ( w_i z_i )_{i \in [M] \,:\, w_i \ne 0} \circ (0, 0, \dots)
  \in \R^{\bbN_+}.
\]
In NumPy/PyTorch-style notation, $\Select$ can be defined more compactly as 
\[
  \Select_{\w}(\w \odot \z) 
  := \texttt{PadZero}\left( (\w \odot \z)[ \w \ne 0 ], \texttt{length}=m \right)[:m].
\]
Note that after training, if $w\ps{l}_i = 0$, then $w\ps{l}_i z\ps{l}_i$ is identically zero. 
In other words, $\Select$ keeps the first $m$ nonzero coordinates.  
In addition, we define $f\ps{l}(\x) := \inprod{\w\ps{l}}{\z\ps{l}}$ for all $l \in [L]$. 
This can be interpreted as the scalar readout from the $l$-th layer. 
We also write $\x\ps{L+1} := \x \circ \x\ps{L}$.

\begin{algorithm}[t]
  \caption{Training algorithm (layerwise one-pass coordinate descent)}
  \label{alg: training alg}
  \begin{algorithmic}[1]
    \State \tnbf{Algorithm parameters:} Threshold $\lambda_w > 0$, number of samples per layer $N$, step size~$\eta$, number of steps per coordinate $T$.
    \State \tnbf{Initialization:} Set $\w\ps{l} = 0$ for all $l \in [L]$. 
    \For{$l \in [L]$} \Comment{layerwise train each layer} \label{line: layerwise for loop}
      \State Gather $N$ fresh i.i.d.~samples and define $\Loss\ps{l}$ as in \eqref{eq: training loss}.
      \For{$H \in [M]$} \Comment{one-pass coordinate descent} \label{line: coordinate descent for loop}
        \State 
          Run coordinate descent \eqref{eq: coordinate descent} with respect to $w\ps{l}_H$ for $T$ steps with step size $\eta$. 
          \label{line: coordinate descent}
        \State $w\ps{l}_H \gets 0$ if $|w\ps{l}_H| \le \lambda_w$. 
          \label{line: truncation}
      \EndFor
    \EndFor
    \State \Return $(\w\ps{l})_{l \in [L]}$.
  \end{algorithmic}
\end{algorithm}

Now, we describe our training algorithm. 
The pseudocode is given in Algorithm~\ref{alg: training alg}. 
In short, we layerwise train each layer of \eqref{eq: learner network} using one-pass coordinate descent.
Namely, the training procedure consists of $L$ stages. 
In stage~$l \in [L]$, we sample $N$ independent input-output pairs.
For notational simplicity, let $\hat{\E}\ps{l}$ denote the empirical expectation over this dataset. 
We choose the loss for the $l$-th layer to be
\begin{equation}
  \label{eq: training loss}
  \begin{aligned}
    \Loss\ps{l}(\w\ps{l})
    &:= \frac{1}{2} \hat{\E}\ps{l} \left( f_*(\x) - f\ps{l}(\x)  \right)^2 
    = \frac{1}{2} \hat{\E}\ps{l} \left( f_*(\x) - \inprod{\w\ps{l}}{\z\ps{l}}  \right)^2. 
  \end{aligned}
\end{equation}
We train the $l$th layer weight $\w\ps{l}$ by sequentially minimizing $\Loss\ps{l}$ with respect to each coordinate using gradient descent. 
That is, we initialize $\w\ps{l}$ to be $0$, and sequentially set each of its coordinates $H \in \binom{[d+m]}{\le D} \cong [M]$ by minimizing the one-dimensional (quadratic) function $w\ps{l}_H \mapsto \Loss\ps{l}(\w\ps{l})$ using gradient/coordinate descent:
\begin{equation}
  \label{eq: coordinate descent}
  w\ps{l}_H(t+1) 
  = w\ps{l}_H(t) 
    - \eta \partial_{w\ps{l}_H} \Loss\ps{l}\left( \w\ps{l}_{-H} + w\ps{l}_H(t) \e_H \right), \quad 
  t \in [T], 
\end{equation}
where $\eta > 0$ is the step size, $T$ is the number of steps, and $\w_{-H}$ denotes the vector obtained by setting the $H$-th entry of $\w$ to be $0$.
Note that $\w_{-H}\ps{l}$ does not change when we update~$w\ps{l}_H$.

\begin{theorem}
  \label{thm: upper bound}
  Let $\eps_*, \delta_{\P} \in (0, 0.1)$ denote the target accuracy and failure probability, respectively.
  Let $f_* \in \HS_{L, D, m, \kappa}$ be our target function, where $D$ is a universal constant.
  Suppose that $\norm{f_*}_{L^2} \le 1$ and choose the parameters of Algorithm~\ref{alg: training alg} to be 
  \begin{gather*}
    \lambda_w = 2^{-D-1} \kappa, \quad 
    N 
    \gtrsim \frac{
      M^2 \log(L  M / \delta_{\P} )
    }{
      \kappa^{2D+2} 
      \left( \eps_*^2 / m \wedge \kappa^{2D+2} \right) 
    } , \quad 
    \eta
    = 2^{-2D},  \quad 
    T 
    \gtrsim \frac{
      \log\left(m/ \eps_* \vee 1/\kappa \right)
    }{ 
      \kappa^{2D} \eta
    },
  \end{gather*}
  for every $l \in [L]$, then with probability at least $1 - \delta_{\P}$, Algorithm~\ref{alg: training alg} will output a network $f$ satisfying $\norm{ f_* - f }_{L^2} \le \eps_*$.
\end{theorem}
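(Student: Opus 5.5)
The plan is an induction over layers with one structural invariant: \emph{every coordinate of every intermediate representation is exactly a scalar multiple of a single character}. Suppose each coordinate of $\y\ps{l-1}$ has the form $c_j \chi_{S_j}$ with $|c_j| \in [\kappa/2, 2]$, and that $\{S_j\}$ is a set of distinct members of $\cS_*$ containing $\cS_* \cap \cK_{l-1}$. The coordinates of $\x$ are the characters $\chi_{\{i\}}$. Then every feature $z\ps{l}_H = \chi_H(\x\ps{l})$ is exactly of the form $c_H \chi_{S(H)}$. Here $S(H)$ is a symmetric difference of at most $D$ sets in $\cK_1 \cup \{S_j\}$, and $|c_H| \in [(\kappa/2)^D, 2^D]$. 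Since each output coordinate is $w\ps{l}_H z\ps{l}_H$ with $w\ps{l}_H$ a scalar, the invariant propagates exactly: the weights carry estimation noise, but the features themselves stay exact. The consequences are:
\begin{itemize}
  \item $\{S(H)\}_H \supseteq \cK_l^{\oplus D}$ restricted to what is needed, and in particular $\supseteq \cS_* \cap \cK_l$.
  \item Because characters are orthonormal, the population loss $\Loss\ps{l}$ is a quadratic whose Gram matrix $\E z_H z_{H'} = c_H c_{H'} \indi\{S(H) = S(H')\}$ is block-diagonal by monomial.
\end{itemize}

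Next I would analyze the one-pass coordinate descent in the population, treating it as sequential projection. When coordinate $H$ is processed, the residual is $r = f_* - \sum_{H' \text{ earlier}} w_{H'} z_{H'}$, and the one-dimensional quadratic has minimizer $\langle r, z_H\rangle / c_H^2$.
\begin{itemize}
  \item If $S(H) \notin \cS_*$, this minimizer is $0$.
  \item If $S(H) = S \in \cS_*$ and no earlier coordinate had monomial $S$ surviving, the minimizer is $\hat f_*(S)/c_H$. Its magnitude is at least $\kappa/2^D > \lambda_w$, so it survives truncation.
  \item If an earlier coordinate already fit $S$, the residual's $S$-coefficient is about $0$, so the new weight is below $\lambda_w$ and is truncated.
\end{itemize}
This gives deduplication and pruning, so the surviving coordinates are in bijection with $\cS_* \cap \cK_{l+1}$, which has size at most $m$, and $\Select$ loses nothing. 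The output coordinates are then $\approx \hat f_*(S)\chi_S$, whose coefficients have magnitude in $[\kappa/2, 2]$, closing the induction.

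GD on the one-dimensional quadratic has curvature $c_H^2 \in [\kappa^{2D}/4^D, 4^D]$. With $\eta = 2^{-2D}$ it is stable and contracts at rate $1 - \eta c_H^2$, so $T \gtrsim \log(m/\eps_* \vee 1/\kappa)/(\kappa^{2D}\eta)$ reaches the minimizer to accuracy $\ll \kappa \wedge \eps_*/\sqrt m$. At the last layer, $\cS_* \subset \cK_L$ is covered by the features, so the readout fits every $\hat f_*(S)$, and the error is $\le \sqrt{m}\cdot(\eps_*/\sqrt m) = \eps_*$.

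The main obstacle is replacing population quantities by the empirical $\hat\E\ps{l}$. Off-block Gram entries $\hat\E z_H z_{H'}$ for $S(H) \ne S(H')$, and the correlations $\hat\E f_* z_H$, are nonzero at order $N^{-1/2}$, and they couple the sequential updates. My plan is to apply Hoeffding with a union bound over all $LM^2$ pairs, using $|z_H| \le 2^D$ and $|f_*| \le \sqrt m$ (or a direct bound), to get every empirical inner product within $\epsilon_N \lesssim \sqrt{\log(LM/\delta_{\P})/N}$ of its mean. Each empirical minimizer is then a ratio: the numerator is perturbed by at most $\epsilon_N \cdot \sum_{H'}|w_{H'}|$, and the weights sum to $O(M)$ at worst, while the denominator is at least $\kappa^{2D}$. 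This explains the $M^2/\kappa^{2D+2}$ factor in $N$. Note the fresh samples per layer are needed so that the features at layer $l$ are independent of layer $l$'s data. Choosing $N$ as stated forces all weight errors below $\kappa^{D+1} \wedge \eps_*/\sqrt m$. That is small enough to keep weights on the correct side of $\lambda_w$, which preserves the invariant, and to get the final $L^2$ bound, with failure probability $\le \delta_{\P}$ over all layers.
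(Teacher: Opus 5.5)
Your proposal follows the paper's proof essentially step for step. The shared ingredients are:
\begin{itemize}
\item the exact-character invariant on each layer's input;
\item the per-monomial (block-diagonal) structure of $\Loss\ps{l}$ plus an empirical cross term;
\item Hoeffding with a union bound over feature pairs, which are fixed in advance because each layer draws fresh samples;
\item sequential fit-then-truncate for pruning and deduplication;
\item a more accurate last layer combined with Parseval.
\end{itemize}

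The one step that does not hold as written is the sample-size bookkeeping. A retained weight is $w_G \approx \hat f_*(S_G)/c_G$ with $|c_G|$ possibly as small as $(\kappa/2)^D$. So individual weights can be of order $\kappa^{-D}$, and ``the weights sum to $O(M)$'' is false. You also divide by $c_H^2 \ge (\kappa/2)^{2D}$ a numerator bound that has already discarded the factor $c_H$. Together these lose extra powers of $\kappa^{-D}$ that the stated $N$ does not cover in general. With your counting the requirement is of order $M^2/\kappa^{6D+2}$ up to logarithms, versus the stated $M^2/\kappa^{4D+4}$ when $\eps_*^2/m \ge \kappa^{2D+2}$.

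The fix is the paper's normalization. Since $\hat\E\ps{l}[r z_H] = c_H\,\hat\E\ps{l}[r\chi_{S_H}]$ and $\hat\E\ps{l}[z_H^2] = c_H^2$ exactly, the empirical minimizer for $w\ps{l}_H$ equals its population value (given the other, fixed weights) plus $\cE\ps{l}_H/c_H$. Here $\cE\ps{l}_H$ involves only character correlations, so you divide by $c_H$ once, not twice. You can bound $|\cE\ps{l}_H|$ in either of two ways:
\begin{itemize}
\item via $\|\w\ps{l}\|_\infty \le 2^{D+1}\kappa^{-D}$, as the paper does;
\item more sharply, via $|w_G c_G| \le 1.2$ for retained coordinates.
\end{itemize}
Either way, requiring $|\cE\ps{l}_H| \lesssim \kappa^{D+1}$ (and $\lesssim \eps_*/\sqrt m$ at the last layer) fits within the stated $N$ and $T$.
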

\begin{remark}
  Note that each layer of our learner network \eqref{eq: learner network} is at most a degree-$D$ polynomial. 
  Hence, when the learner depth is constant, \eqref{eq: learner network} cannot learn the high-degree part of the target. 
  In addition, we show in Appendix~\ref{appendix: approx ub net with lb net} that the trained network can be efficiently compiled into a network matching the architecture of our lower bound. 
\end{remark}

\subsection{Proof sketch}

Suppose that, for every $S \in \cS_*$, there exist an index $k$ and a nonzero scalar $\alpha_k$ such that the corresponding entry of $\z\ps{L} := \bPhi(\x\ps{L})$ satisfies $z\ps{L}_k(\x) = \alpha_k \chi_S(\x)$ for all $\x \in \{\pm 1\}^d$. Set the corresponding output weight to $w\ps{L}_k = \hat{f}_*(S) / \alpha_k$ for each $S \in \cS_*$ and set all remaining output weights to zero. Then $f(\x) = f_*(\x)$.
Hence, our goal is to show that after training, $\z\ps{L}$ will encode all the sets in the Fourier spectrum $\cS_*$. 

To this end, we will show by induction on $l \in [L]$ that $\x\ps{l}$ contains all features present in $\cK_l \setminus \{\varnothing \}$.\footnote{We do not need to track $\varnothing$ (which corresponds to the constant term in the Fourier decomposition of $f_*$), since the $\varnothing$-th entry in \eqref{eq: poly feature mapping} always gives the constant term.} 
Formally, we maintain the following induction hypothesis. 
\begin{restatable}[Induction hypothesis on the input]{assumption}{UpperBoundIH}
  \label{assumption: training induction hypothesis}
  Fix $l \in [L]$. 
  We assume as an induction hypothesis the following properties of the $l$-th-layer input $\x\ps{l} := (\x, \y\ps{l-1}) \in \R^{d + m}$. 
  \begin{enumerate}[(a)]
    \item \label{itm: training IH: zero, nonzero}
      For every $k \in [d + m]$, $x\ps{l}_k$, as a function of the input $\x$, is either identically zero or nowhere zero. 
      We write $I_l := \big\{ k \in [d+m] \,:\, x\ps{l}_k \not\equiv 0 \big\}$ for the collection of nonzero coordinates.
    \item \label{itm: training IH: features}
      There exist nonzero scalars $( \alpha\ps{l}_k )_{k \in I_l}$ and subsets $( S\ps{l}_k )_{k \in I_l}$ of $[d]$ such that $x\ps{l}_k = \alpha\ps{l}_k \chi_{S\ps{l}_k}(\x)$ with $\alpha\ps{l}_k \in \{1\} \cup (1 \pm 0.2) \hat{f}_*( S\ps{l}_k )$ for every $k \in I_l$.
      We call $\cF_l := \big\{ S\ps{l}_k \big\}_{k \in I_l}$ the features contained in $\x\ps{l}$. 
      Moreover, for any $S \in \cF_l$ with $|S| \ge 2$, there is exactly one $k \in I_l$ with $S\ps{l}_k = S$.

    \item \label{itm: training IH: covering Kl}
      $\x\ps{l}$ contains all nonempty features in $\cK_l$, i.e., $\cF_l \supset \cK_l \setminus \{\varnothing \}$.
  \end{enumerate}
\end{restatable}

The base case follows directly from $\x\ps{1} := \x \circ \mbf{0}$.
For the induction step, first, note that if $H \in \binom{[d + m]}{\le D}$ contains an index that is not in $I_l$, then $z\ps{l}_H$ is identically zero. 
As a consequence, $w\ps{l}_H$ will stay at its initial value $0$. 
Hence, it suffices to consider those entries with $H \subset I_l$. 
For notational simplicity, we define 
\begin{equation}
  \label{eq: Hl}
  \cH_l := \braces{ H \subset I_l \,:\, |H| \le D }.
\end{equation}
This is the collection of indices of the nonzero entries of $\z\ps{l}$.
By Assumption~\ref{assumption: training induction hypothesis}, for every $H \in \cH_l$, we have 
\begin{equation}
  \textstyle
  \label{eq: zlH}
  z\ps{l}_H 
  = \prod_{i \in H} \alpha\ps{l}_i \chi_{S\ps{l}_i}(\x)
  = \left( \prod_{i \in H} \alpha\ps{l}_i \right) \chi_{\bigoplus_{i \in H} S\ps{l}_i}(\x)
  =: \alpha\ps{l}_H \chi_{S\ps{l}_H}(\x),
\end{equation}
where $\alpha\ps{l}_H :=  \prod_{i \in H} \alpha\ps{l}_i \ne 0$ and $S\ps{l}_H := \bigoplus_{i \in H} S\ps{l}_i$. 
We write $\cF_{l+1/2} := \big\{ S\ps{l}_H \big\}_{H \in \cH_l }$ and call these subsets the features contained in $\z\ps{l}$. 
By the construction of $(\cK_l)_l$ in \eqref{eq: def of cK}, $\cF_{l+1/2} \supset \cK_{l+1}$. 

It remains to select the features that are actually in $\cK_{l+1}$ and remove duplicate representations.
Deduplication is important, since duplicate representations will create more duplicate representations in later layers and quickly saturate the model capacity. 
We show that both feature selection and deduplication can be implemented simultaneously using one-pass coordinate descent (Algorithm~\ref{alg: training alg}). 

The complete formal proof can be found in Appendix~\ref{appendix: upper bound proofs}. 
To see this intuitively, suppose that $S\ps{l}_H = S\ps{l}_{H'} = S$ for some $H \ne H' \in \cH_l$. 
The only terms involving $H$ and $H'$ in $\inprod{\w\ps{l}}{\z\ps{l}}$ are
\[
  w\ps{l}_H z\ps{l}_H + w\ps{l}_{H'} z\ps{l}_{H'}
  = \left( w\ps{l}_H \alpha\ps{l}_H + w\ps{l}_{H'} \alpha\ps{l}_{H'} \right) \chi_S(\x),
\]
where the identity comes from \eqref{eq: zlH}. 
In particular, only $w\ps{l}_H \alpha\ps{l}_H + w\ps{l}_{H'} \alpha\ps{l}_{H'}$ matters. 
If we encounter $H$ first (so $w\ps{l}_{H'}$ is still $0$) and fit its coefficient exactly so that $w\ps{l}_H \alpha\ps{l}_H \chi_S(\x) = \hat{f}_*(S) \chi_S(\x)$, then, when we reach $H'$, the residual will be $0$ and, therefore, $w\ps{l}_{H'}$ will stay at its initial value $0$.

\section{The depth-separation theorem}
\label{sec: sep thm}

In this section, we combine the results from the previous two sections and prove the separation theorem between constant- and logarithmic-depth networks. 
To this end, we first formally define the class of deep quadratic functions.

\begin{definition}[Deep quadratic function]
  \label{def: deep quad function}
  Suppose $d = 2^{L_*}$ for some $L_* \in \bbN_+$ and let $d_l := 2^{-l} d$. 
  We say a function $f_*: \{\pm 1\}^d \to \R$ is a deep quadratic function if it can be defined as 
  \[
    f_*(\x) 
    = L_*^{-1/2} \sum_{l=1}^{L_*} 2^{-(L_*-l)/2} \inprod{\One}{\x\ps{l}}, \quad  
    \x\ps{l}
    = \left( x\ps{l-1}_{\sigma_l(2k-1)} x\ps{l-1}_{\sigma_l(2k)} \right)_{k \in [d_l]} \in \R^{d_l}, \quad 
    \forall l \in [L_*], 
  \]
  where $\x\ps{0} = \x \in \R^{d_0}$ and each $\sigma_l$ is an arbitrary permutation of $[d_{l-1}]$. 
\end{definition}

\begin{fact}
  \label{fact: deep quad function}
  Suppose that $d = 2^{L_*}$.
  Every deep quadratic function $f_*: \{\pm 1\}^d \to \R$ is a hierarchical staircase function with depth $L_*+1$, local degree $2$, width $d-1$, and signal strength $\sqrt{2 / (L_* d)}$. 
  In addition, $\norm{ f_* }_{L^2} = 1$ and $\sum_{|S| \ge \sqrt{d} } \hat{f}_*^2(S) \ge 1/3$.  
\end{fact}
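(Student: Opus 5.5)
We verify Fact~\ref{fact: deep quad function} by first making the Fourier expansion of $f_*$ explicit.

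\emph{Fourier expansion.} By induction on $l$, every coordinate of $\x\ps{l}$ is a monomial $\chi_{T}$ with $|T| = 2^l$. Moreover, the supports $\{T\ps{l}_k\}_{k \in [d_l]}$ form a partition of $[d]$. The base case $l=0$ gives the singletons. For the step, $x\ps{l}_k = \chi_{T\ps{l-1}_{\sigma_l(2k-1)}}\chi_{T\ps{l-1}_{\sigma_l(2k)}} = \chi_{T\ps{l-1}_{\sigma_l(2k-1)} \cup T\ps{l-1}_{\sigma_l(2k)}}$, because the two sets are disjoint and $\sigma_l$ is a bijection, so the pairs again partition $[d]$. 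Sets at different levels have different sizes, so all $\sum_{l=1}^{L_*} d_l = d-1$ sets are distinct. Hence $\hat f_*(T\ps{l}_k) = L_*^{-1/2}2^{-(L_*-l)/2}$, every other coefficient vanishes, and the width is $d-1$. The smallest coefficient occurs at $l=1$ and equals $L_*^{-1/2}2^{-(L_*-1)/2} = \sqrt{2/(L_* d)}$, since $2^{L_*}=d$. By Parseval,
\[
  \norm{f_*}_{L^2}^2 = \sum_{l=1}^{L_*} d_l \cdot L_*^{-1} 2^{-(L_*-l)} = \sum_{l=1}^{L_*} L_*^{-1} \cdot 2^{L_*-l}\, 2^{-(L_*-l)} = 1 .
\]

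\emph{Hierarchical structure.} We show by induction that $\cK_l \supset \{T\ps{l'}_k : l' \le l-1\}$, where $\cK_l$ is defined with local degree $D=2$. For $l=1$, $\cK_1$ contains the singletons $T\ps{0}_k$. For the step, each $T\ps{l}_k$ is the symmetric difference (here a disjoint union) of two sets $T\ps{l-1}_{\cdot} \in \cK_l$, and it lies in $\cS_{f_*}$. Hence it belongs to $\cK_l^{\oplus 2} \cap \cS_{f_*} \subset \cK_{l+1}$. Taking $l = L_*+1$ gives $\cS_{f_*} \subset \cK_{L_*+1}$, so $f_*$ has depth $L_*+1$.

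\emph{High-degree mass.} Level $l$ has degree $2^l$ and total mass $L_*^{-1}$. We have $2^l \ge \sqrt d = 2^{L_*/2}$ exactly when $l \ge L_*/2$. The number of such levels $l \in [L_*]$ is $L_* - \lceil L_*/2\rceil + 1 \ge L_*/2$, so the high-degree mass is at least $1/2 \ge 1/3$.

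The argument is routine; the only point needing care is the disjointness bookkeeping. Disjointness is what makes the product of two monomials equal the union monomial, and it guarantees there are no cancellations or collisions between terms.
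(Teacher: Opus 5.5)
Your proof is correct and complete. The paper states this result as a Fact without proof (the introduction simply calls the high-degree mass bound ``clear''), and your argument supplies exactly the routine verification the authors leave implicit: disjoint supports give distinct monomials with coefficients $L_*^{-1/2}2^{-(L_*-l)/2}$ and per-level mass $1/L_*$, and a level-by-level induction gives $\cS_{f_*}\subset\cK_{L_*+1}$. Your counting in fact shows high-degree mass at least $1/2$, which is stronger than the stated $1/3$.
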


By combining Theorem~\ref{thm: lower bound against constant-depth networks}, Theorem~\ref{thm: upper bound}, Fact~\ref{fact: deep quad function}, and Corollary~\ref{cor: compile: deep quad}, we obtain the following theorem.

\begin{theorem}[Depth separation]
  \label{thm: sep}
  Suppose that $d = 2^{L_*}$ and that $d$ is sufficiently large.
  \begin{enumerate}[(a)]
    \item For any deep quadratic function $f_*: \{\pm 1\}^d \to \R$ and any constant-depth network satisfying Assumption~\ref{assumption: lower bound assumption}, we have $\norm{ f_L - f_* }_{L^2(\Unif(\{\pm 1\}^d))} \ge 1/4$.
    \item Let $\eps_*, \delta_{\P} \in (0, o(1))$ denote the target accuracy and failure probability, respectively.
    Consider the depth-$(L_*+1)$ learner model in \eqref{eq: learner network}, with the model and algorithm parameters chosen according to Theorem~\ref{thm: upper bound} and Fact~\ref{fact: deep quad function}.
    Then, for any target deep quadratic function $f_*$, Algorithm~\ref{alg: training alg} will output a network $f$ satisfying $\norm{ f_* - f }_{L^2} \le \eps_*$ with probability at least $1 - \delta_{\P}$.

    Moreover, we can efficiently compile $f$ into a depth-$(L_*+3)$ network satisfying all conditions in Assumption~\ref{assumption: lower bound assumption} except for the condition $L = O(1)$.
  \end{enumerate}
\end{theorem}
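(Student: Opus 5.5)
The plan is to assemble Theorem~\ref{thm: sep} from the three earlier results, checking that the deep quadratic instance satisfies each hypothesis. Everything specific to deep quadratic functions is routed through Fact~\ref{fact: deep quad function}, and the compilation claim is taken from Corollary~\ref{cor: compile: deep quad}.

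\textbf{Part (a).} By Fact~\ref{fact: deep quad function}, every deep quadratic $f_*$ has $\sum_{|S| \ge \sqrt d} \hat f_*^2(S) \ge 1/3$. This is exactly the hypothesis of Theorem~\ref{thm: lower bound against constant-depth networks}, so for any network satisfying Assumption~\ref{assumption: lower bound assumption} and all sufficiently large $d$ we get $\norm{f_L - f_*}_{L^2} \ge 1/4$.

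The only point to check is that ``sufficiently large $d$'' does not depend on $f_*$. This holds because the threshold in that theorem comes from Lemma~\ref{lemma: lb: approx f with poly}, which requires the degree bound $(\cdots)^{L(2^L+1)} \ll d^{1/2}$ and a small approximation error $\eps_*$. These depend only on $L$, $c_0$, $m_\phi$, $c_\phi$, and not on the target. For the arithmetic, choose $\eps_* = 1/12$, so that $\norm{f_L - f_*} \ge \norm{f_* - \Proj_{\le \deg}f_*} - \eps_* \ge 1/\sqrt3 - 1/12 > 1/4$.

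\textbf{Part (b).} Fact~\ref{fact: deep quad function} places $f_*$ in $\HS_{L_*+1, 2, d-1, \kappa}$ with $\kappa = \sqrt{2/(L_* d)}$ and $\norm{f_*}_{L^2} = 1$. Here $D = 2$ is a universal constant, as Theorem~\ref{thm: upper bound} requires.

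I would instantiate Theorem~\ref{thm: upper bound} with $L = L_*+1$, $m = d-1$ and $D = 2$. This gives $M = O((2d)^2)$, $\lambda_w = 2^{-3}\kappa$ and $\eta = 2^{-4}$. It also gives $N \gtrsim M^2 \log(LM/\delta_{\P}) \kappa^{-6}(\eps_*^2/m \wedge \kappa^6)^{-1}$, which is $\poly(d, 1/\eps_*, \log(1/\delta_{\P}))$ since $\kappa^{-1} = \poly(d)$. Finally $T \gtrsim \kappa^{-4}\log(m/\eps_* \vee 1/\kappa)$, again polynomial. The theorem's hypotheses $\eps_*, \delta_{\P} \in (0, 0.1)$ hold because both are $o(1)$ and $d$ is large. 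Its conclusion is the claimed guarantee $\norm{f_* - f}_{L^2} \le \eps_*$ with probability $1 - \delta_{\P}$.

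\textbf{Compilation.} The compilation statement is Corollary~\ref{cor: compile: deep quad} applied to the trained $f$. The trained layers are multilinear degree-$2$ maps with polynomially bounded weights, and each product $ab$ can be written via smooth activations, for instance as $\tfrac14((a+b)^2 - (a-b)^2)$ with an approximate squaring. The two extra layers absorb the input concatenation and the final readout.

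\textbf{Main obstacle.} The main obstacle is purely bookkeeping: confirming that the corollary's depth count $L_*+3$ and spectral-norm bounds match Assumption~\ref{assumption: lower bound assumption}(b)--(d), excluding $L = O(1)$. Since the corollary is stated earlier, I expect to simply cite it.
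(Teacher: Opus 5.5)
Your proposal is correct and matches the paper, whose proof of this theorem is exactly the combination of Theorem~\ref{thm: lower bound against constant-depth networks}, Theorem~\ref{thm: upper bound}, Fact~\ref{fact: deep quad function}, and Corollary~\ref{cor: compile: deep quad} that you carry out. Your aside on how compilation works does not match the paper, though nothing depends on it since you cite the corollary: the paper does not use polarization with (approximate) squaring, and a literal $z \mapsto z^2$ activation would violate the bounded-$\phi'$ requirement in Assumption~\ref{assumption: lower bound assumption}(c); instead it normalizes each learned feature to the $\pm 1$ parity $\chi_S$ and computes each new parity exactly with one neuron $\cos(\pi|H|/2 - z)$ with weights $\pi/2$ on $H$ (Lemma~\ref{lemma: compile: single neuron}), which yields depth $L_*+3$ and $\norm{\W\ps{l}}_{\OP} \le (\pi/2)\sqrt{D\Delta}$ with $\Delta \le 5$.
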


\section{Conclusion}
\label{sec: conclusion}

In this work, we establish the first rigorous algorithmic separation between logarithmic-depth and constant-depth neural networks.
We show that a depth-$L$ network trained by layerwise coordinate descent efficiently learns depth-$L$ hierarchical staircase functions by adaptively composing previously discovered Fourier features.
In contrast, constant-depth networks satisfying our regularity and norm assumptions admit low-degree polynomial approximations and therefore cannot approximate deep quadratic functions, which place constant Fourier mass at high degrees.
These results identify adaptive hierarchical feature construction as a mechanism through which depth enables efficient learning.
An important direction is to extend this separation to end-to-end gradient-based training and more standard architectures while relaxing the assumptions required by the lower bound.

\section*{Acknowledgements}
The authors would like to thank Theodor Misiakiewicz and Kaifeng Lyu for helpful discussion
and feedback.
YR and JDL acknowledges support of  NSF IIS 2107304,  NSF CCF 2212262, NSF CAREER Award 2540142, NSF 2546544, NSF CCF 2019844 and ONR N00014-24-1-2639.

\section*{Statement of AI Use}
We used GPT 5.4/5.5/5.6 Pro to perform preliminary calculations and sanity checks while developing the proof strategy, to proofread the final manuscript, and to assist with several routine calculations.
All arguments and calculations presented in the paper were subsequently derived and verified independently by the authors.

\bibliography{reference}
\clearpage

\appendix
{
  \hypersetup{linkcolor=black}
  \tableofcontents
}

\section{Chebyshev polynomials and approximation}
\label{sec: chebyshev}

\begin{definition}[Chebyshev polynomials]
  The \tnbf{Chebyshev polynomials (of the first kind)} $(T_n)_{n \in \bbN_0}$ are the polynomials with real coefficients defined by  
  \[
    T_n(x) 
    = \begin{cases}
      \cos( n \arccos x ),            & |x| \le 1, \\
      \cosh( n \arccosh x ),          & x \ge 1 \\
      (-1)^n \cosh( n \arccosh(-x) ), & x \le -1.
    \end{cases}
  \]
\end{definition}
\begin{remark}
  Equivalently, Chebyshev polynomials can be defined via the recurrence relation 
  $T_0(x) = 1$, $T_1(x) = x$, $T_{n+1}(x) = 2 x T_n(x) - T_{n-1}(x)$, which makes it obvious that $T_n$ is a degree-$n$ polynomial. 
\end{remark}

\begin{fact}
  \label{fact: chebyshev: bounds for Tn}
  For any $n \in \bbN_0$ and $x \in \R$, we have $\abs{ T_n(x) } \le (1 + 2|x|)^n$. 
\end{fact}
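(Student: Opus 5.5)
We prove $\abs{T_n(x)} \le (1+2|x|)^n$ by splitting into $|x| \le 1$ and $|x| \ge 1$, using the trigonometric and hyperbolic forms of the definition. For $|x| \le 1$ we have $|T_n(x)| = |\cos(n\arccos x)| \le 1 \le (1+2|x|)^n$, so there is nothing to do. By the parity $T_n(-x) = (-1)^n T_n(x)$, which is built into the definition, it suffices to treat $x \ge 1$.

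For $x \ge 1$, set $t = \arccosh x \ge 0$, so that $x = \cosh t$ and $e^t = x + \sqrt{x^2-1}$. Then
\[
  T_n(x) = \cosh(nt) = \tfrac{1}{2}\left(e^{nt} + e^{-nt}\right) \le e^{nt} = \left(x + \sqrt{x^2-1}\right)^n .
\]
Since $\sqrt{x^2-1} \le x$, this gives $T_n(x) \le (2x)^n \le (1+2x)^n$. Positivity of $\cosh$ handles the absolute value.

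As a sanity check independent of the closed forms, one can also argue by induction from the recurrence. From $|T_{n+1}(x)| \le 2|x|\,|T_n(x)| + |T_{n-1}(x)|$ and the hypothesis, we get
\[
  |T_{n+1}(x)| \le (1+2|x|)^{n-1}\left(2|x|(1+2|x|) + 1\right) \le (1+2|x|)^{n+1},
\]
since $2|x|(1+2|x|) + 1 \le (1+2|x|)^2 = 1 + 4|x| + 4|x|^2$. The base cases $T_0 = 1$ and $T_1 = x$ are immediate.

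I would present this induction as the main proof, since it avoids checking branch conventions for $\arccosh$. There is no real obstacle: the only point needing care is verifying the inequality $2a(1+2a) + 1 \le (1+2a)^2$ for $a \ge 0$, which holds because the difference is $2a \ge 0$.
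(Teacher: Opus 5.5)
Your proof is correct, and your closed-form argument is exactly the paper's. Both split at $|x| = 1$, reduce to $x \ge 1$ via the $(-1)^n$ factor in the definition, and bound $\cosh(nt) \le e^{nt} = (x + \sqrt{x^2-1})^n \le (2x)^n$.

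The induction you propose to present as the main proof is a genuinely different route. The paper's argument works straight from its definition and gives the slightly sharper bound $\abs{T_n(x)} \le \max\{1, (2|x|)^n\}$.

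Your induction uses only the three-term recurrence, needs no case split, and carries over verbatim to complex arguments: $\abs{T_n(z)} \le (1+2|z|)^n$ for all $z \in \C$. That extension is not needed here.

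One caveat: the paper takes the piecewise $\cos$/$\cosh$ formula as the definition and only asserts the recurrence in a remark. So the induction does not really avoid the $\arccosh$ branch bookkeeping. That bookkeeping is moved into the standard check that the piecewise formula satisfies $T_{n+1} = 2xT_n - T_{n-1}$ on each of $[-1,1]$, $[1,\infty)$ and $(-\infty,-1]$.

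Either argument suffices for how the paper uses this fact, namely bounding the growth of polynomials outside $[-1,1]$.
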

\begin{proof}
  For $x \in [-1, 1]$, it is clear that $|T_n(x)| \le 1$. 
  When $|x| \ge 1$, we have 
  \begin{align*}
    |T_n(x)|
    = \cosh( n \arccosh |x| )
    &\le \exp\left( n \arccosh|x| \right)  \\
    &= \exp\left( n \log\left( |x| + \sqrt{x^2 - 1} \right) \right) 
    \le \exp\left( n \log\left( 2 |x| \right) \right) 
    = ( 2|x| )^n.
  \end{align*}
\end{proof}

\begin{lemma}[Equation (2.37) of \cite{rivlin_chebyshev_1990}]
  \label{lemma: poly <= Tn}
  For any degree-$n$ polynomial $f: \R \to \R$ with $\norm{ f }_{L^\infty([-1, 1])} \le 1$, we have $|f(x)| \le |T_n(x)|$ for all $|x| \ge 1$. 
  Combining this with Fact~\ref{fact: chebyshev: bounds for Tn}, we obtain $|f(x)| \le (1 + 2|x|)^n$. 
\end{lemma}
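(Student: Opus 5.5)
The first part of the statement is the classical extremal property of $T_n$ on $[-1,1]$, and I will prove it directly by a zero-counting argument. The second part then follows by combining the first with Fact~\ref{fact: chebyshev: bounds for Tn}.

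\emph{Setup.} Fix $x_0$ with $|x_0| > 1$; the boundary case $|x_0| = 1$ is immediate. Suppose, for contradiction, that $|f(x_0)| > |T_n(x_0)|$. Note that $T_n(x_0) \ne 0$, since $|T_n| = \cosh(\cdot) \ge 1$ there. Set $c := T_n(x_0)/f(x_0)$, so $|c| < 1$, and define
\[
  g := T_n - c f .
\]
Then $g$ is a polynomial of degree at most $n$, and $g(x_0) = 0$.

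\emph{Sign changes on $[-1,1]$.} At the $n+1$ extremal points $\xi_j = \cos(j\pi/n)$, $j = 0, \dots, n$, we have $T_n(\xi_j) = (-1)^j$. Since $|c f(\xi_j)| \le |c| < 1$, the value $g(\xi_j)$ has sign $(-1)^j$, strictly.

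\emph{Counting zeros.} By the intermediate value theorem, $g$ has at least one zero in each of the $n$ open intervals $(\xi_{j+1}, \xi_j)$. All of these zeros lie in $(-1,1)$. The point $x_0$ lies outside $[-1,1]$, so it is an $(n+1)$-st distinct zero of $g$.

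\emph{Conclusion.} A polynomial of degree at most $n$ with $n+1$ distinct zeros is identically zero, so $g \equiv 0$, i.e.\ $T_n = c f$. But then at $\xi_0 = 1$ we get $1 = |T_n(1)| = |c|\,|f(1)| \le |c| < 1$, a contradiction. Hence $|f(x_0)| \le |T_n(x_0)|$.

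\emph{Degenerate case $n = 0$.} Here the interval count gives nothing, but the claim is trivial: $f$ is a constant of absolute value at most $1 = |T_0|$.

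\emph{Second part.} For $|x| \ge 1$, combining the first part with Fact~\ref{fact: chebyshev: bounds for Tn} gives $|f(x)| \le |T_n(x)| \le (1 + 2|x|)^n$. For $|x| \le 1$, the hypothesis gives $|f(x)| \le 1 \le (1 + 2|x|)^n$ directly.

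The only delicate step is making the signs of $g(\xi_j)$ strict. This is exactly why I normalize through $c$ with $|c| < 1$, rather than working with the difference $T_n - f$ directly, which could vanish at some of the $\xi_j$.
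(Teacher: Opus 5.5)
Your proof is correct. It takes a different route from the paper only in that the paper does not prove the first claim at all: it imports it as Equation (2.37) of Rivlin, and its own contribution is the one-line combination with Fact~\ref{fact: chebyshev: bounds for Tn}.

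Your zero-counting argument is the classical equioscillation proof, and it makes the lemma self-contained at essentially no cost. You normalize by $c = T_n(x_0)/f(x_0)$ with $|c|<1$, so that $T_n - cf$ alternates strictly in sign at the $n+1$ extrema $\cos(j\pi/n)$. The $n$ interior zeros plus the zero at $x_0$ then force $g \equiv 0$. Normalizing through $c$ is exactly what secures the strict alternation that $T_n - f$ alone would not give. The argument also applies verbatim to polynomials of degree at most $n$. That is the form actually used when the lemma is applied to the truncated Chebyshev series $\tilde\psi_{l'',k}$, whose top coefficient could vanish. Your separate $n=0$ case is not even needed, since a constant with a zero vanishes identically.

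Your second part coincides with the paper's combination step, except that you also treat $|x|\le 1$ explicitly. The paper leaves that case implicit, but it is needed: in the proof of Lemma~\ref{lemma: lb: tilde x moment bounds}, the bound $|\tilde\psi_{l'',k}(u)| \le C_{\Tmp}(1+2|u|)^{p_{l''}}$ is used for all $u \in \R$. The citation buys the paper only brevity.
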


\begin{definition}[Bernstein ellipse]
  Let $\rho > 1$. 
  The \tnbf{Bernstein ellipse} with parameter $\rho$ is defined as 
  \[
    E_\rho := \braces{ ( w + w\inv ) / 2 \,:\, w \in \C, |w| = \rho  } \subset \C. 
  \]
  In other words, $E_\rho$ is the ellipse in $\C$ with foci $\pm 1$ and semi-major and semi-minor axes $(\rho \pm \rho\inv)/2$.
  We use $E_\rho^o$ to denote the open region enclosed by $E_\rho$.
\end{definition}

\begin{lemma}[Theorem 8.2 of \cite{trefethen_approximation_2019}]
  \label{lemma: chebyshev approx}
  Let $\rho > 1$. 
  Suppose that $f: \R \to \R$ is real analytic, can be analytically continued to $E_\rho$, and is bounded by $M$ in $E_\rho$. 
  Then, its Chebyshev coefficients $(a_p)_p$ satisfy 
  \[
    \abs{ a_p } \le 2 M \rho^{-p}, \quad \forall p \in \bbN.
  \]
  In particular, this implies that its degree-$p$ Chebyshev approximation $\tilde{f}$ satisfies 
  \[
    \norm{ f - \tilde{f} }_{L^\infty([-1, 1])}
    \le \frac{2 M}{\rho - 1} \rho^{-p}.
  \]
\end{lemma}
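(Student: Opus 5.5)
The plan is to go through the Joukowski map and the conformal image of the ellipse. Write $x = (w + w\inv)/2$, and for $|w| = 1$ set $w = e^{i\theta}$, so that $x = \cos\theta$. The Chebyshev expansion $f(x) = \sum_p a_p T_p(x)$ then becomes the Fourier (Laurent) expansion of $F(w) := f((w+w\inv)/2)$ on the unit circle. Indeed, $T_p(x) = (w^p + w^{-p})/2$, so
\[
  F(w) = a_0 + \sum_{p \ge 1} \frac{a_p}{2}\left( w^p + w^{-p} \right),
\]
with the convention that the $a_p$ for $p \ge 1$ are twice the corresponding Laurent coefficients. The first step is to note that $F$ is analytic in the annulus $\rho\inv < |w| < \rho$. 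The Joukowski map sends this annulus two-to-one onto $E_\rho^o$ (with the segment $[-1,1]$ doubly covered), and $f$ is analytic there by hypothesis. On the closed annulus we have $|F| \le M$: continuity up to $E_\rho$, together with the maximum principle applied to the bounded region $E_\rho^o$, gives $|f| \le M$ on $E_\rho^o \cup E_\rho$.

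The second step is the coefficient bound. By Cauchy's formula for Laurent coefficients,
\[
  \frac{a_p}{2} = \frac{1}{2\pi i}\oint_{|w| = r} F(w)\, w^{-p-1}\,\rd w
\]
for every $r \in (\rho\inv, \rho)$. Letting $r \uparrow \rho$ gives $|a_p|/2 \le M \rho^{-p}$, that is, $|a_p| \le 2M\rho^{-p}$ for $p \ge 1$. The symmetry $F(w) = F(w\inv)$ explains why the two Laurent coefficients of index $\pm p$ agree, so only one contour is needed. If analyticity is given only on the open region, the same argument applied with $\rho' < \rho$ followed by $\rho' \to \rho$ gives the identical bound.

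The third step is the truncation estimate. The degree-$p$ Chebyshev approximation is the partial sum $\tilde f = \sum_{k \le p} a_k T_k$. Since $|T_k| \le 1$ on $[-1,1]$ and the series converges absolutely thanks to the geometric decay of the coefficients, we get
\[
  \norm{f - \tilde f}_{L^\infty([-1,1])} \le \sum_{k > p} 2M \rho^{-k} = \frac{2M \rho^{-p-1}}{1 - \rho\inv} = \frac{2M}{\rho-1}\rho^{-p}.
\]
The only point requiring care is the identification of Chebyshev coefficients with Laurent coefficients, including the factor of $2$ and the convergence of the Chebyshev series to $f$ on $[-1,1]$. Both follow from the uniform convergence of the Laurent series on the unit circle. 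I expect no step to be a genuine obstacle.
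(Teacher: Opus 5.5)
Your proof is correct and is the standard argument. The paper gives no proof of its own and simply cites Theorem 8.2 of Trefethen, whose proof follows the same steps: transplant $f$ to $F(w)=f((w+w^{-1})/2)$ on the annulus $\rho^{-1}<|w|<\rho$, apply the Cauchy estimate on $|w|=r$ with $r\uparrow\rho$, and sum the geometric tail. The one case you leave implicit is $p=0$, where the same contour integral gives $|a_0|\le M\le 2M$.
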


\begin{lemma}
  \label{lemma: chebyshev approx: strip}
  Consider $\phi: \R \to \R$. 
  Suppose that for some $c_0 > 0$, $\phi$ can be analytically continued to the strip $\braces{ z \in \C \,:\, \abs{ \Im\,z } < c_0 }$, and $\phi$ has at most polynomial growth:
  \[
    \abs{ \phi(a + i b) } \le K ( 1 + |a| )^m,  \quad 
    a, b \in \R, \, |b| \le c_0 / 2,
  \]
  for some $K, m \ge 0$. 

  Fix $x_0 \in \R$ and $R > 0$ and define $\psi(u) := \phi(x_0 + R u)$.
  Let $\psi(u) = \sum_{n=0}^\infty a_n T_n(u)$ be the Chebyshev expansion of $\psi$ on $[-1, 1]$ and define $\tilde{\psi}_p := \sum_{n=0}^p a_n T_n$. 
  We have $|a_n| \le 2 M_R \rho_R^{-n}$ for every $n \in \bbN$, and 
  \[
    \norm{ \psi - \tilde{\psi}_p }_{L^\infty([-1, 1])} 
    \le \frac{2 M_R}{\rho_R - 1} \rho_R^{-p}, \quad 
  \] 
  where 
  \[
    \rho_R := 1 + \frac{c_0}{2 R}, \quad 
    M_R := K \left( 1 + |x_0| + R + c_0 / 2 \right)^m.
  \]
  In particular, for any target accuracy $\eps > 0$, if
  \[
    R \ge \frac{c_0}{2}, \quad 
    p \ge \frac{4 R}{c_0}\log\left( \frac{4 R M_R}{c_0 \eps } \right),
  \]
  the degree-$p$ polynomial $\tilde{\phi}(x) := \tilde{\psi}_p( ( x - x_0) / R )$ satisfies
  \[
    \norm{ \phi - \tilde{\phi}  }_{L^\infty([x_0 - R, x_0 + R])}
    = \norm{ \psi - \tilde{\psi}_p  }_{L^\infty([-1, 1])}
    \le \eps.
  \]
\end{lemma}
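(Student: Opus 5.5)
The plan is to reduce the statement to Lemma~\ref{lemma: chebyshev approx}, applied to the rescaled function $\psi(u) = \phi(x_0 + R u)$. This requires finding a Bernstein ellipse $E_{\rho}$, with $\rho = \rho_R$, on which $\psi$ is analytic and bounded by $M_R$. The map $u \mapsto x_0 + R u$ sends $E_\rho$ to an ellipse centered at $x_0$ with semi-major axis $R(\rho + \rho^{-1})/2$ and semi-minor axis $R(\rho - \rho^{-1})/2$. Two things must therefore be checked. First, the image of the closed region bounded by $E_\rho$ should lie in the strip $\{|\Im z| \le c_0/2\}$, where $\phi$ is analytic because this strip sits inside the open strip $\{|\Im z| < c_0\}$. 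Second, the real parts on that region should be controlled, so that the growth bound produces $M_R$.

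For the imaginary part, write $\rho = 1 + \delta$ with $\delta = c_0/(2R)$. Then $\rho - \rho^{-1} = \delta(2+\delta)/(1+\delta) \le 2\delta$, since $(2+\delta) \le 2(1+\delta)$. Hence the image has imaginary part at most $R(\rho - \rho^{-1})/2 \le R\delta = c_0/2$.

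For the real part, $\rho + \rho^{-1} \le 2 + \delta^2/(1+\delta) \le 2 + \delta$, so every point $z$ of the image satisfies $|\Re z| \le |x_0| + R + R\delta/2 \le |x_0| + R + c_0/2$. The growth hypothesis then gives $|\psi| \le K(1 + |x_0| + R + c_0/2)^m = M_R$ on $E_\rho$ and inside it, which is exactly the constant claimed.

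Lemma~\ref{lemma: chebyshev approx} now yields $|a_n| \le 2M_R\rho_R^{-n}$ and $\|\psi - \tilde\psi_p\|_{L^\infty([-1,1])} \le \frac{2M_R}{\rho_R - 1}\rho_R^{-p}$. The identity $\|\phi - \tilde\phi\|_{L^\infty([x_0-R,x_0+R])} = \|\psi - \tilde\psi_p\|_{L^\infty([-1,1])}$ is immediate from the change of variables.

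For the final claim, note that $\rho_R - 1 = c_0/(2R)$, so $\frac{2M_R}{\rho_R - 1} = \frac{4RM_R}{c_0}$. We then need $\rho_R^{-p} \le \frac{c_0\eps}{4RM_R}$, that is, $p\log(1+\delta) \ge \log\frac{4RM_R}{c_0\eps}$. The assumption $R \ge c_0/2$ gives $\delta \le 1$, and then $\log(1+\delta) \ge \delta/2 = c_0/(4R)$, using $\log(1+x) \ge x/2$ on $[0,1]$. Consequently $p \ge \frac{4R}{c_0}\log\frac{4RM_R}{c_0\eps}$ suffices. If the logarithm is negative the condition is trivial, since the error bound is then already at most $\eps$ for every $p \ge 0$.

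I expect no real obstacle. The only delicate point is the ellipse-containment computation, namely showing that the choice $\rho_R = 1 + c_0/(2R)$ keeps the ellipse within imaginary part $c_0/2$ while enlarging the real extent by at most $c_0/2$.
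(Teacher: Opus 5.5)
Your proposal is correct and follows the paper's proof essentially step for step. Both arguments rescale to $\psi$, check that the image of the region bounded by $E_{\rho_R}$ under $u\mapsto x_0+Ru$ has imaginary part at most $c_0/2$ and real part at most $|x_0|+R+c_0/2$, invoke Lemma~\ref{lemma: chebyshev approx}, and finish with $\log(1+\delta)\ge \delta/2$ for $\delta=c_0/(2R)\le 1$. Your explicit bounds $\rho-\rho^{-1}\le 2\delta$ and $\rho+\rho^{-1}\le 2+\delta$ (slightly sharper than the paper's $(\rho+\rho^{-1})/2\le\rho$) and your remark on a negative logarithm are harmless refinements.
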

\begin{proof}
  First, we show that $z \mapsto \psi(z) = \phi(x_0 + R z)$ is analytic in $E_{\rho_R}$. 
  For any $z \in E_{\rho_R}^o$, since the semi-minor axis of $E_{\rho_R}$ is $( \rho_R - \rho_R\inv ) / 2$, we have 
  \[
    \abs{ \Im( x_0 + R z ) }
    = R \abs{ \Im \, z }
    \le R ( \rho_R - \rho_R\inv ) / 2
    \le c_0 / 2.
  \]
  Since $\phi$ is analytic in the strip of half-width $c_0$, this implies $\psi$ is analytic in $E_{\rho_R}$. 
  To apply Lemma~\ref{lemma: chebyshev approx}, we also need to bound the maximum magnitude of $\psi$ in $E_{\rho_R}$.
  Since the semi-major axis of $E_{\rho_R}$ is $(\rho_R + \rho_R\inv)/2 \le \rho_R = 1 + c_0 / (2R)$, for any $z \in E_{\rho_R}^o$, we have 
  \[
    \abs{ \Re(x_0 + R z) }
    \le |x_0| + R \abs{\Re\, z}
    \le |x_0| + R (1 + c_0 / (2R))
    = |x_0| + R + c_0/2.
  \]
  Hence, by the polynomial growth of $\phi$, we have 
  \[
    \abs{ \psi(z) }
    = \abs{ \phi(x_0 + R z) }
    \le K ( 1 + |x_0| + R + c_0 / 2 )^m =: M_R, \quad 
    \forall z \in E_{\rho_R}^o.
  \]
  Thus, by Lemma~\ref{lemma: chebyshev approx}, we have 
  \[
    |a_n| \le 2 M_R \rho_R^{-n}, \quad 
    \norm{ \psi - \tilde{\psi}_p }_{L^\infty([-1, 1])} 
    \le \frac{2 M_R}{\rho_R - 1} \rho_R^{-p}. 
  \]
  For the second part of the lemma, note that when $R \ge c_0 / 2$, we have $c_0 / (2 R) \le 1$ and therefore, $1 + \rho_R \le 3$ and $\log \rho_R = \log(1 + c_0/(2R)) \ge c_0 / (4R)$.
  As a result, 
  \begin{align*}
    \frac{2 M_R}{\rho_R - 1} \rho_R^{-p}
    \le \eps 
    \quad\Leftarrow\quad 
    p \log \rho_R
    \ge \log\left( \frac{4 R M_R}{c_0 \eps } \right) 
    \quad\Leftarrow\quad 
    p \ge \frac{4 R}{c_0}\log\left( \frac{4 R M_R}{c_0 \eps } \right).
  \end{align*}
\end{proof}

\begin{lemma}[Common activation functions]
  \label{lemma: chebyshev approx: common activations}
  The following activation functions satisfy the assumptions of Lemma~\ref{lemma: chebyshev approx: strip} with $c_0 = \pi / 2$, $K \le 4$ and $m \le 1$: 
  \[
    \tanh x,\quad
    \mrm{Sigmoid}(x) := \frac{1}{1+e^{-x}},\quad
    \mrm{Softplus}(x) := \log(1+e^x),\quad
    \mrm{GELU}(x) := x \Phi(x),
  \]
  where $\Phi$ is the standard Gaussian CDF. 
\end{lemma}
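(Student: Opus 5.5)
We check each of the four activations directly against the hypotheses of Lemma~\ref{lemma: chebyshev approx: strip}, with $c_0 = \pi/2$. For each one we must establish two things. First, analyticity on the strip $\{|\Im z| < \pi/2\}$. Second, the growth bound $|\phi(a+ib)| \le K(1+|a|)^m$ for $|b| \le \pi/4$, with $K \le 4$ and $m \le 1$.

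\emph{Analyticity.} The singularities of $\tanh z$ are the zeros of $\cosh z$, namely $z = i(\pi/2 + k\pi)$. These all satisfy $|\Im z| \ge \pi/2$, so $\tanh$ is analytic on the open strip. Since $\mathrm{Sigmoid}(z) = (1+\tanh(z/2))/2$, its singularities lie at $|\Im z| \ge \pi$, and it is analytic on the strip as well. For softplus, on $|\Im z| < \pi/2$ we have $\Re e^{z} = e^{a}\cos b > 0$. Hence $1+e^z$ lies in the right half-plane, and the principal branch of $\log(1+e^z)$ is analytic there. Finally, $\mathrm{GELU}$ is entire, because $\Phi(z) = \tfrac12 + \tfrac{1}{\sqrt{2\pi}}\int_0^z e^{-t^2/2}\,dt$ is entire.

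\emph{Growth bounds.} Write $z = a+ib$ with $|b|\le\pi/4$.

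For $\tanh$, we use $|\cosh z|^2 = \sinh^2 a + \cos^2 b \ge \cos^2 b \ge 1/2$ and $|\sinh z|^2 = \sinh^2 a + \sin^2 b$. These give $|\tanh z|^2 = (\sinh^2 a + \sin^2 b)/(\sinh^2 a + \cos^2 b) \le 1$, because $\sin^2 b \le \cos^2 b$ when $|b| \le \pi/4$. So $K=1$ and $m=0$ suffice. The sigmoid bound then follows immediately from $|\mathrm{Sigmoid}(z)| \le (1+|\tanh(z/2)|)/2 \le 1$; the point $z/2$ has even smaller imaginary part, so the same argument applies.

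For softplus, we bound $|\log w| \le |\log|w|| + |\arg w|$ with $w = 1+e^z$. Here $|\arg w| \le \pi/4$ and $|w| \le 1+e^a$. The modulus is also bounded below: $|w| \ge \Re w = 1 + e^a\cos b \ge 1$. Therefore $|\log|w|| = \log|w| \le \log(1+e^a) \le \log 2 + \max(a,0)$. Altogether we get $|\phi(z)| \le \log 2 + \pi/4 + |a| \le 2(1+|a|)$, which gives $K=2$ and $m=1$.

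For GELU we need $|\Phi(z)| \le C$ on the strip. The integrand satisfies $|e^{-t^2/2}| = e^{-(\Re t^2)/2}$, and $\Re t^2 = s^2 - \tau^2$ for $t = s + i\tau$. We integrate along the path from $0$ to $i b$ and then horizontally to $a+ib$. On the vertical segment the integrand is at most $e^{b^2/2}$, so that piece contributes at most $\tfrac{1}{\sqrt{2\pi}}|b|e^{b^2/2}$. On the horizontal segment it is at most $e^{b^2/2}e^{-s^2/2}$, contributing at most $e^{b^2/2}/2$. This gives
\[
|\Phi(z)| \le \tfrac12 + \tfrac{1}{\sqrt{2\pi}}\tfrac{\pi}{4}e^{\pi^2/32} + \tfrac12 e^{\pi^2/32} \le 2 .
\]
Then $|z\Phi(z)| \le 2(|a| + \pi/4) \le 2(1+|a|)$, so $K = 2 \le 4$ and $m = 1$.

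The main care is needed in the GELU estimate, where the contour bound must be tracked with explicit constants; all other steps are routine identities for hyperbolic functions.
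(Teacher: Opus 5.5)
Your proof is correct and follows essentially the same route as the paper. You get analyticity from the location of the singularities (or, for softplus, from $1+e^z$ staying in the right half-plane), then bound each function directly on $|\Im z| \le \pi/4$, using a contour integral for $\Phi$. The differences are cosmetic: you derive sigmoid from $\tanh(z/2)$ rather than bounding $|1+e^{-z}|\ge 1$ directly, and you route the GELU contour vertically first; these only give slightly sharper constants ($K=1$ for $\tanh$, $K=2$ for GELU).
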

\begin{proof}
  In this proof, we call $\{ z \in \C \,:\, |\Im\, z| < c_0  \}$ the $c_0$-strip. 

  The function $\tanh z$ is meromorphic with poles at $i\pi(k+1/2)$, $k \in \mbb{Z}$, and is therefore analytic in the $\pi/2$-strip.
  Moreover, for $|b| \le \pi/4$,
  \[
    |\tanh(a+ib)|^2
    = \frac{ \sin^2(2b) + \sinh^2(2 a) }{ \left( \cos(2 b) + \cosh(2 a)  \right)^2 }
    \le \frac{ 1 + \sinh^2(2 a) }{ \cosh^2(2 a) }
    = \frac{ 1}{ \cosh^2(2 a) } + \tanh^2(2 a) 
    \le 2, 
  \]
  so the strip-growth condition holds with $K = 2$ and $m = 0$.

  Sigmoid has possible poles only where $1+e^{-z}=0$, namely at $(2k+1)i\pi$, $k \in \mbb{Z}$.
  Thus, it is analytic in the $\pi$-strip. 
  If $|b|\le \pi/2$, then $\Re(e^{-a-ib}) = e^{-a}\cos b \ge 0$, and hence $|1 + e^{-a-ib}| \ge 1$.
  Therefore, $| \mrm{Sigmoid}(a + i b) | \le 1$. 
  Hence, sigmoid satisfies the strip-growth condition with $K = 1$ and $m = 0$. 

  Now consider softplus.
  Again, since the only zeros of $1 + e^z$ are at $(2 k + 1 ) i \pi$, $k \in \mbb{Z}$, $\log(1 + e^z)$ is analytic in the $\pi$-strip.
  Hence, we can take the branch of $\log(1 + e^z)$ that matches the real $\log(1 + e^x)$ on the real line. 
  Now, consider $z = a + i b$ with $a, b \in \R, |b| \le \pi/2$. 
  As in the preceding argument, $\Re(1 + e^z) \ge 1$. 
  Therefore, we have 
  \[
    \abs{ \Re \log(1 + e^z) }
    = \log \abs{ 1 + e^z }
    \le \log (1 + e^a)
    \le \log 2 + |a|. 
  \]
  For the imaginary part, we have 
  \[
    \abs{ \Im \log(1 + e^z) }
    = \abs{ \mrm{Arg}\, (1 + e^z) }
    \le |b|
    \le \pi/2.
  \]
  As a result, on the $\pi/2$-strip, we have 
  \[
    \abs{ \log(1 + e^z) }
    \le \sqrt{ (\log 2 + |a|)^2 + (\pi/2)^2 }
    \le \pi/2 + \log 2 + |a|.
  \]
  Hence, softplus satisfies the strip-growth condition with $K = 4$ and $m = 1$. 

  Finally, consider the GELU activation $\mrm{GELU}(x)=x\Phi(x)$.
  Since $w \mapsto e^{-w^2/2}$ is entire, so is $\Phi(z) = 1/2 + (2\pi)^{-1/2} \int_0^z e^{-w^2/2} \,\rd w$. 
  Hence, $\mrm{GELU}$ can be analytically continued to any strip. 
  For simplicity, take the $\pi$-strip. 
  For $z = a + i b$ with $a, b \in \R, |b| \le \pi/4$, to bound $\Phi(z)$, we first integrate $e^{-w^2/2}$ from $0$ to $a$ and then from $a$ to $a + i b$. 
  This gives 
  \begin{align*}
    \abs{ \Phi(z) }
    &\le \frac{1}{2} 
      + \frac{1}{\sqrt{2\pi}} \int_0^{|a|} e^{-t^2/2} \,\rd t 
      + \frac{1}{\sqrt{2\pi}} \int_0^{|b|} \abs{ e^{ -(a + i s)^2 / 2 } } \,\rd s  \\
    &\le 1 + \frac{1}{\sqrt{2\pi}} \int_0^{\pi/2} e^{s^2/2} \,\rd s 
    \le 4.
  \end{align*}
  As a result, we have 
  \[
    \abs{ \mrm{GELU}(z) }
    \le 4 |z| 
    \le 4 \left( 1 + |a|  \right).
  \]
  Thus, it satisfies the strip-growth condition with $K = 4$ and $m = 1$.
\end{proof}

\section{Lower bound proofs}
\label{appendix: lower bound proofs}

In this section, we prove Theorem~\ref{thm: lower bound against constant-depth networks}. 
As discussed in the proof sketch, it suffices to show that any network satisfying Assumption~\ref{assumption: lower bound assumption} can be approximated by a polynomial of degree at most $d^{1/2}$.
Formally, we will prove the following lemma and then derive Theorem~\ref{thm: lower bound against constant-depth networks} as a corollary of it. 

\LBApproxFwithPoly*

The rest of this section is organized as follows. 
First, we show that the true intermediate representations $\x\ps{l}$ are subgaussian in Appendix~\ref{subsec: lb: true intermediate repr}. 
Then, in Appendix~\ref{subsec: lb: approx intermediate repr}, we prove Lemma~\ref{lemma: lb: approx f with poly} by approximating each layer of the learner network with a polynomial. 
Finally, in Appendix~\ref{subsec: lb: proof of the main theorem}, we prove Theorem~\ref{thm: lower bound against constant-depth networks}.

\subsection{Tail bounds for the intermediate representations}
\label{subsec: lb: true intermediate repr}

In this subsection, we show that the $\x\ps{l}$ are subgaussian.
The Gaussian Lipschitz-concentration argument does not apply here because vectors such as $\W\ps{1}\x\ps{1}$ are subgaussian but need not be Gaussian.
Instead, we use the following adaptive version of McDiarmid's inequality.

\begin{lemma}[Theorem 6.5 of \cite{boucheron_concentration_2013}]
  \label{lemma: bounded difference inequality}
  Let $\cX$ be the domain of each input coordinate. 
  Suppose that $f: \cX^d \to \R$ satisfies the following $\x$-dependent bounded-difference property: 
  there exist $c_1, \dots, c_d: \cX^{d-1} \to [0, \infty)$ such that for any $\x \in \cX^d$, 
  \[
    \sup_{y, y'\in \cX}
      \abs{ 
        f(x_1, \dots, x_{i-1}, y, x_{i+1}, \dots, x_d) 
        - f(x_1, \dots, x_{i-1}, y', x_{i+1}, \dots, x_d) 
      } 
    \le c_i( \x_{-i} ),
    \quad \forall i \in [d], 
  \]
  where $\x_{-i}$ denotes the vector obtained by removing the $i$-th coordinate of $\x$. 
  If $(1/4) \sum_{i=1}^{d} c_i^2(\x_{-i}) \le v$ for some $v > 0$, then we have 
  \[
    \P_{\x}\left[ \abs{ f(\x) - \E f(\x) } \ge t \right]
    \le 2 e^{ -t^2 / (2v) }, \quad 
    \forall t > 0,
  \]
  where $\x \in \cX^d$ is any random vector with independent entries. 
\end{lemma}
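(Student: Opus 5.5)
The plan is the entropy method: a Herbst argument driven by tensorization of entropy. The point is that $c_i(\x_{-i})$ does not depend on $x_i$, so conditionally on $\x_{-i}$ the map $x_i \mapsto f(\x)$ takes values in an interval of length $c_i(\x_{-i})$. Hoeffding-type control is therefore available coordinate by coordinate, and only the final summation needs the hypothesis $\frac14\sum_i c_i^2 \le v$, which holds pointwise.

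\textbf{Step 1 (one-dimensional entropy bound).} Let $Y$ be a random variable supported in an interval of length $c$, and put $\psi(\lambda) := \log \E e^{\lambda Y}$. Write $\mathrm{Ent}(Z) := \E Z\log Z - \E Z \log \E Z$. I will show
\[
  \mathrm{Ent}(e^{\lambda Y}) \le \frac{\lambda^2 c^2}{8}\, \E e^{\lambda Y}.
\]
The derivation has three parts. First, a direct computation gives $\mathrm{Ent}(e^{\lambda Y})/\E e^{\lambda Y} = \lambda\psi'(\lambda)-\psi(\lambda)$. Second, since $\psi(0)=0$, this equals $\int_0^\lambda s\,\psi''(s)\,\rd s$. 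Third, $\psi''(s)$ is the variance of $Y$ under the exponentially tilted law $\propto e^{sY}$. That law is still supported in an interval of length $c$, so its variance is at most $c^2/4$. Integrating gives $\int_0^\lambda s \cdot c^2/4\, \rd s = \lambda^2 c^2/8$.

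\textbf{Step 2 (tensorization).} Since $\x$ has independent entries, subadditivity of entropy gives
\[
  \mathrm{Ent}(e^{\lambda f}) \le \E \sum_{i=1}^d \mathrm{Ent}_i(e^{\lambda f}),
\]
where $\mathrm{Ent}_i$ is the entropy with respect to $x_i$ with $\x_{-i}$ fixed. Apply Step~1 conditionally, with $c = c_i(\x_{-i})$. This bounds each term by $\frac{\lambda^2}{8} c_i(\x_{-i})^2\,\E_i e^{\lambda f}$. Because $c_i$ is measurable in $\x_{-i}$ alone, I can pull it inside the conditional expectation and sum:
\[
  \mathrm{Ent}(e^{\lambda f}) \le \frac{\lambda^2}{8}\E\Big[\Big(\sum_i c_i^2(\x_{-i})\Big) e^{\lambda f}\Big] \le \frac{\lambda^2 v}{2}\,\E e^{\lambda f}.
\]
The last inequality uses the pointwise bound $\sum_i c_i^2 \le 4v$ from the hypothesis. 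This is exactly where the adaptive (x-dependent) version costs nothing: the bound only needs to hold almost surely, not in a worst-case sense.

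\textbf{Step 3 (Herbst argument and Chernoff bound).} Set $F(\lambda) := \lambda^{-1}\log\E e^{\lambda(f-\E f)}$. Step~2 translates into $F'(\lambda) \le v/2$, and $F(0^+) = 0$. Integrating gives $\log \E e^{\lambda(f-\E f)} \le \lambda^2 v/2$ for all $\lambda>0$. The same holds for $-f$, since the hypothesis is symmetric. Markov's inequality with the choice $\lambda = t/v$ then gives $e^{-t^2/(2v)}$ for each tail, and a union bound over the two tails yields the factor $2$.

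The main obstacle is Step~1, specifically identifying $\psi''$ as a tilted variance and bounding it by $c^2/4$. Everything else is the standard entropy-method machinery, and it goes through unchanged because $c_i$ never depends on the coordinate being resampled.
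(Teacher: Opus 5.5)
Your proof is correct. It is the standard entropy-method argument (tensorization of entropy, the conditional Hoeffding-type bound $\mathrm{Ent}(e^{\lambda Y}) \le \frac{\lambda^2 c^2}{8}\E e^{\lambda Y}$, then Herbst and Chernoff) behind the cited result of Boucheron--Lugosi--Massart; the paper gives no proof beyond that citation. Your key observation, that $c_i$ depends only on $\x_{-i}$ and so can be moved through the conditional expectation over $x_i$ before the pointwise bound $\sum_i c_i^2 \le 4v$ is applied, is exactly what lets the adaptive version keep the constant $e^{-t^2/(2v)}$.
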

\begin{remark}
  When the domain is $\{\pm 1\}^d$, since $x_i \in \{\pm 1\}$, we can always choose 
  \[
    c_i(\x_{-i}) 
    := \abs{ 
        f(x_1, \dots, x_{i-1}, 1, x_{i+1}, \dots, x_d) 
        - f(x_1, \dots, x_{i-1}, -1, x_{i+1}, \dots, x_d) 
      }
    = \abs{ 
        f(\x) 
        - f(\x_{\oplus i}) 
      }, 
  \]
  where $\x \in \{\pm 1\}^d$ is any vector consistent with $\x_{-i}$ and $\x_{\oplus i}$ denotes the vector obtained by flipping the $i$-th coordinate of $\x$. 
  Therefore, it suffices to upper bound 
  \[
    \sup_{\x \in \{\pm 1\}^d} 
    \frac{1}{4} \sum_{i=1}^{d} \left( f(\x) - f(\x_{\oplus i}) \right)^2
    =: \sup_{\x \in \{\pm 1\}^d} \frac{1}{4} V_f(\x).
  \] 
\end{remark}

To illustrate how this lemma will be used, consider $f_l(\x) := \inprod{\a\ps{l}}{\x\ps{l+1}(\x)}$. 
Write 
\begin{align*}
  V_{f_l}(\x)
  = \sum_{k=1}^d \inprod{\a\ps{l}}{\x\ps{l+1}(\x) - \x\ps{l+1}(\x_{\oplus k}) }^2  
  &= \norm{
      (\a\ps{l})\trans
      \left[ 
        \x\ps{l+1}(\x)
        - \x\ps{l+1}(\x_{\oplus k})
      \right]_{k \in [d]}
    }^2 \\
  &\le \norm{ \a\ps{l} }_2^2
    \norm{ \left[ \x\ps{l+1}(\x) - \x\ps{l+1}(\x_{\oplus k}) \right]_{k \in [d]} }_{\OP}^2. 
\end{align*}
For the second factor, recall that $\x\ps{l+1}(\x) = \bphi_l(\W\ps{l}\x\ps{l}(\x))$. 
Hence, by Taylor expanding $\phi_{l, k}$ at $\inprod{\w\ps{l}_k}{\x\ps{l}}$, we get 
\begin{align*}
  \x\ps{l+1}(\x) - \x\ps{l+1}(\x_{\oplus k}) 
  &= \bphi_l\left( \W\ps{l}\x\ps{l}(\x) \right) 
    - \bphi_l\left( 
      \W\ps{l}\x\ps{l}(\x) 
      + \W\ps{l}\left( \x\ps{l}(\x_{\oplus k}) - \x\ps{l}(\x) \right) 
    \right)  \\ 
  &\approx 
    \diag\left( \bphi_l'\left( \W\ps{l}\x\ps{l}(\x) \right) \right) 
    \W\ps{l}\left( \x\ps{l}(\x_{\oplus k}) - \x\ps{l}(\x) \right), 
\end{align*}
where $\bphi_l' = ( \phi_{l, k}' )_k$. 
As a result, we have 
\begin{align*}
  \norm{ \left[ \x\ps{l+1}(\x) - \x\ps{l+1}(\x_{\oplus k}) \right]_{k \in [d]} }_{\OP}  
  &\lesssim \norm{
      \diag\left( \bphi_l'\left( \W\ps{l}\x\ps{l}(\x) \right) \right) 
      \W\ps{l}
      \left[ \x\ps{l}(\x_{\oplus k}) - \x\ps{l}(\x) \right]_{k \in [d]}
    }_{\OP}  \\
  &\le C_\phi C_W 
    \norm{ \left[ \x\ps{l}(\x_{\oplus k}) - \x\ps{l}(\x) \right]_{k \in [d]} }_{\OP}.
\end{align*}
Note that the last factor is exactly the left-hand side with $l+1$ replaced by $l$. 
Thus, we can iterate the above argument to  obtain an upper bound on the operator norm of $\left[ \x\ps{l+1}(\x) - \x\ps{l+1}(\x_{\oplus k}) \right]_{k \in [d]}$.
Combining this with Lemma~\ref{lemma: bounded difference inequality}, we conclude that $f\ps{l}(\x)$ is subgaussian for every $l \in [L]$.

To formalize the above argument, we first prove the following deterministic lemma. 
\begin{lemma}
  \label{lemma: lb: norm of A(y; M)}
  Let $\y \in \R^{d''}$ and $\M \in \R^{d'' \times d'}$ be arbitrary, and let $\phi_1, \dots, \phi_{d''}: \R \to \R$ be $C^2$ functions satisfying $\norm{\phi_k'}_{L^\infty} \vee \norm{\phi_k''}_{L^\infty} \le C_\phi$ for every $k \in [d'']$.
  Write $\bphi(\y) := (\phi_i(y_i))_{i \in [d'']}$ and define
  \[
    \cA(\y; \M)
    := \begin{bmatrix}
      \bphi( \y + \M_{:, 1} ) - \bphi(\y) &  
      \cdots  & 
      \bphi( \y + \M_{:, d'} ) - \bphi(\y)
    \end{bmatrix} 
    \in \R^{d'' \times d'}.
  \]
  Then, we have 
  \[
    \norm{ \cA(\y; \M) }_{\OP}
    \le C_\phi \norm{ \M }_{\OP} 
      + \frac{C_\phi}{2} \norm{ \M }_{\OP}^2.
  \]
\end{lemma}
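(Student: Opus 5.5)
We split each column by a second-order Taylor expansion, separating a linear part from a remainder. For each column $j \in [d']$ and each coordinate $i \in [d'']$, Taylor's theorem with Lagrange remainder gives
\[
  \phi_i(y_i + M_{ij}) - \phi_i(y_i) = \phi_i'(y_i) M_{ij} + \tfrac{1}{2}\phi_i''(\xi_{ij}) M_{ij}^2
\]
for some $\xi_{ij}$ between $y_i$ and $y_i + M_{ij}$. Hence
\[
  \cA(\y;\M) = \diag\big(\bphi'(\y)\big)\M + \tfrac12 \mbf{E},
  \qquad E_{ij} := \phi_i''(\xi_{ij}) M_{ij}^2 .
\]

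The first term is easy. Since $\norm{\diag(\bphi'(\y))}_{\OP} = \max_i |\phi_i'(y_i)| \le C_\phi$, its operator norm is at most $C_\phi \norm{\M}_{\OP}$.

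The second term is the main obstacle. Its entries are $\phi_i''(\xi_{ij})M_{ij}^2$, where the multipliers $\phi_i''(\xi_{ij})$ vary with both indices. So it is not a diagonal scaling of $\M \odot \M$, and we cannot simply factor out $C_\phi$. We handle it with a bilinear-form argument. For unit vectors $\u \in \R^{d''}$ and $\v \in \R^{d'}$, the bound $|\phi_i''| \le C_\phi$ gives
\[
  |\u\trans \mbf{E}\v| \le C_\phi \sum_{i,j} |u_i|\,M_{ij}^2\,|v_j| .
\]
Apply Cauchy--Schwarz to the summand written as $(|u_i||M_{ij}|)(|M_{ij}||v_j|)$. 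This yields
\[
  \sum_{i,j} |u_i| M_{ij}^2 |v_j| \le \Big(\sum_{i} u_i^2 \sum_j M_{ij}^2\Big)^{1/2}\Big(\sum_j v_j^2 \sum_i M_{ij}^2\Big)^{1/2}
  \le \max_i \norm{\M_{i,:}} \cdot \max_j \norm{\M_{:,j}} .
\]
Every row norm and every column norm of $\M$ is at most $\norm{\M}_{\OP}$. Therefore $\norm{\mbf{E}}_{\OP} \le C_\phi \norm{\M}_{\OP}^2$.

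Combining the two terms by the triangle inequality gives $\norm{\cA(\y;\M)}_{\OP} \le C_\phi\norm{\M}_{\OP} + \frac{C_\phi}{2}\norm{\M}_{\OP}^2$. The only delicate step is the remainder. The key choice there is to bound $\mbf{E}$ through its bilinear form with a weighted Schur-type Cauchy--Schwarz, rather than through a Frobenius norm, which would lose dimension factors.
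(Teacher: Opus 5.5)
Your proof is correct and follows the paper's argument: you use the same Taylor decomposition $\cA(\y;\M) = \diag(\bphi'(\y))\M + \tfrac12 \bR\odot\M^{\odot 2}$ and reduce the remainder to the same absolute-value bilinear form $C_\phi\, |\u|\trans \M^{\odot 2} |\v|$. The only difference is the last step. The paper bounds this by $\norm{\M^{\odot 2}}_{\OP}\le\norm{\M}_{\OP}^2$, a Schur-product inequality it uses without proof, whereas your Cauchy--Schwarz argument proves the needed bound directly and in the slightly sharper form $\max_i\norm{\M_{i,:}}\,\max_j\norm{\M_{:,j}}$.
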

\begin{proof}
  Consider the $(i, j)$ entry of $\cA(\y; \M)$. 
  We have 
  \[
    \phi_i(y_i + M_{i, j}) - \phi_i(y_i)
    = \phi_i'(y_i) M_{i, j} + \frac{R_{i, j}}{2} M_{i, j}^2,
  \]
  where $R_{i, j} \in \R$ satisfies $|R_{i,j}| \le \norm{ \phi_i'' }_{L^\infty} \le C_\phi$. 
  Therefore, we can write $\cA(\y; \M)$ as 
  \[
    \cA(\y; \M)
    = \diag( \bphi'(\y) ) \M + \frac{1}{2} \bR \odot \M^{\odot 2},
  \]
  where $\odot$ denotes the Hadamard product and $\M^{\odot 2}$ denotes the entrywise square.  
  Thus, we have 
  \[
    \norm{ \cA(\y; \M) }_{\OP}
    \le \norm{ \diag(\bphi'(\y)) }_{\OP} \norm{ \M }_{\OP} 
      + \frac{1}{2} \norm{ \bR \odot \M^{\odot 2} }_{\OP} 
    \le C_\phi \norm{ \M }_{\OP} 
      + \frac{1}{2} \norm{ \bR \odot \M^{\odot 2} }_{\OP} .
  \]
  For the second term, note that since $\M^{\odot 2}$ is entrywise nonnegative, for any unit vector $\u, \v$, we have 
  \begin{align*}
    \abs{ \u\trans \left( \bR \odot \M^{\odot 2} \right) \v }
    =\abs{  \sum_{i,j} u_i v_j R_{ij} M_{ij}^2 }
    &\le C_\phi \sum_{i,j} |u_i| |v_j| M_{ij}^2 \\
    &= C_\phi |\u|\trans \M^{\odot 2} |\v|
    \le C_\phi \norm{ \M^{\odot 2} }_{\OP}
    \le C_\phi \norm{ \M }_{\OP}^2.
  \end{align*}
  In other words, $\norm{ \bR \odot \M^{\odot 2} }_{\OP} \le C_\phi \norm{ \M }_{\OP}^2$.
\end{proof}

\begin{lemma}
  \label{lemma: lb: norm of Al}
  Suppose that Assumption~\ref{assumption: lower bound assumption} holds.
  For $l \in [L]$, define 
  \[
    \cA_l 
    := \left[ \x\ps{l+1}( \x_{\oplus k} ) - \x\ps{l+1}( \x ) \right]_{k \in [d]} 
      \in \R^{d_{l+1} \times d}, \quad 
    \cA_0 
    := \left[ \x_{\oplus k} - \x \right]_{k \in [d]} \in \R^{d_1 \times d}. 
  \]
  Then, we have $\norm{\cA_0}_{\OP} = 2$ and 
  \[
    \norm{ \cA_l }_{\OP}
    \le C_\phi C_W \norm{ \cA_{l-1} }_{\OP}
      + \frac{C_\phi C_W^2}{2}  \norm{ \cA_{l-1} }_{\OP}^2, \quad 
    \forall l \in [L].
  \]
  As a corollary, we have $\norm{ \cA_l }_{\OP} \le (4 C_\phi C_W^2)^{2^l}$ for all $l \in [L]$.
\end{lemma}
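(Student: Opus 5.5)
The plan is to apply Lemma~\ref{lemma: lb: norm of A(y; M)} once per layer, with $\y := \W\ps{l}\x\ps{l}(\x)$ and $\M := \W\ps{l}\cA_{l-1}$, and then iterate the resulting recursion.

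\textbf{Base case.} The $k$-th column of $\cA_0$ is $\x_{\oplus k} - \x = -2x_k \e_k$. Hence $\cA_0 = -2\diag(\x)$, and since $\x \in \{\pm 1\}^d$ this gives $\norm{\cA_0}_{\OP} = 2$.

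\textbf{Inductive step.} Fix $l \in [L]$ and $\x$. The $k$-th column of $\cA_l$ is
\[
  \bphi_l\left( \W\ps{l}\x\ps{l}(\x_{\oplus k}) \right) - \bphi_l\left( \W\ps{l}\x\ps{l}(\x) \right).
\]
Writing $\x\ps{l}(\x_{\oplus k}) = \x\ps{l}(\x) + [\cA_{l-1}]_{:,k}$, this column equals
\[
  \bphi_l(\y + \M_{:,k}) - \bphi_l(\y), \qquad \y := \W\ps{l}\x\ps{l}(\x), \quad \M := \W\ps{l}\cA_{l-1}.
\]
So $\cA_l = \cA(\y;\M)$ exactly, and no approximation is needed. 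Here $\cA_{l-1}$ means $[\x\ps{l}(\x_{\oplus k}) - \x\ps{l}(\x)]_k$, which for $l=1$ is $\cA_0$ because $\x\ps{1} = \x$.

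Assumption~\ref{assumption: lower bound assumption}\ref{itm: lb: sigma derivatives bounds} supplies the derivative bounds that Lemma~\ref{lemma: lb: norm of A(y; M)} requires. Combined with $\norm{\M}_{\OP} \le C_W\norm{\cA_{l-1}}_{\OP}$, that lemma gives
\[
  \norm{\cA_l}_{\OP} \le C_\phi C_W \norm{\cA_{l-1}}_{\OP} + \tfrac{C_\phi}{2} C_W^2 \norm{\cA_{l-1}}_{\OP}^2,
\]
which is the stated recursion.

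\textbf{Corollary.} Set $B := 4C_\phi C_W^2 \ge 4$ and $a_l := \norm{\cA_l}_{\OP}$. Since $C_W \le C_W^2$, the recursion gives $a_l \le C_\phi C_W^2 (a_{l-1} + a_{l-1}^2/2)$. Whenever $a_{l-1} \ge 2$ we have $a_{l-1} \le a_{l-1}^2/2$, so $a_l \le C_\phi C_W^2 a_{l-1}^2 = (B/4)\,a_{l-1}^2$. Whenever $a_{l-1} \le 2$, we get $a_l \le 4C_\phi C_W^2 = B$ directly, which is already at most $B^{2^l}$.

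Now argue by induction.
\begin{itemize}
  \item For $l = 1$: $a_1 \le C_\phi C_W^2(2+2) = B \le B^{2}$.
  \item If $a_{l-1} \le B^{2^{l-1}}$, then either $a_l \le B$, or $a_l \le (B/4)\,B^{2^l} \le B^{2^l}$.
\end{itemize}
The only delicacy is this case split on whether $a_{l-1}$ exceeds $2$, so the linear term is absorbed into the quadratic one. With the crude constant $4$ it closes without difficulty.
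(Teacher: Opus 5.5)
Your base case ($\cA_0 = -2\diag(\x)$) and inductive step are correct and are exactly the paper's argument: the identity $\cA_l = \cA(\y;\M)$ with $\y = \W\ps{l}\x\ps{l}(\x)$ and $\M = \W\ps{l}\cA_{l-1}$, then Lemma~\ref{lemma: lb: norm of A(y; M)} together with $\norm{\M}_{\OP} \le C_W \norm{\cA_{l-1}}_{\OP}$.

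The gap is in the last line of the corollary. From $a_{l-1} \le B^{2^{l-1}}$ and $a_l \le (B/4)a_{l-1}^2$ you get $a_l \le (B/4)B^{2^l}$, and you then assert $(B/4)B^{2^l} \le B^{2^l}$. But $B/4 = C_\phi C_W^2 \ge 1$, so this inequality goes the wrong way; it holds only when $C_\phi = C_W = 1$. The hypothesis $a_{l-1} \le B^{2^{l-1}}$ does not reproduce itself. Each squaring step picks up a fresh prefactor $C_\phi C_W^2$, and the target $B^{2^l}$ leaves no slack for it within a single step. So your remark that the crude constant $4$ makes it close is exactly where it fails.

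The fix is to carry that prefactor in the induction hypothesis. Keeping your own case split, prove instead that $a_l \le 4B^{2^l-1}$ for all $l \ge 0$.
\begin{itemize}
\item At $l=0$ it holds, since $a_0 = 2 \le 4$.
\item If $a_{l-1} \ge 2$, then $a_l \le (B/4)\cdot 16B^{2^l-2} = 4B^{2^l-1}$.
\item If $a_{l-1} < 2$, then $a_l \le B \le 4B^{2^l-1}$ for $l \ge 1$.
\end{itemize}
Since $B \ge 4$, this gives $a_l \le B^{2^l}$.

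This is what the paper's recursion gives once unrolled. With $b_l := a_l \vee 1$ it uses $b_l \le 2C_\phi C_W^2\, b_{l-1}^2$, so $b_l \le (2C_\phi C_W^2)^{1+2+\cdots+2^{l-1}}\, 2^{2^l} = (2C_\phi C_W^2)^{2^l-1}\, 2^{2^l} \le (4C_\phi C_W^2)^{2^l}$. The exponent $2^l-1$, rather than $2^l$, on the prefactor is precisely the room your one-step induction lacks.
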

\begin{proof}
  Recall that $\x\ps{l+1}(\x) := \bphi_l(\W\ps{l}\x\ps{l}(\x))$. Write 
  \begin{align*}
    \cA_l 
    &= \left[ 
        \bphi_l\left( \W\ps{l}\x\ps{l}(\x_{\oplus k}) \right) 
        - \bphi_l\left( \W\ps{l}\x\ps{l}(\x) \right) 
      \right]_{k \in [d]} \\
    &= \left[ 
        \bphi_l\left( 
          \W\ps{l}\x\ps{l}(\x) 
          + \W\ps{l} \left( \x\ps{l}(\x_{\oplus k}) - \x\ps{l}(\x) \right)
        \right) 
        - \bphi_l\left( \W\ps{l}\x\ps{l}(\x) \right) 
      \right]_{k \in [d]}. 
  \end{align*}
  Apply Lemma~\ref{lemma: lb: norm of A(y; M)} with $\y = \W\ps{l}\x\ps{l}(\x)$ and $\M = \W\ps{l} \left( \x\ps{l}(\x_{\oplus k}) - \x\ps{l}(\x) \right)_k$, note that
  \[
    \norm{ \M }_{\OP}
    \le \norm{\W\ps{l}}_{\OP} \norm{ \left[ \x\ps{l}(\x_{\oplus k}) - \x\ps{l}(\x) \right]_k }_{\OP}
    \le C_W \norm{ \cA_{l-1} }_{\OP},
  \]
  and we obtain 
  \begin{align*}
    \norm{ \cA_l }_{\OP}
    &\le C_\phi \norm{\W\ps{l}}_{\OP} \norm{ \cA_{l-1} }_{\OP}
      + \frac{\norm{\phi_l''}_{L^\infty}}{2} \norm{\W\ps{l}}_{\OP}^2 \norm{ \cA_{l-1} }_{\OP}^2 \\
    &\le C_\phi C_W \norm{ \cA_{l-1} }_{\OP}
      + \frac{C_\phi C_W^2}{2}  \norm{ \cA_{l-1} }_{\OP}^2. 
  \end{align*}
  In particular, when $\norm{\cA_{l-1}}_{\OP} \ge 1$, the above bound implies 
  \[
    \norm{ \cA_l }_{\OP}
    \le 2 C_\phi C_W^2 \norm{ \cA_{l-1} }_{\OP}^2.
  \]
  When $\norm{\cA_{l-1}}_{\OP} \le 1$, it suffices to replace $\norm{ \cA_{l-1} }_{\OP}$ with $\norm{ \cA_{l-1} }_{\OP} \vee 1$ in the above recurrence.
  Thus, we have $\norm{ \cA_l }_{\OP} \le (4 C_\phi C_W^2)^{2^l}$. 
\end{proof}

\begin{lemma}[Subgaussianity of the intermediate representations]
  \label{lemma: lb: tail bounds for the intermediate representations}
  Suppose that Assumption~\ref{assumption: lower bound assumption} holds.
  Then, for any $l \in [L]$ and $\a \in \S^{d_{l+1}-1}$, we have, for every $t \ge 0$,
  \[
    \P\left[
      \abs{ \inprod{\a}{\x\ps{l+1}(\x)} - \E\inprod{\a}{\x\ps{l+1}(\x)} }
      \ge t
    \right]
    \le 2 \exp\left( - \frac{t^2}{ (4 C_\phi C_W^2)^{2^{l+1}} } \right)
    \le 2 \exp\left( - \frac{t^2}{ 2 \sigma_X^2 } \right), 
  \]
  where $\sigma_X = (4 C_\phi C_W^2)^{2^L}$.
  In other words, $\x\ps{l+1} - \E \x\ps{l+1}$ has a $\sigma_X^2$-subgaussian tail. 
\end{lemma}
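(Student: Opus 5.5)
We apply the adaptive bounded-difference inequality (Lemma~\ref{lemma: bounded difference inequality}) to the scalar function $g(\x) := \inprod{\a}{\x\ps{l+1}(\x)}$ on $\{\pm 1\}^d$, with $\a \in \S^{d_{l+1}-1}$ fixed. By the remark following that lemma, it suffices to bound $\sup_{\x} \frac14 V_g(\x)$, where $V_g(\x) = \sum_{k=1}^d \big(g(\x) - g(\x_{\oplus k})\big)^2$.

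First we rewrite this sum as a norm. The vector $\big(g(\x_{\oplus k}) - g(\x)\big)_{k \in [d]}$ equals $\cA_l\trans \a$, where $\cA_l$ is the matrix from Lemma~\ref{lemma: lb: norm of Al}. Hence
\[
  V_g(\x) = \norm{\cA_l\trans \a}^2 \le \norm{\a}^2 \norm{\cA_l}_{\OP}^2 = \norm{\cA_l}_{\OP}^2 .
\]
Lemma~\ref{lemma: lb: norm of Al} gives $\norm{\cA_l}_{\OP} \le (4 C_\phi C_W^2)^{2^l}$. This bound is deterministic and uniform in $\x$, because Lemma~\ref{lemma: lb: norm of A(y; M)} uses only the global $L^\infty$ bounds on $\phi'$ and $\phi''$ and the operator-norm bound on $\W\ps{l}$. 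Therefore we may take
\[
  v := \tfrac14 (4 C_\phi C_W^2)^{2^{l+1}}
\]
for every $\x$.

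Next we invoke Lemma~\ref{lemma: bounded difference inequality}, which applies since $\x$ has independent uniform $\pm1$ entries. It yields
\[
  \P\big[\abs{g - \E g} \ge t\big] \le 2\exp\left(-\frac{t^2}{2v}\right) = 2\exp\left(-\frac{2t^2}{(4C_\phi C_W^2)^{2^{l+1}}}\right),
\]
which is at least as strong as the first claimed bound.

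For the second inequality we need $(4C_\phi C_W^2)^{2^{l+1}} \le 2\sigma_X^2 = 2(4C_\phi C_W^2)^{2^{L+1}}$. This holds because $l \le L$ and $4C_\phi C_W^2 \ge 4 > 1$, using $C_\phi, C_W \ge 1$. Since $\a$ was an arbitrary unit vector, we conclude that $\x\ps{l+1} - \E\x\ps{l+1}$ has a $\sigma_X^2$-subgaussian tail.

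No step is a real obstacle; the only point needing care is that the bound on $\norm{\cA_l}_{\OP}$ is uniform over $\x$, which is what the adaptive McDiarmid inequality requires, and the proof of Lemma~\ref{lemma: lb: norm of Al} indeed holds pointwise for every $\x$.
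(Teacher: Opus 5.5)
Your proposal is correct and is essentially the paper's proof: you bound $V_g(\x) = \norm{\cA_l\trans \a}^2 \le \norm{\cA_l}_{\OP}^2 \le (4 C_\phi C_W^2)^{2^{l+1}}$ uniformly in $\x$ via Lemma~\ref{lemma: lb: norm of Al}, and then you invoke the adaptive bounded-difference inequality (Lemma~\ref{lemma: bounded difference inequality}). Your explicit choice $v = \frac14 (4 C_\phi C_W^2)^{2^{l+1}}$ gives an exponent twice as large as the one stated, and your check of the second inequality via $l \le L$ and $4 C_\phi C_W^2 \ge 4$ is exactly what is needed.
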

\begin{proof}
  By the argument following Lemma~\ref{lemma: bounded difference inequality}, we have 
  \[
    V_{\inprod{\a}{\x\ps{l+1}}}(\x)
    \le \norm{\a}^2 \norm{ \left[ \x\ps{l+1}(\x) - \x\ps{l+1}(\x_{\oplus k}) \right]_{k \in [d]} }_{\OP}^2 
    \le (4 C_\phi C_W^2)^{2^{l+1}}, 
  \]
  where the second inequality comes from Lemma~\ref{lemma: lb: norm of Al}. 
  To complete the proof, it suffices to invoke Lemma~\ref{lemma: bounded difference inequality}. 
\end{proof}

We close this subsection with the following estimate on $\E \x\ps{l+1}(\x)$.

\begin{lemma}
  \label{lemma: lb: | E xl+1 | <=}
  Suppose that Assumption~\ref{assumption: lower bound assumption} holds.
  Then, for any $l \in [L]$, we have 
  \[
    \norm{ \E \x\ps{l+1} }
    \le 2 L \sigma_X (C_\phi C_W)^L \sqrt{d_{\max}}
    =: C_X \sqrt{d_{\max}}, 
  \]
  where $C_X := 2 L \sigma_X (C_\phi C_W)^L$.
\end{lemma}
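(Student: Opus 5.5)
We argue by induction on $l$. Let $\u_l := \E \x\ps{l}$, with $\u_1 = \E\x = \mbf{0}$, and let $\bar{\z}\ps{l} := \W\ps{l}\u_l$ be the preactivations evaluated at the mean input. The idea is to compare $\E\x\ps{l+1} = \E\bphi_l(\W\ps{l}\x\ps{l})$ with $\bphi_l(\bar\z\ps{l})$, and then to compare $\bphi_l(\bar\z\ps{l})$ with $\bphi_l(\mbf{0})$.

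\textbf{Step 1: fluctuation term.} Jensen's inequality and the Lipschitz bound $\norm{\phi_{l,k}'}_{L^\infty} \le C_\phi$ give
\[
  \norm{ \E\bphi_l(\W\ps{l}\x\ps{l}) - \bphi_l(\bar\z\ps{l}) }
  \le C_\phi \, \E \norm{ \W\ps{l}(\x\ps{l} - \u_l) }
  \le C_\phi \sqrt{ \E\norm{\W\ps{l}(\x\ps{l}-\u_l)}^2 }.
\]
The second moment is a sum over the $d_{l+1}$ rows of $\W\ps{l}$. Each row $\w\ps{l}_k$ has norm at most $C_W$, so applying Lemma~\ref{lemma: lb: tail bounds for the intermediate representations} at layer $l-1$ in the direction $\w\ps{l}_k/\norm{\w\ps{l}_k}$ bounds each summand by $\lesssim C_W^2\sigma_X^2$. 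This yields $\E\norm{\W\ps{l}(\x\ps{l}-\u_l)}^2 \lesssim d_{\max} C_W^2 \sigma_X^2$, and the fluctuation term is therefore $\lesssim C_\phi C_W \sigma_X \sqrt{d_{\max}}$. For $l = 1$ the input is $\x$ itself, which is $1$-subgaussian, so the same bound holds.

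\textbf{Step 2: deterministic term.} Since $|\phi_{l,k}(0)| \le C_\phi$,
\[
  \norm{\bphi_l(\bar\z\ps{l})} \le \norm{\bphi_l(\mbf 0)} + C_\phi\norm{\bar\z\ps{l}} \le C_\phi\sqrt{d_{\max}} + C_\phi C_W \norm{\u_l}.
\]

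\textbf{Step 3: recurrence.} Combining the two steps gives
\[
  \norm{\u_{l+1}} \le C_\phi C_W \norm{\u_l} + \bigl(C_\phi + 2 C_\phi C_W \sigma_X\bigr)\sqrt{d_{\max}},
\]
with $\u_1 = \mbf 0$. Unrolling over at most $L$ layers, and using $C_\phi, C_W, \sigma_X \ge 1$, gives
\[
  \norm{\u_{l+1}} \le \sum_{j<L} (C_\phi C_W)^{j}\cdot 2C_\phi C_W\sigma_X\sqrt{d_{\max}} \le 2L\sigma_X (C_\phi C_W)^L \sqrt{d_{\max}},
\]
which is the claimed constant $C_X$.

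\textbf{Main obstacle.} The delicate point is the per-coordinate second moment in Step 1. The fact that a centered linear functional has a $\sigma_X^2$-subgaussian tail, together with the tail bound $2e^{-t^2/(2\sigma_X^2)}$, gives variance at most $4\sigma_X^2$, so a factor of $2$ is lost. The stated constant $2$ in $C_X$ should absorb this because $\sigma_X$ carries the much larger exponent. If the constants do not close, I would instead bound the variance directly from the bounded-difference argument. The Efron–Stein inequality gives $\mathrm{Var} \le \tfrac14 \E V_f \le \tfrac14 (4C_\phi C_W^2)^{2^{l}}\le \sigma_X^2$ via Lemma~\ref{lemma: lb: norm of Al}, which avoids the loss.
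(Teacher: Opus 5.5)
Your proposal is correct and follows the paper's proof essentially step for step. It uses the same split of $\E\x\ps{l+1}$ into the fluctuation term $\E\bphi_l(\W\ps{l}\x\ps{l}) - \bphi_l(\W\ps{l}\E\x\ps{l})$ and the deterministic term $\bphi_l(\W\ps{l}\E\x\ps{l})$, the same recurrence $\norm{\E\x\ps{l+1}} \le 2C_\phi C_W\sigma_X\sqrt{d_{\max}} + C_\phi C_W\norm{\E\x\ps{l}}$ with $\E\x\ps{1} = \mbf{0}$, and the same unrolling. The paper bounds the fluctuation by $C_\phi\norm{\W\ps{l}}_{\OP}\E\norm{\x\ps{l} - \E\x\ps{l}}$ rather than row by row, which makes no difference.

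The only slip is bookkeeping: the additive constant in your Step~3 should be $C_\phi + C_\phi C_W\sigma_X \le 2C_\phi C_W\sigma_X$, not $C_\phi + 2C_\phi C_W\sigma_X$. The corrected constant is what your unrolled sum actually uses, and your Efron--Stein variance bound justifies it; as written, the constant would only give $3$ in place of $2$.
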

\begin{proof}
  Recall that $\x\ps{l+1} = \bphi_l\left( \W\ps{l}\x\ps{l} \right)$.
  First, by the Lipschitzness of $\bphi_l$, we have 
  \[
    \norm{ \bphi_l\left( \W\ps{l}\x\ps{l} \right) - \bphi_l\left( \W\ps{l} \E[\x\ps{l}] \right) }
    \le C_\phi C_W \norm{ \x\ps{l} - \E[\x\ps{l}] }.
  \]
  Therefore, 
  \begin{align*}
    \norm{ \E \x\ps{l+1} }
    &\le \norm{ \E \bphi_l\left( \W\ps{l}\x\ps{l} \right) - \bphi_l\left( \W\ps{l} \E[\x\ps{l}] \right) }
      + \norm{ \bphi_l\left( \W\ps{l} \E[\x\ps{l}] \right) } \\
    &\le C_\phi C_W \E \norm{ \x\ps{l} - \E[\x\ps{l}] }
      + \norm{ \bphi_l\left( \W\ps{l} \E[\x\ps{l}] \right) }. 
  \end{align*}
  For the first term, by Lemma~\ref{lemma: lb: tail bounds for the intermediate representations}, we have $C_\phi C_W \E \norm{ \x\ps{l} - \E[\x\ps{l}] } \le C_\phi C_W \sigma_X \sqrt{d_{\max}}$.
  For the second term, by Assumption~\ref{assumption: lower bound assumption}, we have 
  \[
    \norm{ \bphi_l\left( \W\ps{l} \E[\x\ps{l}] \right) } 
    \le \norm{ \bphi_l( 0 ) } + C_\phi C_W \norm{ \E[\x\ps{l}] }
    \le C_\phi \sqrt{d_{\max}} + C_\phi C_W \norm{ \E[\x\ps{l}] }. 
  \]
  As a result, we have 
  \[
    \norm{ \E \x\ps{l+1} }
    \le 2 C_\phi C_W \sigma_X \sqrt{d_{\max}}
      + C_\phi C_W \norm{ \E[\x\ps{l}] }, \quad 
    \norm{ \E \x\ps{1} } = \norm{\E\x} = 0.
  \]
  Solving the above recurrence completes the proof.
\end{proof}

\subsection{Approximating the network}
\label{subsec: lb: approx intermediate repr}

\paragraph{Notation.}
We use $\norm{\cdot}_{L^{p, q}}$ to denote the following norm. 
Let $\h: \R^d \to \R^{d'}$ be an arbitrary (integrable) vector-valued function. 
For any $p, q \in [1, \infty]$, we define 
\[
  \norm{ \h }_{L^{p, q}}
  = \norm{ \norm{ \h(\x) }_p }_{L^q_{\x}}
  = \left( \E \norm{ \h(\x) }_p^q \right)^{1/q}.
\]
That is, $\norm{ \h }_{L^{p, q}}$ is the $L^q$ norm of the $\ell_p$-norm of $\h$. 
When $p = 2$, we will simply write $\norm{\h}_{L^q}$. 

Recall from Section~\ref{sec: lower bound} that our approximating network $\tilde{f}_L$ is constructed by replacing each activation function $\phi_{l, k}$ with its degree-$p_l$ Chebyshev approximation $\tilde{\phi}_{l, k}$ over the interval $I\ps{l}_k := [ \bar{z}\ps{l}_k - R_l, \bar{z}\ps{l}_k + R_l ]$, where $\bar{\z}\ps{l} := \W\ps{l}\E\x\ps{l}$ and $p_l$ is chosen so that the $L^\infty$-difference over $I\ps{l}_k$ is bounded by $\eps_\phi$. 
In this subsection, we show that we can choose $(R_l)_l$ and $\eps_\phi$ so that $f_L \approx \tilde{f}_L$ and $p_l = d^{o(1)}$. 
We will use Lemma~\ref{lemma: lb: tail bounds for the intermediate representations} and Lemma~\ref{lemma: lb: | E xl+1 | <=} extensively in this subsection. 
For convenience, we restate them here:
Under Assumption~\ref{assumption: lower bound assumption}, we have, for every $l \in [L]$, 
\begin{equation}
  \label{eq: lb: properties of xl+1}
  \x\ps{l+1} - \E \x\ps{l+1} \text{ has a $\sigma_X^2$-subgaussian tail and } 
  \norm{ \E\x\ps{l+1} } \le C_X \sqrt{d_{\max}}, 
\end{equation}
where $\sigma_X := (4 C_\phi C_W^2)^{2^L}$ and $C_X := 2 L \sigma_X (C_\phi C_W)^L$.

As discussed in the proof sketch, we will use a layerwise argument. 
Formally, let $\cG_l$ denote the event that in the approximating network, all preactivations at layer $l$ lie in their corresponding approximation intervals $I\ps{l}_k$. 
That is, 
\[
  \cG_l 
  := \braces{
    \x \in \{\pm 1\}^d 
    \,:\,
    \max_{k \in [d_{l+1}]} \abs{ \inprod{\w\ps{l}_k}{\tilde{\x}\ps{l}(\x)} - \bar{z}\ps{l}_k } \le R_l
  }.
\]
For notational simplicity, we also define $\cG_{< l} = \cG_1 \cap \cdots \cap \cG_{l-1}$ and $\cG_{\le l} = \cG_{< l} \cap \cG_l$. 
Then, we define $\cB_l$ to be the event that the first failure happens at layer $l$. 
Namely, 
\[
  \cB_l 
  := \cG_{< l} \cap \cG_l^c 
  = \cG_1 \cap \cdots \cap \cG_{l-1} \cap \cG_l^c. 
\]
Note that the input space can be decomposed as $\{\pm 1\}^d = \cB_1 \cup \cdots \cup \cB_l \cup \cG_{\le l}$ for every $l \in [L]$.
Now, we decompose the error according to this decomposition of input space. 

\begin{lemma}
  \label{lemma: lb: error decomposition}
  Suppose that Assumption~\ref{assumption: lower bound assumption} holds.
  Then, we have   
  \[
    \norm{ f_L - \tilde{f}_L }_{L^2}
    \le L (C_\phi C_W)^L \sqrt{d_{\max}} \eps_\phi 
      + \sum_{l=1}^{L} \P[\cB_l]^{1/4} \norm{ f_L }_{L^4}
      + \sum_{l=1}^{L} \P[\cB_l]^{1/4} \norm{ \tilde{f}_L }_{L^4|\cG_{<l}}
  \]
\end{lemma}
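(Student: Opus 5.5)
We split the expectation according to the partition $\{\pm 1\}^d = \cB_1 \cup \cdots \cup \cB_L \cup \cG_{\le L}$. Writing $\indi_E$ for indicators, we have
\[
  \norm{ f_L - \tilde f_L }_{L^2}
  \le \norm{ (f_L - \tilde f_L)\indi_{\cG_{\le L}} }_{L^2}
    + \sum_{l=1}^L \norm{ f_L \indi_{\cB_l} }_{L^2}
    + \sum_{l=1}^L \norm{ \tilde f_L \indi_{\cB_l} }_{L^2}.
\]
The last two sums are handled by Cauchy--Schwarz: $\E[g^2 \indi_{\cB_l}] \le \P[\cB_l]^{1/2} (\E[g^4\indi_{\cB_l}])^{1/2}$. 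For $g = f_L$ we bound $\E[f_L^4 \indi_{\cB_l}] \le \E f_L^4$, which gives $\P[\cB_l]^{1/4}\norm{f_L}_{L^4}$. For $g = \tilde f_L$ we use $\cB_l \subset \cG_{<l}$, so that $\E[\tilde f_L^4 \indi_{\cB_l}] \le \E[\tilde f_L^4 \indi_{\cG_{<l}}]$. Interpreting $\norm{\cdot}_{L^4|\cG_{<l}}$ as the $L^4$ norm restricted to the event $\cG_{<l}$, i.e. $(\E[\cdot^4 \indi_{\cG_{<l}}])^{1/4}$, this gives the third term. If instead it denotes the conditional norm, there is an extra factor $\P[\cG_{<l}]^{1/4}\le 1$, which only helps. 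I would fix the convention so that the inequality holds either way.

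The main step is the good-event term. On $\cG_{\le L}$, every approximate preactivation $\inprod{\w\ps{l}_k}{\tilde\x\ps{l}}$ lies in $I\ps{l}_k$, so $|\phi_{l,k} - \tilde\phi_{l,k}| \le \eps_\phi$ at these points. We prove by induction on $l$ that, pointwise on $\cG_{\le L}$,
\[
  e_{l+1} := \norm{ \x\ps{l+1} - \tilde\x\ps{l+1} }
  \le C_\phi C_W\, e_l + \sqrt{d_{l+1}}\,\eps_\phi, \qquad e_1 = 0.
\]
This follows from the triangle-inequality split in the proof sketch. The first piece, $\bphi_l(\W\ps{l}\x\ps{l}) - \bphi_l(\W\ps{l}\tilde\x\ps{l})$, is bounded by $C_\phi C_W e_l$ using Assumption~\ref{assumption: lower bound assumption}(b),(c). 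The second piece, $\bphi_l(\W\ps{l}\tilde\x\ps{l}) - \tilde\bphi_l(\W\ps{l}\tilde\x\ps{l})$, has every entry at most $\eps_\phi$ in absolute value, so its $\ell_2$ norm is at most $\sqrt{d_{\max}}\,\eps_\phi$.

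Unrolling the recursion, using $C_\phi C_W \ge 1$, gives $e_{L+1} \le L (C_\phi C_W)^{L}\sqrt{d_{\max}}\,\eps_\phi$. Since $\norm{\a}\le 1$, we get $|f_L - \tilde f_L| \le e_{L+1}$ pointwise on $\cG_{\le L}$. The $L^2$ norm of this term is therefore at most the same bound, which matches the first term of the statement.

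The only delicate point is that the Chebyshev guarantee is evaluated at the approximate preactivations. This is exactly what the definition of $\cG_l$ controls, since it is stated in terms of $\tilde\x\ps{l}$. With that observed, no real obstacle remains beyond careful bookkeeping of the constants in the recursion.
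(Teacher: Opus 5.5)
Your proposal is correct and follows the paper's proof. Both use the same partition and triangle-inequality split. Both use the same Cauchy--Schwarz step with $\cB_l \subset \cG_{<l}$; the paper takes the conditional-norm convention and drops the factor $\P[\cG_{<l}]^{1/4} \le 1$, exactly as you note. For the good-event term, your layerwise recursion $e_{l+1} \le C_\phi C_W e_l + \sqrt{d_{l+1}}\,\eps_\phi$ is the unrolled form of the paper's hybrid telescoping sum over $\hat{\x}\ps{L+1, l}$, and it relies on the Lipschitzness of the exact activations in the same way.
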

\begin{proof}
  Since $\{ \pm 1 \}^d = \cB_1 \cup \cdots \cup \cB_L \cup \cG_{\le L} $, we have 
  \[
    \norm{ f_L - \tilde{f}_L }_{L^2}
    \le \norm{  \left( f_L - \tilde{f}_L \right) \indi_{\cG_{\le L}} }_{L^2}
      + \sum_{l=1}^{L} \norm{ \indi_{\cB_l} f_L }_{L^2}
      + \sum_{l=1}^{L} \norm{ \indi_{\cB_l} \tilde{f}_L }_{L^2}.
  \]
  Consider the first term. 
  Let $\hat{\x}\ps{l, l'}$ be the version of $\x\ps{l}$ with $\bphi_1, \cdots, \bphi_{l'}$ replaced by $\tilde{\bphi}_1, \dots, \tilde{\bphi}_{l'}$. 
  Note that $\x\ps{l} = \hat{\x}\ps{l, 0}$ and $\tilde{\x}\ps{l} = \hat{\x}\ps{l, l-1}$.
  Hence, for any $l \in [L+1]$, we can write $\tilde{\x}\ps{l} - \x\ps{l}$ as the telescoping sum 
  \[
    \tilde{\x}\ps{l}(\x) - \x\ps{l}(\x)
    = \sum_{l'=1}^{l-1} \left( \hat{\x}\ps{l, l'}(\x) - \hat{\x}\ps{l, l'-1}(\x) \right).
  \]
  In particular, this implies that for any $\x \in \{\pm 1\}^d$, 
  \[
    \abs{ f_L(\x) - \tilde{f}_L(\x) }
    \le \norm{ \x\ps{L+1}(\x) - \tilde{\x}\ps{L+1}(\x) }
    \le \sum_{l=1}^L \norm{ \hat{\x}\ps{L+1, l}(\x) - \hat{\x}\ps{L+1, l-1}(\x) }.
  \]
  Note that in $\hat{\x}\ps{L+1, l}(\x), \hat{\x}\ps{L+1, l-1}(\x)$, the activations $\bphi_{l+1}, \cdots, \bphi_{L}$ are unchanged and all of them are $C_\phi$-Lipschitz functions. 
  Also recall that $\norm{\W\ps{l'}}_{\OP} \le C_W$ for all $l' \in [L]$. 
  Therefore, $\hat{\x}\ps{L+1, l}, \hat{\x}\ps{L+1, l-1}$ are the same $(C_\phi C_W)^L$-Lipschitz function of their corresponding layer-$l$ outputs. 
  As a result, we can further rewrite the above as 
  \begin{equation}
    \label{eq: lb: fL - tildefL, telescoping sum}
    \abs{ f_L(\x) - \tilde{f}_L(\x) }
    \le (C_\phi C_W)^L \sum_{l=1}^L \norm{ 
        \bphi_{l}\left( \W\ps{l} \tilde{\x}\ps{l}(\x) \right) 
        - \tilde\bphi_{l}\left( \W\ps{l} \tilde{\x}\ps{l}(\x) \right)
      }.
  \end{equation}
  Recall that $\tilde{\bphi}_l$ is chosen so that the $L^\infty$-error is at most $\eps_\phi$ in the approximation intervals. 
  Therefore,  
  \[
    \abs{ f_L(\x) - \tilde{f}_L(\x) }
    \le (C_\phi C_W)^L \sum_{l=1}^L \sqrt{d_{l+1}} \eps_\phi
    \le L (C_\phi C_W)^L \sqrt{d_{\max}} \eps_\phi, \quad 
    \forall \x \in \cG_{\le L}.
  \]
  As a result, for the first error term, we have 
  \[
    \norm{  \left( f_L - \tilde{f}_L \right) \indi_{\cG_{\le L}} }_{L^2}
    \le \norm{  \left( f_L - \tilde{f}_L \right) \indi_{\cG_{\le L}} }_{L^\infty}
    \le L (C_\phi C_W)^L \sqrt{d_{\max}} \eps_\phi.
  \]
  Now, consider the second error term $\sum_{l=1}^{L} \norm{ \indi_{\cB_l} f_L }_{L^2}$.
  By the Cauchy-Schwarz inequality, we have 
  \begin{align*}
    \norm{ \indi_{\cB_l} f_L }_{L^2}^2 
    = \E\left[ \indi_{\cB_l}(\x) \indi_{\cG_{<l}}(\x) f_L^2(\x)  \right] 
    &\le \E\left[ \indi_{\cB_l}(\x) \right]^{1/2}
      \E\left[ \indi_{\cG_{<l}}(\x) f_L^4(\x) \right]^{1/2}  \\
    &= \P\left[ \cB_l \right]^{1/2}
      \E\left[ \indi_{\cG_{<l}}(\x) f_L^4(\x) \right]^{1/2}. 
  \end{align*}
  Therefore, 
  \[
    \sum_{l=1}^{L} \norm{ \indi_{\cB_l} f_L }_{L^2}
    \le \sum_{l=1}^{L} \P[\cB_l]^{1/4} \E\left[ \indi_{\cG_{<l}}(\x) f_L^4(\x)\right]^{1/4}
    \le \sum_{l=1}^{L} \P[\cB_l]^{1/4} \norm{ f_L }_{L^4}.  
  \]
  Similarly, we also have 
  \begin{align*}
    \sum_{l=1}^{L} \norm{ \indi_{\cB_l} \tilde{f}_L }_{L^2}
    \le \sum_{l=1}^{L} \P[\cB_l]^{1/4} \E\left[ \indi_{\cG_{<l}}(\x) \tilde{f}_L^4(\x)\right]^{1/4}
    &\le \sum_{l=1}^{L} \P[\cB_l]^{1/4} \E\left[ \tilde{f}_L^4(\x) \mid \cG_{<l} \right]^{1/4} \\
    &= \sum_{l=1}^{L} \P[\cB_l]^{1/4} \norm{ \tilde{f}_L }_{L^4|\cG_{<l}}. 
  \end{align*}
  Combining the above bounds completes the proof.
\end{proof}

By the above lemma, to control the approximation error, it suffices to bound $\P[\cB_l]$ and the (conditional) fourth moments of $f_L$ and $\tilde{f}_L$. 
To this end, we first bound $\P[\cB_l]$ and show that, on the event $\cG_{< l}$, $\tilde{\x}\ps{l} - \E \x\ps{l}$ is subgaussian.

\begin{lemma}
  \label{lemma: lb: bad event probability, cond subg}
  Suppose that Assumption~\ref{assumption: lower bound assumption} holds and 
  \[
    \eps_\phi \le \left( L (C_\phi C_W)^L \sqrt{d_{\max}}  \right)\inv, \quad 
    R_1 \ge \cdots R_L \ge 2 C_W \sigma_X \sqrt{ 2 \log\left( 4 L d_{\max} \right) }.
  \]
  Then, for any $l \in [L]$ and $t \ge 0$, $\v \in \S^{d_{l}-1}$, we have 
  \[
    \P[\cB_l]
    \le 2 d_{\max} \exp\left( - \frac{(R_l / C_W)^2}{2 \cdot 4 \sigma_X^2} \right), \quad 
    \P\left[
      \abs{ \inprod{\v}{ \tilde\x\ps{l} - \E \x\ps{l} } } \ge t 
      \;\big|\;
      \cG_{<l}
    \right]
    \le 2 \exp\left( - \frac{t^2}{2 \cdot 8 \sigma_X^2} \right).
  \]
  In other words, when conditioned on $\cG_{<l}$, $\tilde\x\ps{l} - \E \x\ps{l}$ has an $(8 \sigma_X^2)$-subgaussian tail. 
\end{lemma}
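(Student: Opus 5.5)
We argue by induction on $l \in [L]$, proving the conditional subgaussian tail first and deducing the bound on $\P[\cB_l]$ from it. The key observation is a deterministic comparison: on $\cG_{<l}$, the approximate representation $\tilde\x\ps{l}$ stays uniformly close to the exact one $\x\ps{l}$. Indeed, the telescoping argument in the proof of Lemma~\ref{lemma: lb: error decomposition}, stopped at layer $l$ instead of $L+1$, gives for every $\x \in \cG_{<l}$
\[
  \norm{ \tilde\x\ps{l}(\x) - \x\ps{l}(\x) }
  \le (C_\phi C_W)^{L} \sum_{l'=1}^{l-1} \norm{ \bphi_{l'}\bigl( \W\ps{l'}\tilde\x\ps{l'}(\x) \bigr) - \tilde\bphi_{l'}\bigl( \W\ps{l'}\tilde\x\ps{l'}(\x) \bigr) }
  \le L (C_\phi C_W)^L \sqrt{d_{\max}}\, \eps_\phi \le 1 .
\]
Here we use that on $\cG_{<l}$ every preactivation of layers $l' < l$ lies in its interval $I\ps{l'}_k$, so each coordinate error is at most $\eps_\phi$. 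The final inequality is exactly the assumption on $\eps_\phi$. For $l = 1$ the difference is identically zero.

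For the conditional tail, fix $\v \in \S^{d_l - 1}$. On $\cG_{<l}$ we have $\abs{\inprod{\v}{\tilde\x\ps{l} - \E\x\ps{l}}} \le \abs{\inprod{\v}{\x\ps{l} - \E\x\ps{l}}} + 1$, so Lemma~\ref{lemma: lb: tail bounds for the intermediate representations} applies to the exact term. Conditioning only divides the probability by $\P[\cG_{<l}]$, which gives
\[
  \P\bigl[ \abs{\inprod{\v}{\tilde\x\ps{l} - \E \x\ps{l}}} \ge t \,\big|\, \cG_{<l} \bigr]
  \le \P[\cG_{<l}]^{-1} \cdot 2 \exp\bigl( -(t-1)_+^2 / (2\sigma_X^2) \bigr).
\]
By the induction hypothesis, $\P[\cG_{<l}] \ge 1 - \sum_{l'<l} \P[\cB_{l'}] \ge 1/2$. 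This holds because $R_{l'} \ge 2 C_W \sigma_X \sqrt{2\log(4Ld_{\max})}$ makes each $\P[\cB_{l'}] \le 2d_{\max} (4Ld_{\max})^{-1} = 1/(2L)$. The cost of passing from variance $\sigma_X^2$ to $8\sigma_X^2$ absorbs the resulting factor $2$ and the shift by $1$. For $t \ge 4\sigma_X$ (note $\sigma_X \ge 4$) we have $(t-1)^2/2 \ge t^2/16 + \log 2$, roughly since $(t-1)^2 \ge t^2/2$ once $t \ge 4$. For small $t$ the right-hand side $2\exp(-t^2/(16\sigma_X^2)) \ge 2e^{-1} \cdot$ something $\ge 1$ is trivial, because then $t^2/(16\sigma_X^2) \le \log 2$. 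Reconciling the small-$t$ range with the trivial bound is the fiddly step; if needed, we would simply verify $2e^{-t^2/(16\sigma_X^2)} \ge 1$ whenever $t \le 4\sigma_X\sqrt{\log 2}$ and handle the remaining window directly.

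For $\P[\cB_l]$, note that $\cB_l \subset \cG_{<l} \cap \{\exists k : \abs{\inprod{\w\ps{l}_k}{\tilde\x\ps{l}} - \bar z\ps{l}_k} > R_l\}$. Since $\bar z\ps{l}_k = \inprod{\w\ps{l}_k}{\E\x\ps{l}}$ and $\norm{\w\ps{l}_k} \le C_W$, we apply the same decomposition with $\v = \w\ps{l}_k/\norm{\w\ps{l}_k}$ and threshold $R_l/C_W$, but unconditionally: $\P[\cB_l] \le \sum_k \P[\cG_{<l},\, \abs{\inprod{\v}{\x\ps{l}-\E\x\ps{l}}} \ge R_l/C_W - 1]$. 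Because $R_l/C_W \ge 2$, we have $R_l/C_W - 1 \ge R_l/(2C_W)$. Lemma~\ref{lemma: lb: tail bounds for the intermediate representations} together with a union bound over $k \le d_{\max}$ then gives $2d_{\max}\exp\bigl(-(R_l/C_W)^2/(8\sigma_X^2)\bigr)$, as claimed. This bound does not require the induction hypothesis, so the induction is only needed for the lower bound $\P[\cG_{<l}] \ge 1/2$ used in the conditioning step. The main obstacle is the constant bookkeeping in that conditioning step, namely absorbing $\P[\cG_{<l}]^{-1}$ and the additive shift into the factor $8$.
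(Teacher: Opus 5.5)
Your proposal is correct and follows essentially the paper's route: the deterministic bound $\norm{\tilde\x\ps{l} - \x\ps{l}} \le 1$ on $\cG_{<l}$, an unconditional bound on $\P[\cB_l]$ via the exact subgaussian tail at threshold $R_l/C_W - 1 \ge R_l/(2C_W)$ (the paper gets the same bound by letting the factor $\P[\cG_{<l'}]$ cancel, so, as you note, no induction is actually needed), and $\P[\cG_{<l}] \ge 1/2$ to absorb the conditioning into the variance $8\sigma_X^2$. The only slip is in the constant check, which should read $(t-1)^2/(2\sigma_X^2) \ge t^2/(16\sigma_X^2) + \log 2$; this follows from $(t-1)^2 \ge t^2/2$ for every $t \ge 4\sigma_X\sqrt{\log 2}$ (recall $\sigma_X \ge 16$), so it meets the trivial range exactly and no leftover window remains.
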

\begin{proof}
  \def\currentprefix{proof: lb: bad event probability, cond subg. ad.as}
  The proof of \eqref{eq: lb: fL - tildefL, telescoping sum}, \textit{mutatis mutandis}, shows that 
  \[
    \norm{ \tilde{\x}\ps{l}(\x) - \x\ps{l}(\x) }
    \le (C_\phi C_W)^L 
      \sum_{l'=1}^{l-1} \norm{ 
        \tilde\bphi_{l'}\left( \W\ps{l'} \tilde{\x}\ps{l'}(\x) \right)
        - \bphi_{l'}\left( \W\ps{l'} \tilde{\x}\ps{l'}(\x) \right)
      }.
  \]
  On event $\cG_{<l}$, for every $l' \in [l-1]$, the entries of every $\W\ps{l'}\tilde{\x}\ps{l'}(\x)$ lie in the corresponding approximation intervals. 
  Therefore, we have $\norm{ \tilde\bphi_{l'}\left( \W\ps{l'} \tilde{\x}\ps{l'}(\x) \right) - \bphi_{l'}\left( \W\ps{l'} \tilde{\x}\ps{l'}(\x) \right) } \le \sqrt{d_{\max}} \eps_\phi$. 
  Thus, 
  \begin{equation}
    \label{eq: lb: tilde xl - xl on G<l}
    \norm{ \tilde{\x}\ps{l}(\x) - \x\ps{l}(\x) }
    \le L (C_\phi C_W)^L \sqrt{d_{\max}} \eps_\phi, \quad 
    \forall \x \in \cG_{<l}.
  \end{equation}
  Fix an arbitrary direction $\v \in \S^{d_l-1}$. 
  The above bound implies that, for any $\x \in \cG_{<l}$,
  \begin{align*}
    \inprod{\v}{ \tilde\x\ps{l} - \E \x\ps{l} }
    = \inprod{\v}{ \x\ps{l} - \E \x\ps{l} } 
      + \inprod{\v}{ \tilde\x\ps{l} - \x\ps{l} } 
    &= \inprod{\v}{ \x\ps{l} - \E \x\ps{l} } 
      \pm L (C_\phi C_W)^L \sqrt{d_{\max}} \eps_\phi \\
    &= \inprod{\v}{ \x\ps{l} - \E \x\ps{l} } 
      \pm 1, 
  \end{align*}
  where the second line comes from the condition $\eps_\phi \le \left( L (C_\phi C_W)^L \sqrt{d_{\max}}  \right)\inv$. 
  Therefore, for any $t \ge 2$, we have 
  \begin{align}
    \P\left[
      \abs{ \inprod{\v}{ \tilde\x\ps{l} - \E \x\ps{l} } } \ge t 
      \;\big|\;
      \cG_{<l}
    \right]
    &\le \P\left[
        \abs{ \inprod{\v}{ \x\ps{l} - \E \x\ps{l} } } \ge t/2 
        \;\big|\;
        \cG_{<l}
      \right] \nonumber \\
    &\le \P\left[
        \abs{ \inprod{\v}{ \x\ps{l} - \E \x\ps{l} } } \ge t/2 
      \right] 
      / \P[ \cG_{<l} ] \nonumber \\
    &\le \frac{2}{ \P[ \cG_{<l} ] }
      \exp\left(
        - \frac{t^2}{2 \cdot 4 \sigma_X^2}
      \right), \locallabel{eq1}
  \end{align}
  where the last line comes from \eqref{eq: lb: properties of xl+1} (or Lemma~\ref{lemma: lb: tail bounds for the intermediate representations}). 
  To turn this into a proper tail bound, we lower-bound the denominator $\P[\cG_{<l}]$. 
  First, by the union bound, we have 
  \[
    1 - \P[\cG_{<l}]
    = \P[ \cB_1 \cup \cdots \cup \cB_{l-1} ]
    \le \sum_{l'=1}^{l-1} \P[\cB_{l'}]. 
  \]
  Furthermore, for each $l'$, we have 
  \[
    \P[\cB_{l'}]
    \le \sum_{k=1}^{d_{l'+1}} 
      \P\left[ 
        \abs{ \inprod{ \w\ps{l'}_k}{ \tilde{\x}\ps{l'}(\x) - \E \x\ps{l'} }  } > R_{l'}
        \;\big|\; \cG_{<l'} 
      \right] 
      \P[ \cG_{< l'} ] .
  \]
  Applying (\localref{eq1}) to each summand yields the unconditioned bound 
  \[
    \P[\cB_{l'}]
    \le 2 \sum_{k=1}^{d_{l'+1}} \exp\left( - \frac{(R_{l'} / C_W)^2}{2 \cdot 4 \sigma_X^2} \right)
    \le 2 d_{\max} \exp\left( - \frac{(R_{l'} / C_W)^2}{2 \cdot 4 \sigma_X^2} \right).
  \]
  This proves the first part of the lemma. 
  Moreover, it implies 
  \[
    1 - \P[\cG_{<l}]
    \le \sum_{l'=1}^{l-1} 2 d_{\max} \exp\left( - \frac{(R_{l'} / C_W)^2}{2 \cdot 4 \sigma_X^2} \right)
    \le 2 L d_{\max} \exp\left( - \frac{(R_{l-1} / C_W)^2}{2 \cdot 4 \sigma_X^2} \right),
  \]
  where the second inequality comes from $R_1 \ge \cdots \ge R_L$. 
  In particular, the right-hand side is at most $1/2$ whenever
  \[
    R_L \ge 2 C_W \sigma_X \sqrt{ 2 \log\left( 4 L d_{\max} \right) }.
  \]
  Combining this fact with (\localref{eq1}) gives
  \[
    \P\left[
      \abs{ \inprod{\v}{ \tilde\x\ps{l} - \E \x\ps{l} } } \ge t 
      \;\big|\;
      \cG_{<l}
    \right]
    \le 4 \exp\left( - \frac{t^2}{2 \cdot 4 \sigma_X^2} \right), 
    \quad \forall t \ge 2.
  \]
  The left-hand side is trivially bounded by $1$. 
  Thus, the above implies that 
  \[
    \P\left[
      \abs{ \inprod{\v}{ \tilde\x\ps{l} - \E \x\ps{l} } } \ge t 
      \;\big|\;
      \cG_{<l}
    \right]
    \le 2 \exp\left( - \frac{t^2}{16 \sigma_X^2} \right), 
    \quad \forall t \ge 0.
  \]
  Indeed, when $t^2 \le 16 (\log 2) \sigma_X^2$, the right-hand side is at least $1$. 
  When $t^2 \ge 16 (\log 2) \sigma_X^2 \ge 1$, we have 
  \[
    \frac{2 \exp\left( - \frac{t^2}{16 \sigma_X^2} \right)}{ 4 \exp\left( - \frac{t^2}{2 \cdot 4 \sigma_X^2} \right) }
    = \frac{1}{2} \exp\left( \frac{t^2}{16 \sigma_X^2} \right) 
    \ge \frac{1}{2} \exp\left( \log 2 \right) 
    = 1. 
  \]
\end{proof}

Now, consider the $L^4$-norm of $f_L$. 
Because $f_L$ uses the exact intermediate representations, we can directly apply Lemma~\ref{lemma: lb: tail bounds for the intermediate representations} to the last layer to control $f_L$. 
\begin{lemma}
  \label{lemma: lb: L4 norm of fL}
  Suppose that Assumption~\ref{assumption: lower bound assumption} holds. 
  Then, we have 
  \[
    \norm{f_L}_{L^4} \le 3 C_X \sqrt{d_{\max}}.
  \]
\end{lemma}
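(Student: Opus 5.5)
We bound $\norm{f_L}_{L^4}$ by writing $f_L = \inprod{\a}{\x\ps{L+1}}$ and splitting it into a centered part and a mean part:
\[
  f_L(\x) = \inprod{\a}{\x\ps{L+1}(\x) - \E\x\ps{L+1}} + \inprod{\a}{\E\x\ps{L+1}}.
\]
Minkowski's inequality in $L^4$ then gives
\[
  \norm{f_L}_{L^4} \le \norm{\inprod{\a}{\x\ps{L+1} - \E\x\ps{L+1}}}_{L^4} + \abs{\inprod{\a}{\E\x\ps{L+1}}}.
\]

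\textbf{Mean term.} Since $\norm{\a} \le 1$ by Assumption~\ref{assumption: lower bound assumption}\ref{itm: lb: op norm}, Cauchy--Schwarz together with \eqref{eq: lb: properties of xl+1} (i.e.\ Lemma~\ref{lemma: lb: | E xl+1 | <=} with $l = L$) gives
\[
  \abs{\inprod{\a}{\E\x\ps{L+1}}} \le \norm{\E\x\ps{L+1}} \le C_X\sqrt{d_{\max}}.
\]

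\textbf{Centered term.} If $\a = 0$ the term vanishes. Otherwise we apply Lemma~\ref{lemma: lb: tail bounds for the intermediate representations} with $l = L$ to the unit direction $\a/\norm{\a}$, so the scalar $Y := \inprod{\a}{\x\ps{L+1} - \E\x\ps{L+1}}$ satisfies
\[
  \P[|Y| \ge t] \le 2e^{-t^2/(2\sigma_X^2)}.
\]
Integrating the tail gives
\[
  \E Y^4 = \int_0^\infty 4t^3\,\P[|Y| \ge t]\,\rd t \le 8\int_0^\infty t^3 e^{-t^2/(2\sigma_X^2)}\,\rd t = 8 \cdot 2\sigma_X^4 = 16\sigma_X^4,
\]
hence $\norm{Y}_{L^4} \le 2\sigma_X$.

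\textbf{Combining.} From the definitions, $C_X = 2L\sigma_X(C_\phi C_W)^L \ge 2\sigma_X$ because $L \ge 1$ and $C_\phi, C_W \ge 1$. Also $d_{\max} \ge 1$. Therefore
\[
  \norm{f_L}_{L^4} \le 2\sigma_X + C_X\sqrt{d_{\max}} \le C_X\sqrt{d_{\max}} + C_X\sqrt{d_{\max}} = 2C_X\sqrt{d_{\max}} \le 3C_X\sqrt{d_{\max}}.
\]

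The only step needing care is the moment computation: it must use the sharp form of the subgaussian constant from Lemma~\ref{lemma: lb: tail bounds for the intermediate representations} so that the result is absorbed into $C_X\sqrt{d_{\max}}$. The slack in the final constant $3$ makes this comfortable, and everything else is bookkeeping.
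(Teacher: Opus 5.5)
Your proof is correct and follows the paper's argument. Like the paper, you split $f_L$ into the centered part $\inprod{\a}{\x\ps{L+1}-\E\x\ps{L+1}}$ and the mean part, bound the former's fourth moment by $16\sigma_X^4$ using the subgaussian tail from Lemma~\ref{lemma: lb: tail bounds for the intermediate representations}, and bound the latter by $C_X\sqrt{d_{\max}}$ using Lemma~\ref{lemma: lb: | E xl+1 | <=}. The only difference is cosmetic: you combine the two pieces with Minkowski's inequality in $L^4$ instead of the paper's $(a+b)^4\le 16(a^4+b^4)$, which gives the slightly sharper constant $2$; your rescaling of $\a$ to a unit direction using $\norm{\a}\le 1$ is handled correctly.
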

\begin{proof}
  First, we write 
  \[
    \norm{ f_L }_{L^4}^4 
    = \E \inprod{\a}{\x\ps{L+1}}^4
    \le 16 \E \inprod{\a}{\x\ps{L+1} - \E \x\ps{L+1}}^4
      + 16 \E \inprod{\a}{\E \x\ps{L+1}}^4. 
  \]
  By Lemma~\ref{lemma: lb: tail bounds for the intermediate representations} and Lemma~\ref{lemma: lb: | E xl+1 | <=}, $\x\ps{L+1} - \E \x\ps{L+1}$ has a $\sigma_X^2$-subgaussian tail and $\norm{\E \x\ps{L+1}} \le C_X \sqrt{d_{\max}}$.
  As a result, we can further bound the above expression by
  \[
    \norm{ f_L }_{L^4}^4 
    \le 256 \sigma_X^4
      + 16 C_X^4 d_{\max}^2
    \le 20 C_X^4 d_{\max}^2.
  \]
\end{proof}

Finally, we consider the $(L^4|\cG_{<l})$-norm of $\tilde{f}_L$.
To this end, it suffices to control the $(L^r|\cG_{<l'})$-norm of $\tilde{\x}\ps{l}$.
\begin{lemma}
  \label{lemma: lb: tilde x moment bounds}
  Suppose that Assumption~\ref{assumption: lower bound assumption} and the hypotheses of Lemma~\ref{lemma: lb: bad event probability, cond subg} hold, and $R_1 \le \sqrt{d_{\max}}$. 
  For any $l \in [L+1]$, $l' \in [l]$ and $r \ge 1$ with $r p_L \cdots p_1 \le \sqrt{d_{\max}}$, there exists a universal constant $C \ge 1$ such that 
  \[
    \norm{ \tilde{\x}\ps{l}(\x) }_{L^r|\cG_{<l'}}
    \le \left(  C C_\phi C_W C_X d_{\max} \right)^{2 L p_{l-1} \cdots p_{l'} }.
  \]
\end{lemma}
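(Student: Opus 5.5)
Fix $l' \in [L]$ and induct on $l$ from $l'$ up to $L+1$. The two regimes are different in kind. At the base $l = l'$ the representation is essentially subgaussian. For $l > l'$ we only use that $\tilde\x\ps{l}$ is a polynomial of $\tilde\x\ps{l'}$ with controlled coefficients.

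\textbf{Base case $l=l'$.} By Lemma~\ref{lemma: lb: bad event probability, cond subg}, conditioned on $\cG_{<l'}$, the vector $\tilde\x\ps{l'} - \E\x\ps{l'}$ has an $(8\sigma_X^2)$-subgaussian tail in every direction. Passing from directions to the Euclidean norm with a crude coordinatewise union bound gives
\[
  \norm{ \tilde\x\ps{l'} - \E\x\ps{l'} }_{L^r|\cG_{<l'}} \lesssim \sigma_X \sqrt{d_{\max} r}.
\]
Lemma~\ref{lemma: lb: | E xl+1 | <=} bounds the mean by $C_X\sqrt{d_{\max}}$. Since $r \le \sqrt{d_{\max}}$, the total is at most $C C_X d_{\max}$, which is within the claimed bound with exponent $2L \ge 1$ (the empty product of $p$'s equals $1$).

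\textbf{Inductive step $l \to l+1$, with $l \ge l'$.} Write $\tilde\x\ps{l+1}_k = \tilde\phi_{l,k}(\inprod{\w\ps{l}_k}{\tilde\x\ps{l}})$, where $\tilde\phi_{l,k}(x) = \tilde\psi_{p_l}((x-\bar z\ps{l}_k)/R_l)$. By Lemma~\ref{lemma: chebyshev approx: strip}, the Chebyshev coefficients satisfy $\sum_n |a_n| \lesssim M_{R_l}/(\rho_{R_l}-1)$, and Fact~\ref{fact: chebyshev: bounds for Tn} gives $|T_n(u)| \le (1+2|u|)^n$. Hence
\[
  |\tilde\phi_{l,k}(x)| \lesssim \frac{M_{R_l}}{\rho_{R_l}-1}\Bigl(1 + \tfrac{2|x-\bar z\ps{l}_k|}{R_l}\Bigr)^{p_l}.
\]
Next, $|\bar z\ps{l}_k| \le C_W C_X\sqrt{d_{\max}}$, $R_l \ge 1$, and $|\inprod{\w\ps{l}_k}{\tilde\x\ps{l}}| \le C_W\norm{\tilde\x\ps{l}}$. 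These give
\[
  |\tilde\x\ps{l+1}_k| \le A\,(C C_W C_X\sqrt{d_{\max}} + C_W\norm{\tilde\x\ps{l}})^{p_l},
\]
where $A = \poly(C_\phi, d_{\max})$, using $M_{R_l} \le C_\phi(\cdots)^{m_\phi}$ and $R_1 \le \sqrt{d_{\max}}$. Summing over at most $d_{\max}$ coordinates and taking the $L^r|\cG_{<l'}$ norm, the power $p_l$ turns the $L^r$ norm into an $L^{rp_l}$ norm of $\norm{\tilde\x\ps{l}}$:
\[
  \norm{\tilde\x\ps{l+1}}_{L^r|\cG_{<l'}} \le \sqrt{d_{\max}}\,A\,\bigl(C C_W C_X\sqrt{d_{\max}} + C_W\norm{\tilde\x\ps{l}}_{L^{rp_l}|\cG_{<l'}}\bigr)^{p_l}.
\]
The hypothesis $r p_L\cdots p_1 \le \sqrt{d_{\max}}$ is inherited by $rp_l$, so the induction hypothesis applies to $\norm{\tilde\x\ps{l}}_{L^{rp_l}|\cG_{<l'}}$. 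The exponent becomes $2Lp_{l-1}\cdots p_{l'}\cdot p_l$, plus an additive constant coming from the prefactor $\sqrt{d_{\max}}A$.

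\textbf{Main obstacle: bookkeeping of the prefactors.} The constant $A$ and the $\sqrt{d_{\max}}$ factors appear at each layer and must be absorbed into the base $(CC_\phi C_W C_X d_{\max})$ without enlarging the exponent beyond $2Lp_l\cdots p_{l'}$. Since $p_j \ge 1$, the extra additive contribution per layer is $O(1)$ in the exponent. Over at most $L$ layers this is absorbed because the exponent carries the slack factor $2L$ rather than $1$; this is presumably why the statement has $2L$. One must also check that $M_{R_l}$ is polynomial in $C_\phi$ and $d_{\max}$ with the constant exponent $m_\phi$, folding it into $C$ (or into the power of $d_{\max}$). Finally, $\rho_{R_l}-1 = c_\phi/(2R_l) \ge c_\phi/(2\sqrt{d_{\max}})$, so this factor contributes only another $\sqrt{d_{\max}}$.
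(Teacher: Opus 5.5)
There is a gap, and it sits at the step you flagged as the main obstacle. Your skeleton is the paper's: conditional subgaussianity at layer $l'$, a growth bound for the Chebyshev approximant, and Minkowski to turn an $L^r$ norm at layer $l+1$ into an $L^{rp_l}$ norm at layer $l$. The closing argument is where it fails.

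Write $P_l := p_{l-1}\cdots p_{l'}$ and $\beta := C C_\phi C_W C_X d_{\max}$.

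\textbf{The induction does not close.} Suppose $\norm{\tilde\x\ps{l}}_{L^{rp_l}|\cG_{<l'}} \le \beta^{2LP_l}$. Your step then yields $\sqrt{d_{\max}}\,A\,(2C_W)^{p_l}\beta^{2LP_{l+1}}$, which is strictly larger than $\beta^{2LP_{l+1}}$. In particular, the factor $C_W^{p_l}$ adds $p_l$ to the exponent, not $O(1)$.

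The moment hypothesis is not inherited either. Applying the statement at $(l, rp_l)$ requires $rp_l\, p_L\cdots p_1 \le \sqrt{d_{\max}}$, and this does not follow from $rp_L\cdots p_1\le\sqrt{d_{\max}}$.

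The paper avoids both problems by never invoking the conclusion at intermediate layers:
\begin{itemize}
\item It iterates the one-step inequality, which holds for every $r$, through the recursively defined bounds $B(l'';r')$ down to layer $l'$.
\item It applies the subgaussian moment bound only once, at moment $rP_l\le\sqrt{d_{\max}}$.
\item It bounds the accumulated exponents $1+p_{l-1}+\cdots+p_{l-1}\cdots p_{l'+1}$ and $p_{l-1}+\cdots+p_{l-1}\cdots p_{l'}$ by $LP_l$ each, since there are at most $L$ terms, each at most $P_l$.
\end{itemize}
This is where the factor $L$ comes from. The slack $2L$ absorbs only per-layer prefactors whose powers of $d_{\max}, C_W, C_X, C_\phi$ are small and fixed. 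It does not absorb an arbitrary $O(1)$ increment in the exponent.

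\textbf{Your prefactor depends on $m_\phi$.} Bounding $\sum_n |a_n|$ by $M_{R_l}/(\rho_{R_l}-1)$ brings in $(1+|\bar z\ps{l}_k|+R_l)^{m_\phi}\approx (C_W C_X\sqrt{d_{\max}})^{m_\phi}$. Its exponents in $d_{\max}$, $C_W$, $C_X$ grow with $m_\phi$, so it cannot be folded into the universal $C$. With only $p_j\ge 1$, the stated exponent then fails. For example, with $L=1$, $l=2$, $l'=1$, your bound carries $C_W^{m_\phi+p_1}$, while the claim allows only $C_W^{2p_1}$.

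The paper's growth bound does not involve $m_\phi$:
\begin{itemize}
\item It controls $\tilde\psi_{l,k}$ only on $[-1,1]$, where $\sup|\tilde\psi_{l,k}|\le \eps_\phi+\sup|\psi_{l,k}| \le 2C_\phi(1+C_WC_X\sqrt{d_{\max}}+R_1)=:C_{\Tmp}$. This uses only $|\phi_{l,k}(0)|, \norm{\phi_{l,k}'}_{L^\infty}\le C_\phi$ from Assumption~\ref{assumption: lower bound assumption}\ref{itm: lb: sigma derivatives bounds} and $\eps_\phi\le 1$.
\item It then extends off $[-1,1]$ by the extremal property of $T_n$ (Lemma~\ref{lemma: poly <= Tn}), giving $|\tilde\psi_{l,k}(u)|\le C_{\Tmp}(1+2|u|)^{p_l}$.
\end{itemize}
With this prefactor, which is linear in $C_\phi C_W C_X\sqrt{d_{\max}}$, the unrolled bound is $(4C_W C_{\Tmp} d_{\max})^{LP_l}(CC_X\sqrt{d_{\max}})^{P_l}\le (CC_\phi C_W^2C_X^2 d_{\max}^2)^{LP_l}$, which is the claim.

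Your coefficient route can be rescued in the regime of the main proof. There, the step-$j$ prefactor is raised only to $p_{l-1}\cdots p_{j+1}\le P_l/p_{l'}$, and $p_{l'}\gg m_\phi$. A worse constant in the exponent would also be harmless downstream. However, this uses information that is not among the lemma's hypotheses, so as written your argument does not prove the lemma.
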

\begin{proof}
  \def\currentprefix{proof: lemma: lb: conditional Lq norm of tilde x. .,adfko}
  By Lemma~\ref{lemma: lb: bad event probability, cond subg}, when conditioned on $\cG_{<l'}$, $\tilde\x\ps{l'} - \E \x\ps{l'}$ has an $(8 \sigma_X^2)$-subgaussian tail. 
  Consider the part of $\tilde{\x}\ps{l}$ after the $l'$-th layer. 
  Recall that $\tilde{\phi}_{l'', k}$ is the degree-$p_{l''}$ Chebyshev approximation to $\phi_{l'',k}$ on $[\bar{z}\ps{l''}_k - R_{l''}, \bar{z}\ps{l''}_k + R_{l''}]$. 
  In other words, we have 
  \[
    \tilde{\phi}_{l'', k}(z)
    = \tilde\psi_{l'', k}\left( \left( z - \bar{z}\ps{l''}_k \right) / R_{l''} \right), 
    \quad\forall z \in \R, \, l'' \in [L],
  \]
  where $\tilde\psi_{l'', k}$ is the degree-$p_{l''}$ Chebyshev approximation to $\psi_{l'', k}(u) = \phi_{l'', k}(\bar{z}\ps{l''}_k + R_{l''} u)$ over $[-1, 1]$. 
  Hence, we can rewrite $\tilde{\x}\ps{l}$ as 
  \[
    \tilde\x\ps{l''+1} 
    = \tilde\bpsi_{l''}\left( R_{l''}\inv \left( \W\ps{l''}\tilde\x\ps{l''} - \bar{\z}\ps{l''} \right) \right), \quad 
    \forall l'' = l', \dots, l-1.
  \]

  We now bound the moments of $\tilde{\x}\ps{l}$ using this recurrence relation. 
  Fix $l'' \in \{ l', \dots, l-1 \}$, $r \ge 1$, and consider the $(L^r|\cG_{<l'})$-norm of $\tilde\x\ps{l''+1}$. 
  Using the fact $\norm{\cdot}_2 \le \norm{\cdot}_1$, we write 
  \begin{equation}
    \locallabel{eq1}
    \norm{\tilde\x\ps{l''+1}}_{L^r|\cG_{<l'}}
    \le \sum_{k=1}^{d_{l''+1}} \norm{\tilde{x}\ps{l''+1}_k}_{L^r|\cG_{<l'}} 
    = \sum_{k=1}^{d_{l''+1}} 
      \E|_{\cG_{<l'}}\left[
        \abs{ \tilde{\psi}_{l'', k} }^r\left(
          R_{l''}\inv \left( \inprod{\w\ps{l''}_k}{\tilde\x\ps{l''}} - \bar{z}\ps{l''}_k \right)
        \right)
      \right]^{1/r}, 
  \end{equation}
  where $\E|_{\cG_{<l'}}[\cdot] := \E[\cdot \mid \cG_{<l'}]$.
  Since $\tilde{\psi}_{l'', k}$ is a polynomial, we can bound its growth using Lemma~\ref{lemma: poly <= Tn}, which only requires a bound on $\tilde{\psi}_{l'', k}|_{[-1, 1]}$. 
  Since $\tilde{\psi}_{l'', k}$ is chosen to approximate $\psi_{l'', k}$ to $\eps_\phi$-accuracy on $[-1, 1]$, it suffices to bound $\psi_{l'', k}$. 
  For any $u \in [-1, 1]$, by Assumption~\ref{assumption: lower bound assumption}\ref{itm: lb: sigma derivatives bounds}, we have 
  \begin{align*}
    \abs{ \psi_{l'', k}(u) }
    = \abs{ \phi_{l'', k}( \bar{z}\ps{l''}_k + R_{l''} u ) }
    &\le |\phi_{l'', k}(0)|
      + C_\phi\left( \abs{\bar{z}\ps{l''}_k} + R_{l''}  \right) \\
    &\le C_\phi\left( 1 + \abs{\bar{z}\ps{l''}_k} + R_{l''}  \right) \\
    &\le C_\phi\left( 1 + C_W C_X \sqrt{d_{\max}} + R_{l''}  \right), 
  \end{align*}
  where the last line comes from Lemma~\ref{lemma: lb: | E xl+1 | <=} and 
  \begin{equation}
    \label{eq: lb: |bar z| <=}
    \abs{\bar{z}\ps{l''}_k}
    = \abs{ \w\ps{l''}_k \cdot \E \x\ps{l''} }
    \le C_W \norm{ \E \x\ps{l''} }
    \le C_W C_X \sqrt{d_{\max}}.
  \end{equation}
  Also, recall that $R_{l''} \le R_1$. 
  Therefore, we have 
  \[
    \norm{ \tilde\psi_{l'', k} }_{L^\infty([-1, 1])}
    \le 1 + \norm{ \psi_{l'', k} }_{L^\infty([-1, 1])}
    \le 2 C_\phi\left( 1 + C_W C_X \sqrt{d_{\max}} + R_1 \right)
    =: C_{\Tmp}.
  \]
  Since $\tilde\psi_{l'', k}$ is a degree-$p_{l''}$ polynomial, by Lemma~\ref{lemma: poly <= Tn}, this implies
  \[
    \abs{ \tilde\psi_{l'', k}(u) }
    \le C_{\Tmp} \left( 1 + 2 |u| \right)^{p_{l''}}, \quad 
    \forall u \in \R.
  \]
  Combining this with (\localref{eq1}), we obtain 
  \[
    \norm{\tilde\x\ps{l''+1}}_{L^r|\cG_{<l'}}
    \le C_{\Tmp}  \sum_{k=1}^{d_{l''+1}} 
      \E|_{\cG_{<l'}} \left[
        \left( 
          1 
          + 2 \abs{
            R_{l''}\inv \left( \inprod{\w\ps{l''}_k}{\tilde\x\ps{l''}} - \bar{z}\ps{l''}_k \right)
          }
        \right)^{r p_{l''}}
      \right]^{1/r}. 
  \]
  For the term inside the expectation, by repeatedly using the inequality $(a + b)^p \le 2^p (|a|^p + |b|^p)$, we obtain 
  \begin{align*}
    & \left( 
      1 
      + 2 \abs{
        R_{l''}\inv \left( \inprod{\w\ps{l''}_k}{\tilde\x\ps{l''}} - \bar{z}\ps{l''}_k \right)
      } 
    \right)^{r p_{l''}} \\
    \le\;& 2^{r p_{l''}}
        \left( 
          1 
          + 2^{r p_{l''}} 
          \abs{
            R_{l''}\inv \left( \inprod{\w\ps{l''}_k}{\tilde\x\ps{l''}} - \bar{z}\ps{l''}_k \right)
          }^{r p_{l''}}
        \right) \\
    \le\;& 2^{r p_{l''}}
        \left( 
          1 
          + (4/R_{l''})^{r p_{l''}} 
          \abs{ \inprod{\w\ps{l''}_k}{\tilde\x\ps{l''}}  }^{r p_{l''}}
          + (4/R_{l''})^{r p_{l''}} 
          \abs{ \bar{z}\ps{l''}_k  }^{r p_{l''}}
        \right) \\
    \le\;& 2^{r p_{l''}}
        \left( 
          1 
          + \abs{ \inprod{\w\ps{l''}_k}{\tilde\x\ps{l''}}  }^{r p_{l''}}
          + \left( C_W C_X \sqrt{d_{\max}} \right)^{r p_{l''}}
        \right), 
  \end{align*}
  where the last line comes from $R_{l''} \gg 1$ and \eqref{eq: lb: |bar z| <=}. 
  As a result, we have 
  \begin{align}
    \norm{\tilde\x\ps{l''+1}}_{L^r|\cG_{<l'}}
    &\le C_{\Tmp}  2^{p_{l''}}
      \sum_{k=1}^{d_{l''+1}}
      \left(
          1 
          + \E|_{\cG_{<l'}} \left[ \abs{ \inprod{\w\ps{l''}_k}{\tilde\x\ps{l''}}  }^{r p_{l''}} \right]
          + \left( C_W C_X \sqrt{d_{\max}} \right)^{r p_{l''}}
      \right)^{1/r} \nonumber \\
    &\le C_{\Tmp}  2^{p_{l''}}
      \sum_{k=1}^{d_{l''+1}}
      \left(
          1 
          + C_W^{r p_{l''}}
          \E|_{\cG_{<l'}} \norm{\tilde\x\ps{l''}}^{r p_{l''}} 
          + \left( C_W C_X \sqrt{d_{\max}} \right)^{r p_{l''}}
      \right)^{1/r} \nonumber \\
    &\le C_{\Tmp}  
      d_{\max}
      \left(
          \left( 4 C_W C_X \sqrt{d_{\max}} \right)^{p_{l''}}
          + \left( 2 C_W \norm{\tilde\x\ps{l''}}_{L^{r p_{l''}}|\cG_{<l'} } \right)^{p_{l''}}
      \right).  \locallabel{eq2}
  \end{align}
  We now inductively construct an upper bound $B(l''; r')$ for each $\norm{\tilde\x\ps{l''}}_{L^{r'}\mid\cG_{<l'}}$, where $l'' \in \{l', \dots, l-1\}$, $r' \in \bbN_+$.
  Define these bounds recursively by
  \begin{align*}
    B(l'; r')
    &:= \norm{ \tilde\x\ps{l'} }_{L^{r'}\mid\cG_{<l'}} \vee 2 C_X \sqrt{d_{\max}},  
    && \forall r' \in \bbN_+, \\
    B(l''+1; r')
    &:= C_{\Tmp} d_{\max} (4 C_W)^{p_{l''}} B(l''; r' p_{l''})^{p_{l''}}, 
    && \forall r' \in \bbN_+, \, l'' \in \{ l', \dots, l-1 \}.
  \end{align*}
  Here, the boundary condition is chosen so that $B(l'; r')\ge \norm{ \tilde\x\ps{l'} }_{L^{r'}\mid\cG_{<l'}}$ and we can merge the $4 C_W C_X \sqrt{d_{\max}}$ part into the  $2 C_W B(l''; r p_{l''})^{p_{l''}}$ part in (\localref{eq2}).
  By (\localref{eq2}), we have $B(l''; r') \ge \norm{ \tilde\x\ps{l''} }_{L^{r'} \mid\cG_{<l'}}$.
  Unrolling this recurrence, we obtain 
  \begin{align*}
    B(l; r)
    &= C_{\Tmp} d_{\max} (4C_W)^{p_{l-1}} B(l-1; r p_{l-1})^{p_{l-1}} \\
    &= \left( C_{\Tmp} d_{\max} \right)^{1 + p_{l-1}}
      (4C_W)^{p_{l-1} + p_{l-1}p_{l-2}} 
      B(l-2; r p_{l-1} p_{l-2})^{p_{l-1} p_{l-2}}  \\
    &= \left(C_{\Tmp} d_{\max} \right)^{1 + p_{l-1} + \cdots + p_{l-1} \cdots p_{l' + 1} }
      (4C_W)^{p_{l-1} + \cdots + p_{l-1} \cdots p_{l'}} 
      B(l'; r p_{l-1} \cdots p_{l'})^{p_{l-1} \cdots p_{l'}} \\
    &\le \left( 4 C_W C_{\Tmp} d_{\max} \right)^{L p_{l-1} \cdots p_{l'} }
      B(l'; r p_{l-1} \cdots p_{l'})^{p_{l-1} \cdots p_{l'}}. 
  \end{align*}
  To upper bound $B(l'; r p_{l-1} \cdots p_{l'})$, recall that $\norm{\E \x\ps{l'}} \le C_X \sqrt{d_{\max}}$ by Lemma~\ref{lemma: lb: | E xl+1 | <=}, and, by Lemma~\ref{lemma: lb: bad event probability, cond subg}, when conditioned on $\cG_{<l'}$, $\tilde{\x}\ps{l'} - \E \x\ps{l'}$ has an $(8 \sigma_X^2)$-subgaussian tail. 
  Hence, 
  \begin{align*}
    \norm{ \tilde\x\ps{l'} }_{L^{r p_{l-1} \cdots p_{l'}}|\cG_{<l'}}
    &\le \norm{ \tilde{\x}\ps{l'} - \E \x\ps{l'} }_{L^{r p_{l-1} \cdots p_{l'}}|\cG_{<l'}}
      + \norm{\E \x\ps{l'}} \\
    &\lesssim \sigma_X \sqrt{d_{\max}} + \sigma_X \sqrt{ r p_{l-1} \cdots p_{l'} } + C_X \sqrt{d_{\max}} \\
    &\lesssim C_X \sqrt{d_{\max}}, 
  \end{align*}
  where the last line comes from the condition $r p_L \cdots p_1 \le \sqrt{d_{\max}}$. 
  This implies that 
  \[
    B(l'; r p_{l-1} \cdots p_{l'})
    = \norm{ \tilde\x\ps{l'} }_{L^{r p_{l-1} \cdots p_{l'}}\mid\cG_{<l'}} \vee 2 C_X \sqrt{d_{\max}}
    \le C C_X \sqrt{d_{\max}}, 
  \]
  for some universal constant $C \ge 1$. 
  Thus, we have 
  \[
    B(l; r)
    \le \left( 4 C_W C_{\Tmp} d_{\max} \right)^{L p_{l-1} \cdots p_{l'} }
      \left( C C_X \sqrt{d_{\max}} \right)^{p_{l-1} \cdots p_{l'}}  
    \le \left(  C C_\phi C_W^2 C_X^2  d_{\max}^2  \right)^{L p_{l-1} \cdots p_{l'} }.
  \]
\end{proof}

\begin{corollary}
  \label{cor: lb: L4|G<l norm of tilde f}
  Under the conditions of Lemma~\ref{lemma: lb: tilde x moment bounds}, there is a universal constant $C \ge 1$ such that 
  \[  
    \norm{ \tilde{f}_L }_{L^4|\cG_{<l}}
    \le \left(  C C_\phi C_W C_X d_{\max} \right)^{2 L p_L \cdots p_{l} }, \quad 
    \forall l \in [L+1].
  \]
\end{corollary}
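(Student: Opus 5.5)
The corollary should follow directly from Lemma~\ref{lemma: lb: tilde x moment bounds} applied at the output layer. By definition, $\tilde{f}_L(\x) = \inprod{\a}{\tilde{\x}\ps{L+1}(\x)}$ with $\norm{\a} \le 1$. Cauchy--Schwarz then gives the pointwise bound $|\tilde{f}_L(\x)| \le \norm{\tilde{\x}\ps{L+1}(\x)}$. Taking the conditional $L^4$ norm given $\cG_{<l}$, we get
\[
  \norm{ \tilde{f}_L }_{L^4|\cG_{<l}} \le \norm{ \tilde{\x}\ps{L+1} }_{L^4|\cG_{<l}} .
\]
So it suffices to bound the conditional fourth moment of the last-layer representation.

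For this, I would invoke Lemma~\ref{lemma: tilde x moment bounds} with the layer index $L+1$ in place of $l$, the conditioning index $l' = l \in [L+1]$, and moment order $r = 4$. The product appearing in the exponent of that lemma is then $p_{(L+1)-1}\cdots p_{l} = p_L \cdots p_l$, which gives exactly
\[
  \left( C C_\phi C_W C_X d_{\max} \right)^{2L p_L\cdots p_l}.
\]
The hypotheses of the corollary are, by fiat, those of the lemma. The only extra point to check is the moment-order condition $r p_L\cdots p_1 \le \sqrt{d_{\max}}$ with $r=4$. Formally this is part of "the conditions of Lemma~\ref{lemma: lb: tilde x moment bounds}" as long as we read them as holding for $r = 4$. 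In the eventual parameter choice it holds because the total degree is polylogarithmic while $\sqrt{d_{\max}} \ge \sqrt d$.

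The edge case $l = L+1$ needs a remark. There $\cG_{<L+1} = \cG_{\le L}$, and the product $p_L \cdots p_{L+1}$ is empty, so it equals $1$. The lemma's proof still applies: the conditional subgaussianity of $\tilde{\x}\ps{L+1} - \E\x\ps{L+1}$ given $\cG_{\le L}$ follows from the same argument as Lemma~\ref{lemma: lb: bad event probability, cond subg}, and the resulting bound $C C_X\sqrt{d_{\max}}$ is dominated by the stated one.

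I expect no real obstacle. The only care needed is to match indices correctly and to note that the constant $C$ may be enlarged to absorb the base case. If one distrusts the final simplification in the lemma's proof, one can instead use the intermediate bound $(C C_\phi C_W^2 C_X^2 d_{\max}^2)^{L p_{l-1}\cdots p_{l'}}$, which is of the same form as the stated one after renaming the universal constant.
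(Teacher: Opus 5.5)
Your proposal is correct and is essentially the paper's proof: bound $|\tilde{f}_L| \le \norm{\tilde{\x}\ps{L+1}}$ using $\norm{\a} \le 1$, then apply Lemma~\ref{lemma: lb: tilde x moment bounds} at layer $L+1$ with $l' = l$ and $r = 4$. Your side remarks are details the paper leaves implicit, and they check out: $r = 4$ must satisfy the moment-order condition, and the edge case $l = L+1$ needs the conditional subgaussianity of Lemma~\ref{lemma: lb: bad event probability, cond subg} extended to index $L+1$, which its proof supports.
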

\begin{proof}
  By Lemma~\ref{lemma: lb: tilde x moment bounds}, we have 
  \[
    \norm{ \tilde{\x}\ps{L+1}(\x) }_{L^r|\cG_{<l}}
    \le \left(  C C_\phi C_W C_X d_{\max} \right)^{2 L p_L \cdots p_{l} }, \quad 
    \forall l \in [L+1].
  \]
  Therefore, 
  \[
    \norm{ \tilde{f}_L }_{L^4|\cG_{<l}}
    = \norm{ \inprod{\a}{\tilde{\x}\ps{L+1}} }_{L^4|\cG_{<l}}
    \le \norm{ \tilde{\x}\ps{L+1} }_{L^4|\cG_{<l}}
    \le \left(  C C_\phi C_W C_X d_{\max} \right)^{2 L p_L \cdots p_{l} }.
  \]

\end{proof}

We now combine the above bounds to prove Lemma~\ref{lemma: lb: approx f with poly}.
For convenience, we restate it below. 
\LBApproxFwithPoly*
\begin{proof}
  \def\currentprefix{proof: lb: approx f with poly. ;lkansdf}
  First, recall from Lemma~\ref{lemma: lb: error decomposition} that 
  \[
    \norm{ f_L - \tilde{f}_L }_{L^2}
    \le L (C_\phi C_W)^L \sqrt{d_{\max}} \eps_\phi 
      + \sum_{l=1}^{L} \P[\cB_l]^{1/4} \norm{ f_L }_{L^4}
      + \sum_{l=1}^{L} \P[\cB_l]^{1/4} \norm{ \tilde{f}_L }_{L^4|\cG_{<l}}.
  \]
  By Lemma~\ref{lemma: lb: bad event probability, cond subg}, Lemma~\ref{lemma: lb: L4 norm of fL}, and Corollary~\ref{cor: lb: L4|G<l norm of tilde f}, we can further bound the right-hand side by
  \begin{align*}
    \norm{ f_L - \tilde{f}_L }_{L^2}
    &\le L (C_\phi C_W)^L \sqrt{d_{\max}} \eps_\phi 
      + \sum_{l=1}^{L} 2 d_{\max} \exp\left( - \frac{(R_l / C_W)^2}{8 \cdot 4 \sigma_X^2} \right) 
      3 C_X \sqrt{d_{\max}} \\
      &\qquad
      + \sum_{l=1}^{L} 2 d_{\max} \exp\left( - \frac{(R_l / C_W)^2}{8 \cdot 4 \sigma_X^2} \right)
        \left(  C C_\phi C_W C_X d_{\max} \right)^{2 L p_L \cdots p_{l} }  \\
    &=: \Tmp_1 + \Tmp_2 + \Tmp_3.
  \end{align*}
  For the first term to be smaller than $\eps_* / 3$, it suffices to choose 
  \begin{equation}
    \locallabel{eq1}
     \eps_\phi 
    := \frac{\eps_*}{3 L (C_\phi C_W)^L \sqrt{d_{\max}}}.
  \end{equation}
  For the second term, since $R_L \le \cdots \le R_1$, we have 
  \[
    \Tmp_2 
    \le L \cdot 2 d_{\max} \exp\left( - \frac{(R_L / C_W)^2}{8 \cdot 4 \sigma_X^2} \right) 
      3 C_X \sqrt{d_{\max}}. 
  \]
  Thus, for $\Tmp_2$ to be bounded by $\eps_* / 3$, it suffices to require
  \begin{equation}
    \locallabel{eq2}
    R_L 
    \ge 20 C_W \sigma_X \sqrt{ \log\left( L C_X d_{\max} / \eps_*  \right) }.
  \end{equation}
  Finally, consider the third error term. 
  For each $l \in [L]$, by Assumption~\ref{assumption: lower bound assumption}\ref{itm: lb: analytic, strip growth} and Lemma~\ref{lemma: chebyshev approx: strip}, to achieve $\eps_\phi$-accuracy over $[\bar{z}\ps{l}_k - R_l, \bar{z}\ps{l}_k + R_l]$, it suffices to choose 
  \[
    \frac{4 R_l}{c_\phi} 
    \log\left( 
      \frac{4 R_l }{c_\phi \eps_\phi} 
      C_\phi \left( 1 + \abs{ \bar{z}\ps{l}_k } + R_{l} + c_\phi/2 \right)^{m_\phi} 
    \right)
    \le C_1 R_{l} \log( d_{\max}/\eps_\phi )
    =: p_l, 
  \]
  where $C_1 \ge 1$ is a large universal constant and $\eps_\phi$ is given by (\localref{eq1}). 
  We then bound $\Tmp_3$ by
  \begin{align*}
    \Tmp_3 
    &\le L \max_{l \in [L]} 2 d_{\max} \exp\left( - \frac{(R_l / C_W)^2}{8 \cdot 4 \sigma_X^2} \right)
        \left(  C C_\phi C_W C_X d_{\max} \right)^{2 L p_L \cdots p_{l} }  \\
    &\le 2 L d_{\max}
        \max_{l \in [L]}    
        \exp\left( - \frac{(R_l / C_W)^2}{8 \cdot 4 \sigma_X^2} \right)
        \left(  C C_\phi C_W C_X d_{\max} \right)^{
          2 L (C_1 \log(d_{\max} / \eps_\phi))^L
          R_L \cdots R_l
        }  \\
    &=: 2 L d_{\max} \max_{l \in [L]} \Tmp_{3, l}.
  \end{align*}
  Now, we choose $(R_l)_l$ so that $\Tmp_{3, l} \le \eps_* / (6 L d_{\max})$ for all $l \in [L]$.
  For notational simplicity, define 
  \begin{equation}
    \locallabel{eq: Lambda}
    \Lambda := 2 L (C_1 \log(d_{\max} / \eps_\phi))^L \log(C C_\phi C_W C_X d_{\max}).
  \end{equation}
  Then, we can rewrite $\Tmp_{3, l}$ as 
  \[
    \Tmp_{3, l}
    = \exp\left( - \frac{(R_l / C_W)^2}{8 \cdot 4 \sigma_X^2} + \Lambda R_L \cdots R_l \right), \quad 
    \forall l \in [L].
  \]
  Suppose that $R_L, \cdots, R_{l+1}$ have been chosen. 
  Then, for $\Tmp_{3, l}$ to be bounded by $\eps_* / (6 L d_{\max})$, it suffices to require
  \begin{multline*}
    \frac{(R_l / C_W)^2}{8 \cdot 4 \sigma_X^2} \ge 2 \Lambda R_L \cdots R_l, \quad 
    \exp\left( - \frac{1}{2} \frac{(R_l / C_W)^2}{8 \cdot 4 \sigma_X^2} \right)
    \le \frac{\eps_*}{6 L d_{\max}}, \\
    \Leftarrow\quad 
    R_l \ge 64 C_W^2 \sigma_X^2 \Lambda R_L \cdots R_{l+1}, \quad 
    R_l \ge 8 C_W \sigma_X \sqrt{ \log\left( 6 L d_{\max} / \eps_* \right) }. 
  \end{multline*}
  Since we require $R_L \le \cdots \le R_1$, we can replace $R_l$ with $R_L$ in the second condition. 
  Combining this with (\localref{eq2}) yields the following backward recurrence:
  \begin{align*}
    R_L 
    &\ge 20 C_W \sigma_X \sqrt{ \log\left( L C_X d / \eps_*  \right) }
      \vee 64 C_W^2 \sigma_X^2 \Lambda 
      \vee 8 C_W \sigma_X \sqrt{ \log\left( 6 L d_{\max} / \eps_* \right) }, \\
    R_l 
    &\ge 64 C_W^2 \sigma_X^2 \Lambda R_L \cdots R_{l+1}.
  \end{align*}
  Recall the definition of $\Lambda$ from (\localref{eq: Lambda}). 
  To satisfy the condition on $R_L$, it suffices to choose
  \begin{equation}
    \locallabel{eq: RL}
    R_L 
    = C_2 C_W^2 \sigma_X^2 \log^{L+1}(d_{\max} / \eps_\phi),
  \end{equation}
  for some large universal constant $C_2 \ge 1$. 
  Similarly, we can simplify the recurrence relation as 
  \[
    R_l 
    \ge C_3 C_W^2 \sigma_X^2 \log^{L+1}(d_{\max} / \eps_\phi) 
      R_L \cdots R_{l+1}, 
  \]
  where $C_3 \ge 1$ is a large universal constant. 
  Solving this recurrence relation gives 
  \begin{equation}
    \locallabel{eq: Rl}
    R_l 
    = \left( 
      C_3 C_W^2 \sigma_X^2 \log^{L+1}(d_{\max} / \eps_\phi)
      \cdot R_L 
    \right)^{2^{L-l-1}}, 
    \quad \forall l \in [L-1].
  \end{equation}
  In particular, this implies that for every $l \in [L]$, 
  \begin{align*}
    p_l 
    = C_1 R_l \log( d_{\max}/\eps_\phi )
    &\le C_1 
      \left( 
        C_3 C_W^2 \sigma_X^2 \log^{L+1}(d_{\max} / \eps_\phi)
        \cdot R_L 
      \right)^{2^L}
      \log( d_{\max}/\eps_\phi ) \\
    &\le C_4 
      \left( 
        C_W^4 \sigma_X^4 \log^{2L+2}(d_{\max} / \eps_\phi)
      \right)^{2^L+1}.
  \end{align*}
  Thus, when $\eps_\phi \ge d^{-O(1)}$ and $d_{\max} \le d^{O(1)}$, the total degree of $\tilde{f}$ is bounded by 
  \[
    \prod_{l=1}^{L} p_l 
    \le C_5 
      \left( 
        C_W^4 \sigma_X^4 
        (\log d)^{2L+2}
      \right)^{L(2^L+1)}
    \le C_6
      \left( 
        (C_\phi C_W^2)^{2^{L+3}}
        (\log d)^{2L+2}
      \right)^{L(2^L+1)}, 
  \]
  where $C_4, C_5, C_6 \ge 1$ are universal constants. 
  Recall from Assumption~\ref{assumption: lower bound assumption} that $C_\phi, C_W \le d^{c_0}$ with $c_0 < 2^{-2L-7}/L$.
  Hence, the above $(p_l)_l$ meet the condition $p_L \cdots p_1 \ll d^{1/2}$ required by Lemma~\ref{lemma: lb: tilde x moment bounds}.
\end{proof}

\subsection{Proof of the main lower bound theorem}
\label{subsec: lb: proof of the main theorem}

\begin{proof}[Proof of Theorem~\ref{thm: lower bound against constant-depth networks}]
  By Lemma~\ref{lemma: lb: approx f with poly}, we can approximate $f_L$ to accuracy $1/d$ by a polynomial $\tilde{f}_L$ whose degree is much smaller than $d^{1/2}$.
  Thus, by the orthogonality of Boolean monomials, we have 
  \[
    \norm{ f_* - f_L }_{L^2}
    \ge \norm{ f_* - \tilde{f}_L }_{L^2}
      - \norm{ f_L - \tilde{f}_L }_{L^2}
    \ge \norm{ \Proj_{> \deg(\tilde{f}_L)} f_*  }_{L^2}
      - (1/d)
    \ge 1/4.
  \]
\end{proof}

\section{Upper bound proofs}
\label{appendix: upper bound proofs}

As we have discussed in Section~\ref{sec: upper bound}, to prove Theorem~\ref{thm: upper bound}, it suffices to show by induction that, for each layer $l \in [L]$, $\x\ps{l}$ contains the features present in $\cK_l \setminus \{\varnothing\}$. 
Formally, we maintain the induction hypothesis Assumption~\ref{assumption: training induction hypothesis}, which we restate below for convenience. 
\UpperBoundIH*

\begin{fact}[Base case]
  \label{fact: ub: base case}
  For $l = 1$, Assumption~\ref{assumption: training induction hypothesis} holds with $I_1 = [d]$, and $\alpha\ps{1}_k = 1$, $S\ps{1}_k = \{k\}$ for all $k \in [d]$.
\end{fact}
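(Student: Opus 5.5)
The plan is to check the three items of Assumption~\ref{assumption: training induction hypothesis} directly from the definitions. We have $\x\ps{1} := \x \circ \y\ps{0} = \x \circ \mbf{0}$. For $k \in [d]$, we get $x\ps{1}_k = x_k \in \{\pm 1\}$, which is nowhere zero. For $k \in \{d+1, \dots, d+m\}$, we get $x\ps{1}_k = 0$ identically. This dichotomy gives item~\ref{itm: training IH: zero, nonzero} with $I_1 = [d]$.

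For item~\ref{itm: training IH: features}, take $\alpha\ps{1}_k = 1$ and $S\ps{1}_k = \{k\}$. Then $x\ps{1}_k = x_k = \chi_{\{k\}}(\x)$, and $\alpha\ps{1}_k = 1$ lies in the allowed set $\{1\} \cup (1 \pm 0.2)\hat{f}_*(S\ps{1}_k)$. Hence $\cF_1 = \{\{1\}, \dots, \{d\}\}$. The uniqueness clause only concerns features with $|S| \ge 2$, and there are none here, so it holds vacuously. (The singletons are in any case pairwise distinct.)

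For item~\ref{itm: training IH: covering Kl}, compare with \eqref{eq: def of cK}. It gives $\cK_1 \setminus \{\varnothing\} = \{\{1\}, \dots, \{d\}\} = \cF_1$, so the required inclusion holds, in fact with equality.

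No step here is a real obstacle. The only point needing a remark is that the coefficient condition is satisfied through the ``$\{1\}$'' branch. This is exactly why that branch is included: the raw input coordinates carry unit coefficients regardless of whether $\hat{f}_*(\{k\})$ is zero.
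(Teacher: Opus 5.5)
Your proposal is correct and takes the same approach as the paper, which only remarks that the base case follows directly from $\x\ps{1} = \x \circ \mbf{0}$. You spell out the item-by-item check: the nonzero pattern gives $I_1 = [d]$, the unit coefficients are admissible through the $\{1\}$ branch, and $\cF_1 = \cK_1 \setminus \{\varnothing\}$.
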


Consider the intermediate feature vector $\z\ps{l} := \bPhi( \x\ps{l} )$, where $\bPhi$ is the degree-$D$ polynomial feature mapping defined in \eqref{eq: poly feature mapping}. 
We index it using $H \in \binom{[d + m]}{\le D} \simeq [M]$. 
We also fix an arbitrary ordering of $\binom{[d+m]}{\le D}$ in the inner loop of Algorithm~\ref{alg: training alg} (line~\ref{line: coordinate descent for loop}), and write $H < H'$ if $H$ appears before $H'$ in this order. 
Also, recall from \eqref{eq: Hl} that $\cH_l$ denotes the collection of nontrivial entries of $\z\ps{l}$.

To implement the proof sketch in Section~\ref{sec: upper bound}, we start with a lemma that isolates the relevant terms in the loss \eqref{eq: training loss}.

\begin{lemma}[Loss decomposition]
  \label{lemma: ub: loss decomposition}
  Suppose that Assumption~\ref{assumption: training induction hypothesis} holds at layer~$l$. 
  Consider the $l$th iteration of the outer for loop of Algorithm~\ref{alg: training alg}. 
  Fix a nonzero coordinate $H \in [M]$ of $\z\ps{l}$. 
  Let $\cC_H := \{ H' \in \cH_l \,:\, S\ps{l}_H = S\ps{l}_{H'} \}\subset \cH_l$ denote the collection of the coordinates of $\z\ps{l}$ that represent the same feature as $H$. 
  Then, we can decompose the loss \eqref{eq: training loss} as 
  \[
    \Loss\ps{l}\big(\w\ps{l}\big)
    = \frac{1}{2} \left(
        \hat{f}_*\big( S\ps{l}_H \big) 
        - \sum_{H' \in \cC_H} \alpha\ps{l}_{H'} w\ps{l}_{H'} 
      \right)^2 
      - \alpha\ps{l}_{H} w\ps{l}_{H} \cE\ps{l}_H
      + \left( \text{$w\ps{l}_H$-independent terms} \right), 
  \]
  where 
  \[
    \cE\ps{l}_H
    := \sum_{S \in \cS_* \setminus \{ S\ps{l}_H \}}  \hat{f}_*(S)
        \hat{\E}\ps{l}\left[ \chi_{S\ps{l}_H}(\x) \chi_S(\x) \right]
        - \sum_{G \in \cH_l \setminus \cC_H} 
      \alpha\ps{l}_G   w\ps{l}_G 
      \hat{\E}\ps{l}\left[ \chi_{S\ps{l}_H}(\x) \chi_{S\ps{l}_G}(\x) \right].
  \]
\end{lemma}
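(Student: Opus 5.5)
The plan is to expand the empirical square loss around the single Fourier feature $S\ps{l}_H$. The key observations are that $\chi_S^2 \equiv 1$ on $\{\pm 1\}^d$, so the empirical average of $\chi_{S\ps{l}_H}^2$ is exactly $1$, and that $w\ps{l}_H$ enters the model only through the block $\cC_H$. Throughout, I use the convention $\hat{f}_*(S) = 0$ for $S \notin \cS_*$. This makes the statement valid even when $S\ps{l}_H \notin \cS_*$, since then $\cS_* \setminus \{S\ps{l}_H\} = \cS_*$.

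First, I use Assumption~\ref{assumption: training induction hypothesis} and \eqref{eq: zlH}. Coordinates outside $\cH_l$ vanish identically, so
\[
  f\ps{l}(\x) = \sum_{G \in \cH_l} \alpha\ps{l}_G w\ps{l}_G \chi_{S\ps{l}_G}(\x),
  \qquad
  f_*(\x) = \sum_{S \in \cS_*} \hat{f}_*(S) \chi_S(\x).
\]
I split the residual as $f_* - f\ps{l} = A\,\chi_{S\ps{l}_H} + \rho$, where
\[
  A := \hat{f}_*\big(S\ps{l}_H\big) - \sum_{H' \in \cC_H} \alpha\ps{l}_{H'} w\ps{l}_{H'},
  \qquad
  \rho := \sum_{S \in \cS_* \setminus \{S\ps{l}_H\}} \hat{f}_*(S)\chi_S - \sum_{G \in \cH_l \setminus \cC_H} \alpha\ps{l}_G w\ps{l}_G \chi_{S\ps{l}_G}.
\]
This splitting is exact, because by definition of $\cC_H$ every $G \in \cC_H$ contributes a multiple of $\chi_{S\ps{l}_H}$.

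Second, I expand
\[
  \Loss\ps{l} = \tfrac12 A^2\, \hat{\E}\ps{l}\big[\chi_{S\ps{l}_H}^2\big] + A\, \hat{\E}\ps{l}\big[\chi_{S\ps{l}_H}\rho\big] + \tfrac12 \hat{\E}\ps{l}[\rho^2].
\]
The first factor $\hat{\E}\ps{l}[\chi_{S\ps{l}_H}^2]$ equals $1$ pointwise, hence also empirically, and no concentration is needed. By linearity of $\hat{\E}\ps{l}$, the middle empirical average is exactly $\cE\ps{l}_H$. The term $\rho$ involves only weights $w\ps{l}_G$ with $G \notin \cC_H$, and $H \in \cC_H$. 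So neither $\rho$ nor $\cE\ps{l}_H$ depends on $w\ps{l}_H$, and $\tfrac12\hat{\E}\ps{l}[\rho^2]$ is $w\ps{l}_H$-independent.

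Finally, I write $A = \big(\hat{f}_*(S\ps{l}_H) - \sum_{H' \in \cC_H\setminus\{H\}} \alpha\ps{l}_{H'} w\ps{l}_{H'}\big) - \alpha\ps{l}_H w\ps{l}_H$. Then the cross term $A\,\cE\ps{l}_H$ equals $-\alpha\ps{l}_H w\ps{l}_H \cE\ps{l}_H$ plus a $w\ps{l}_H$-independent quantity, which gives the claimed form. There is no real obstacle here. The only points requiring care are the bookkeeping that every $\chi_{S\ps{l}_H}$-component of the model is absorbed into $A$, the convention for $\hat{f}_*(S\ps{l}_H)$ when $S\ps{l}_H \notin \cS_*$, and noting that $\chi^2 \equiv 1$ makes the quadratic coefficient exact even empirically.
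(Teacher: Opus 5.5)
Your proposal is correct and follows essentially the paper's argument. Both isolate the $\chi_{S\ps{l}_H}$-component of the residual and use $\chi^2 \equiv 1$, so the quadratic part is exact even under $\hat{\E}\ps{l}$; both then read off the $w\ps{l}_H$-linear cross term as $-\alpha\ps{l}_H w\ps{l}_H \cE\ps{l}_H$. The only difference is bookkeeping: you split the residual once as $A\,\chi_{S\ps{l}_H} + \rho$ and expand a single square, whereas the paper expands in two stages (first peeling off the $G \notin \cC_H$ model terms, then the off-$S\ps{l}_H$ Fourier terms of $f_*$).
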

\begin{remark}
  Note that the first term in the decomposition depends only on $(w\ps{l}_{H'})_{H' \in \cC_H}$ and is a simple quadratic function in $w\ps{l}_H$.
  Moreover, it does not depend on the sampled dataset. 
  The only (relevant) source of randomness comes from the $\cE\ps{l}_H$ part of the second term.
  By the orthogonality of  Boolean monomials, the $\hat{\E}\ps{l}$ parts in $\cE\ps{l}_H$ have expectation zero. 
  We will show in the next lemma that when the number of samples is reasonably large, $\cE\ps{l}_H$  is uniformly small with high probability for all $\w\ps{l}$ with $\norm{\w\ps{l}}_\infty$ bounded.
\end{remark}
\begin{proof}
  First, using Assumption~\ref{assumption: training induction hypothesis} and \eqref{eq: zlH}, we can rewrite $f\ps{l}(\x) = \inprod{\w\ps{l}}{\z\ps{l}}$ as 
  \[
    f\ps{l}(\x)
    = \sum_{G \in \cH_l} \alpha\ps{l}_G w\ps{l}_G \chi_{S\ps{l}_G}(\x)
    = \chi_{S\ps{l}_H}(\x) \sum_{H' \in \cC_H} \alpha\ps{l}_{H'} w\ps{l}_{H'} 
      + \sum_{G \in \cH_l \setminus \cC_H} \alpha\ps{l}_G w\ps{l}_G \chi_{S\ps{l}_G}(\x).
  \]
  Similarly, we can rewrite the target function as 
  \[
    f_* 
    = \hat{f}_*\big( S\ps{l}_H \big) \chi_{S\ps{l}_H} 
      + \sum_{S \in \cS_* \setminus \{ S\ps{l}_H \}} \hat{f}_*(S) \chi_S. 
  \]
  By the definition of $\cC_H$, $S\ps{l}_H$ differs from every $S\ps{l}_G$ and $S$ appearing in the corresponding sums, and $H \ne G$.
  Then, we decompose the loss as 
  \begin{align*}
    \Loss\ps{l}\big(\w\ps{l}\big)
    &= \frac{1}{2} \hat{\E}\ps{l} \left(
        f_*(\x)
        - \chi_{S\ps{l}_H}(\x) \sum_{H' \in \cC_H} \alpha\ps{l}_{H'} w\ps{l}_{H'} 
        - \sum_{G \in \cH_l \setminus \cC_H} \alpha\ps{l}_G w\ps{l}_G \chi_{S\ps{l}_G}(\x)
      \right)^2  \\
    &= \frac{1}{2} \hat{\E}\ps{l} \left(
        f_*(\x) - \chi_{S\ps{l}_H}(\x) \sum_{H' \in \cC_H} \alpha\ps{l}_{H'} w\ps{l}_{H'} 
      \right)^2 \\
      &\qquad
      + \alpha\ps{l}_{H} w\ps{l}_{H}  
      \sum_{G \in \cH_l \setminus \cC_H} 
      \alpha\ps{l}_G   w\ps{l}_G 
      \hat{\E}\ps{l}\left[ \chi_{S\ps{l}_H}(\x) \chi_{S\ps{l}_G}(\x) \right]
      + \left( \text{$w\ps{l}_H$-independent terms} \right). 
  \end{align*}
  For the first term, we can further decompose it as 
  \begin{multline*}
    \frac{1}{2} \hat{\E}\ps{l} \left(
        f_*(\x) - \chi_{S\ps{l}_H}(\x) \sum_{H' \in \cC_H} \alpha\ps{l}_{H'} w\ps{l}_{H'} 
      \right)^2 \\
    = \frac{1}{2} \left(
        \hat{f}_*\big( S\ps{l}_H \big) 
        - \sum_{H' \in \cC_H} \alpha\ps{l}_{H'} w\ps{l}_{H'} 
      \right)^2 
      - \alpha\ps{l}_{H} w\ps{l}_{H}  \sum_{S \in \cS_* \setminus \{ S\ps{l}_H \}}  \hat{f}_*(S)
        \hat{\E}\ps{l}\left[ \chi_{S\ps{l}_H}(\x) \chi_S(\x) \right]  \\
    + \left( \text{$w\ps{l}_H$-independent terms} \right).
  \end{multline*}
  Combining the preceding two displays completes the proof.
\end{proof}

We now control the error term $\cE\ps{l}_H$ in the above decomposition. 
\begin{lemma}[Error term concentration]
  \label{lemma: ub: error term concatenation}
  Let $\eps_{\cE}, \delta_{\P} \in (0, 1)$ be the target accuracy and failure probability, respectively. 
  Let $B_w \ge 1$ be a parameter controlling the largest possible $\norm{\w\ps{l}}_\infty$. 
  Suppose that 
  \[
    N 
    \gtrsim \frac{B_w^2 M^2}{\eps_{\cE}^2} \log\left( \frac{M}{\delta_{\P}} \right) .
  \]
  Then, with probability at least $1 - \delta_{\P}$, $\big| \cE\ps{l}_H \big| \le \eps_{\cE}$ holds uniformly for all $\w\ps{l}$ with $\norm{\w\ps{l}}_\infty \le B_w$ and $H \in \cH_l$.
\end{lemma}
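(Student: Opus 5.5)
The plan is to reduce the uniform statement to finitely many scalar concentration events, one for each pair of distinct monomials, and then apply a union bound. The key observation is that $\cE\ps{l}_H$ is linear in $\w\ps{l}$. All the randomness enters through the empirical correlations $\hat{\E}\ps{l}[\chi_{S}\chi_{S'}] = \hat{\E}\ps{l}[\chi_{S \oplus S'}]$ with $S \ne S'$. Each such quantity is an average of $N$ i.i.d.\ $\pm 1$ variables with mean zero, since $S \oplus S' \neq \varnothing$ and the Boolean characters are orthonormal. Throughout, I condition on the previous layers' weights. This is legitimate because stage $l$ uses fresh samples, and by Assumption~\ref{assumption: training induction hypothesis} the features $S\ps{l}_G$ are then deterministic.

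Define the finite family of sets
\[
  \mathcal{T} := \braces{ S\ps{l}_H \oplus S \,:\, H \in \cH_l,\ S \in \cS_* \cup \cF_{l+1/2},\ S \neq S\ps{l}_H }.
\]
Its size is at most $M(m + M) \lesssim M^2$, using $|\cS_*| \le m \le M$ and $|\cF_{l+1/2}| \le |\cH_l| \le M$. For each $T \in \mathcal{T}$, Hoeffding's inequality gives $|\hat{\E}\ps{l}\chi_T| \le \sqrt{2\log(4M^2/\delta_{\P})/N}$ with probability at least $1-\delta_{\P}/(2M^2)$. A union bound over $\mathcal{T}$ then gives the event $\cA := \{ \max_{T \in \mathcal{T}} |\hat{\E}\ps{l}\chi_T| \le \tau \}$, where $\tau \asymp \sqrt{\log(M/\delta_{\P})/N}$, with probability at least $1-\delta_{\P}$.

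On $\cA$, the bound follows deterministically for every $\w\ps{l}$ with $\norm{\w\ps{l}}_\infty \le B_w$ and every $H$. First bound the target part: $\sum_{S}|\hat f_*(S)|\,\tau \le \sqrt{m}\,\norm{f_*}_{L^2}\,\tau \le \sqrt{M}\,\tau$ by Cauchy--Schwarz. Next bound the network part: $\sum_{G \notin \cC_H} |\alpha\ps{l}_G|\,|w\ps{l}_G|\,\tau \le M B_w \max_G|\alpha\ps{l}_G|\,\tau$. For the coefficients, Assumption~\ref{assumption: training induction hypothesis}\ref{itm: training IH: features} gives $|\alpha\ps{l}_k| \le \max(1, 1.2|\hat f_*|) \le 1.2$, because $|\hat f_*(S)| \le \norm{f_*}_{L^2} \le 1$. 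Hence $|\alpha\ps{l}_G| \le 1.2^D = O(1)$, since $D$ is a universal constant. Altogether $|\cE\ps{l}_H| \lesssim B_w M \tau$. Requiring this to be at most $\eps_\cE$ yields exactly $N \gtrsim B_w^2 M^2 \eps_\cE^{-2}\log(M/\delta_{\P})$.

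The only delicate step is justifying that the features are fixed when the stage-$l$ samples are drawn. Uniformity over the continuum of weights is then handled automatically by linearity, so no covering argument is needed. Independence holds because $\y\ps{l-1}$ depends only on earlier stages' samples, and the conclusion holds conditionally, hence unconditionally.
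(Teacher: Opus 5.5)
Your proposal is correct and follows essentially the same argument as the paper. Both apply Hoeffding's inequality to each fixed product $\chi_{S}\chi_{S'}$ with $S \neq S'$, take a union bound over the $O(M^2)$ relevant pairs, and then bound $|\cE\ps{l}_H|$ deterministically, uniformly in $\w\ps{l}$ by linearity, to get $|\cE\ps{l}_H| \lesssim B_w M \tau$.

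The differences are cosmetic. You bound the network part by the triangle inequality with $\max_G |\alpha\ps{l}_G| \le 1.2^D$, whereas the paper uses Cauchy--Schwarz with $\norm{\balpha\ps{l}} \le 2^D\sqrt{M}$. You also make explicit the conditioning on earlier stages, under which fresh samples make the features $S\ps{l}_G$ deterministic; the paper leaves this implicit.
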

\begin{proof}
  Recall from Lemma~\ref{lemma: ub: loss decomposition} that 
  \[
    \cE\ps{l}_H
    := \sum_{S \in \cS_* \setminus \{ S\ps{l}_H \}}  \hat{f}_*(S)
        \hat{\E}\ps{l}\left[ \chi_{S\ps{l}_H}(\x) \chi_S(\x) \right]
        - \sum_{G \in \cH_l \setminus \cC_H} 
      \alpha\ps{l}_G   w\ps{l}_G 
      \hat{\E}\ps{l}\left[ \chi_{S\ps{l}_H}(\x) \chi_{S\ps{l}_G}(\x) \right].
  \]
  By definition, $S\ps{l}_H \ne S$ for any $S \in \cS_* \setminus \{ S\ps{l}_H \}$, and $S\ps{l}_H \ne S\ps{l}_G$ for any $G \in \cH_l \setminus \cC_H$. 
  
  Now, consider two arbitrary distinct subsets $S,S' \subset [d]$ and the random variable $\chi_S(\x) \chi_{S'}(\x)$. 
  Note that $\chi_S(\x)\chi_{S'}(\x) \sim \Unif(\{\pm 1\})$. 
  Therefore, by Hoeffding's inequality,
  \(
    \abs{\hat{\E}_N\left[ \chi_S(\x)\chi_{S'}(\x) \right] } \le \eps, 
  \)
  with probability at least $1 - \delta_{\P}$, if $N \ge 2 \log(2 / \delta_{\P}) / \eps^2$. 

  There are at most $M m$ choices of $(H,S) \in [M] \times \cS_*$ and $M^2$ choices of $(H,G) \in [M]\times [M]$. Thus, the union bound implies that
  \[  
    \abs{ \hat{\E}\ps{l}\left[ \chi_{S\ps{l}_H}(\x) \chi_S(\x) \right] } 
    \vee \abs{ \hat{\E}\ps{l}\left[ \chi_{S\ps{l}_H}(\x) \chi_{S\ps{l}_G}(\x) \right] }
    \le \eps, \quad 
    \forall 
    H \in \cH_l,\, 
    S \in \cS_* \setminus \{S\ps{l}_H\}, \,
    G \in \cH_l \setminus \cC_H,
  \]
  with probability at least $1 - 2M^2 \delta_{\P}$, as long as $N \ge 2 \log(2 / \delta_{\P}) / \eps^2$.

  Condition on this event.
  For any $\w\ps{l}$ with $\norm{\w\ps{l}}_\infty \le B_w$, by the Cauchy-Schwarz inequality, Parseval's theorem and Assumption~\ref{assumption: training induction hypothesis}\ref{itm: training IH: features}, we have 
  \begin{align*}
    \abs{ \cE\ps{l}_H }
    &\le \sum_{S \in \cS_* \setminus \{ S\ps{l}_H \}}  \abs{ \hat{f}_*(S) }
        \abs{ \hat{\E}\ps{l}\left[ \chi_{S\ps{l}_H}(\x) \chi_S(\x) \right] }
      + \sum_{G \in \cH_l \setminus \cC_H} 
      \abs{ \alpha\ps{l}_G   w\ps{l}_G }
      \abs{ \hat{\E}\ps{l}\left[ \chi_{S\ps{l}_H}(\x) \chi_{S\ps{l}_G}(\x) \right] } \\
    &\le \norm{ \hat{f}_* }
      \sqrt{
        \sum_{S \in \cS_* \setminus \{ S\ps{l}_H \}}  
        \hat{\E}\ps{l}\left[ \chi_{S\ps{l}_H}(\x) \chi_S(\x) \right]^2
      }
      + B_w \norm{\balpha\ps{l}}
      \sqrt{ 
        \sum_{G \in \cH_l \setminus \cC_H} 
        \hat{\E}\ps{l}\left[ \chi_{S\ps{l}_H}(\x) \chi_{S\ps{l}_G}(\x) \right]^2
      } \\
    &\le \sqrt{ |\cS_*| } \eps
      + B_w \norm{\balpha\ps{l}} \sqrt{ M } \eps 
    \le \left( 1 + B_w \norm{\balpha\ps{l}} \right) \sqrt{M} \eps.
  \end{align*}
  By Assumption~\ref{assumption: training induction hypothesis}\ref{itm: training IH: features} and the condition $\norm{ f_* }_{L^2} \le 1$, we have $\norm{\balpha\ps{l}} \le 2^D \sqrt{M}$. 
  Hence, we can further rewrite the above as $\abs{ \cE\ps{l}_H } \le 2^{D+1} B_w M \eps$.
  Replacing $\delta_{\P}$ with $\delta_{\P} / (2M^2)$ and $\eps$ with $\eps_{\cE} / \big( 2^{D+1} B_w M \big)$ completes the proof.
\end{proof}

A standard convex optimization argument gives the following one-dimensional quadratic optimization lemma. 
\begin{lemma} 
  \label{lemma: ub: 1D quad opt}
  Fix arbitrary $a > 0$ and $b, c \in \R$ and define $\Loss(w) := a w^2 + b w + c$.
  Consider the update rule $w_{t+1} = w_t - \eta \Loss'(w_t)$ with $w_0 = 0$ and step size $\eta > 0$.
  If $\eta \le 1/(2 a)$, then $|w_t - w_*| \le \exp(- 2 a \eta t) |w_*|$, where $w_* = -b /(2a)$ is the unique minimizer of $\Loss$. 
\end{lemma}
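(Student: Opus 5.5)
The plan is to write the gradient-descent iteration as a linear recursion for the error $w_t - w_*$ and show it contracts geometrically. Since $\Loss'(w) = 2aw + b = 2a(w - w_*)$, where $w_* = -b/(2a)$ is well defined because $a > 0$, the update becomes
\[
  w_{t+1} - w_* = w_t - w_* - 2a\eta (w_t - w_*) = (1 - 2a\eta)(w_t - w_*).
\]
Unrolling from $w_0 = 0$ gives $w_t - w_* = (1-2a\eta)^t (0 - w_*)$, hence $|w_t - w_*| = |1 - 2a\eta|^t |w_*|$.

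It remains to bound the contraction factor. The hypothesis $\eta \le 1/(2a)$ means $2a\eta \in (0,1]$, so $1 - 2a\eta \in [0,1)$ and the absolute value can be dropped. Using $1 - x \le e^{-x}$ for all real $x$ with $x = 2a\eta$, and raising both (nonnegative) sides to the $t$-th power, yields $(1-2a\eta)^t \le \exp(-2a\eta t)$. Substituting this into the identity above gives the claimed bound $|w_t - w_*| \le \exp(-2a\eta t)|w_*|$.

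There is no real obstacle here. The only point needing care is that the step-size condition keeps $1 - 2a\eta$ nonnegative, which is what allows the bound $1-x\le e^{-x}$ to be applied after raising to the $t$-th power. Without that condition one would need $|1-2a\eta|<1$ and would lose the clean exponential rate.
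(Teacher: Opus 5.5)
Your proof is correct and complete. The paper omits a proof and just calls this a standard convex optimization argument; your exact error recursion $w_{t+1}-w_* = (1-2a\eta)(w_t-w_*)$, combined with $0 \le 1-2a\eta \le e^{-2a\eta}$ under $\eta \le 1/(2a)$, is that standard argument written out.
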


Now, we combine the above three lemmas to show that one-pass coordinate descent will learn and deduplicate the correct features. 

\begin{lemma}[One-pass coordinate descent]
  \label{lemma: one-pass coordinate descent}
  Suppose that Assumption~\ref{assumption: training induction hypothesis} holds at layer $l$.
  Consider the $l$th iteration of the outer for loop of Algorithm~\ref{alg: training alg} (line~\ref{line: layerwise for loop}).
  Let $\eps_* \in (0, \kappa/2]$ and $\delta_{\P} \in (0, 1)$ denote the target accuracy and failure probability, respectively.
  Choose the algorithm parameters such that 
  \[
    \lambda_w = 2^{-D-1} \kappa, \quad 
    N 
    \gtrsim \frac{M^2 \log( M / \delta_{\P} )}{\kappa^{4D+4}}, \quad 
    \eta
    \le 2^{-2D},  \quad 
    T
    \gtrsim \frac{D \log(1/\kappa)}{ \kappa^{2D} \eta}.
  \]
  Then, with probability at least $1 - \delta_{\P}$, for every $H \in \cH_l$, the following holds after the $H$-th iteration of the inner for loop (line~\ref{line: coordinate descent for loop}): 
  \begin{enumerate}[(a)]
    \item If $\hat{f}_*(S\ps{l}_H) = 0$ or there exists some $H' \in \cH_l$ with $H' < H$ and $S\ps{l}_{H} = S\ps{l}_{H'}$, then $w\ps{l}_H = 0$.
    \item Otherwise, $w\ps{l}_H = (1 \pm 0.2) \hat{f}_*( S\ps{l}_H ) / \alpha\ps{l}_H$.
  \end{enumerate}
\end{lemma}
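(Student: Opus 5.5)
We argue by induction over the ordering of the inner loop. We work on the high-probability event of Lemma~\ref{lemma: ub: error term concatenation} and take $B_w$ to be an a priori bound on all weights, say $B_w = 2\cdot 1.2/\min_H|\alpha\ps{l}_H|$. Assumption~\ref{assumption: training induction hypothesis}\ref{itm: training IH: features} gives $|\alpha\ps{l}_i| \ge 0.8\kappa$ or $\alpha\ps{l}_i = 1$, so $|\alpha\ps{l}_H| \ge (0.8\kappa)^D$. Also $|\alpha\ps{l}_i|\le 1.2$, hence $|\alpha\ps{l}_H|\le 1.2^D\le 2^D$. The target tolerance is $\eps_{\cE} \asymp \kappa^{D+1}$ (times a small constant and $2^{-D}$), which makes the stated $N$ sufficient once $B_w = O(\kappa^{-D})$. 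This event is uniform over all $\w\ps{l}$ with $\norm{\w\ps{l}}_\infty\le B_w$, so it simultaneously covers every intermediate weight vector generated by the algorithm, provided we maintain (inductively) that all weights stay bounded by $B_w$.

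Fix $H\in\cH_l$ and suppose the claims hold for all earlier coordinates. When $w\ps{l}_H$ is updated, every later coordinate is still $0$. By Lemma~\ref{lemma: ub: loss decomposition}, as a function of $w\ps{l}_H$, the loss is
\[
\tfrac12\bigl(\hat f_*(S\ps{l}_H) - r_H - \alpha\ps{l}_H w\ps{l}_H\bigr)^2 - \alpha\ps{l}_H w\ps{l}_H\cE\ps{l}_H + \text{const},
\qquad r_H := \sum_{H'\in\cC_H, H'<H}\alpha\ps{l}_{H'}w\ps{l}_{H'} .
\]
Here $\cE\ps{l}_H$ does not depend on $w\ps{l}_H$, since it involves only $G\notin\cC_H$. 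This is a quadratic with $a=(\alpha\ps{l}_H)^2/2$. Its minimizer is
\[
w_* = \bigl(\hat f_*(S\ps{l}_H) - r_H + \cE\ps{l}_H\bigr)/\alpha\ps{l}_H .
\]
Because $\eta\le 2^{-2D}\le 1/(2a)$, Lemma~\ref{lemma: ub: 1D quad opt} gives $|w_T-w_*|\le e^{-(\alpha\ps{l}_H)^2\eta T}|w_*|$. With $T\gtrsim D\log(1/\kappa)/(\kappa^{2D}\eta)$ (and $(0.8)^{2D}$ absorbed into constants), this factor is at most a tiny power of $\kappa$, say $\kappa^{2D+2}$. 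Thus $w_T = w_* \pm \kappa^{D+1}\cdot O(|w_*|\kappa^{D})$, and $|w_*|$ is $O(\kappa^{-D})$.

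We then split into cases.
\begin{itemize}
\item[(i)] $H$ is the first element of $\cC_H$. Then $r_H=0$.
  \begin{itemize}
  \item If $\hat f_*(S\ps{l}_H)\ne0$, then $|\hat f_*|\ge\kappa$. We get $\alpha\ps{l}_H w_T = \hat f_*(S\ps{l}_H)\pm 2\eps_{\cE}$, which is $(1\pm0.1)\hat f_*(S\ps{l}_H)$. Its magnitude exceeds $\lambda_w$ because $|\alpha\ps{l}_H|\le 2^D$ gives $|w_T|\ge 0.9\kappa/2^D>2^{-D-1}\kappa$. So truncation does not fire, and (b) holds.
  \item If $\hat f_*(S\ps{l}_H)=0$, then $|w_T|\le 2|\cE\ps{l}_H|/|\alpha\ps{l}_H|\lesssim \eps_{\cE}/\kappa^D$. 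Choosing $\eps_{\cE}\ll \kappa^{D+1}2^{-D}\cdot0.8^D$ makes this at most $\lambda_w$, so $w=0$ and (a) holds.
  \end{itemize}
\item[(ii)] $H$ is not the first element of $\cC_H$. By induction, the first element $H_0$ of $\cC_H$ carries the whole residual, and all others are zero.
  \begin{itemize}
  \item If $\hat f_*(S\ps{l}_H)\ne0$, then $r_H=\alpha\ps{l}_{H_0}w\ps{l}_{H_0}$. Here we need the sharper statement that $r_H = \hat f_*(S\ps{l}_H)\pm O(\eps_{\cE})$, not merely within the factor $1\pm0.2$. Case (i) provides this, so we strengthen the induction hypothesis to record $|\alpha\ps{l}_{H_0} w\ps{l}_{H_0} - \hat f_*(S)|\le 3\eps_{\cE}$.
  \item If $\hat f_*(S\ps{l}_H)=0$, then $r_H=0$.
  \end{itemize}
  Either way, $|w_*|\lesssim\eps_{\cE}/|\alpha\ps{l}_H|$, so truncation zeroes it and (a) holds.
\end{itemize}

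In all cases the final weights are bounded by $B_w$, which closes the boundedness induction. Coordinates outside $\cH_l$ have $z\equiv0$ and stay at $0$.

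The main obstacle is the circularity between the uniform concentration and the weight bound. $\cE\ps{l}_H$ depends on earlier trained weights, which are themselves random functions of the same sample. This is why the uniform-in-$\w$ version of Lemma~\ref{lemma: ub: error term concatenation} is used, rather than a fixed-$\w$ bound. The second delicate point is arranging the constants so that a single $\eps_{\cE}$ does three things at once: it keeps the signal above $\lambda_w$, it pushes duplicates and spurious features below $\lambda_w$ despite the division by $|\alpha\ps{l}_H|\ge(0.8\kappa)^D$, and it is compatible with the stated sample size $N$.
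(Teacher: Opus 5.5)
Your proposal is correct and follows essentially the same route as the paper. Both work on the uniform concentration event of Lemma~\ref{lemma: ub: error term concatenation}, reduce each coordinate update to a one-dimensional quadratic via Lemma~\ref{lemma: ub: loss decomposition} and Lemma~\ref{lemma: ub: 1D quad opt}, and split into first-occurrence versus duplicate or zero-coefficient cases, with the duplicate case relying on the sharp residual left by the first occurrence rather than the $1 \pm 0.2$ factor; your single additive tolerance $\eps_{\cE}$ and the explicit weight-boundedness induction are only a slightly tidier version of the paper's $\eps_1, \eps_2$ bookkeeping.
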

\begin{proof}
  Let $\eps_1, \eps_2 > 0$ be parameters to be determined later. 
  In addition, choose $N\ps{l}$ according to Lemma~\ref{lemma: ub: error term concatenation}, so that 
  $| \cE\ps{l}_H | \le \eps_{\cE} := \eps_2 \kappa$ uniformly for all $H \in \cH_l$. 

  First consider case~(b); that is, $H$ is the first among $\cC_H$ to appear in the inner for loop (line~\ref{line: coordinate descent for loop}) of Algorithm~\ref{alg: training alg}. 
  Since we initialize $\w\ps{l} = 0$, in this case, we can further simplify Lemma~\ref{lemma: ub: loss decomposition} as 
  \begin{align*}
    \Loss\ps{l}\big(\w\ps{l}\big)
    &= \frac{1}{2} \left(
        \hat{f}_*\big( S\ps{l}_H \big) 
        - \alpha\ps{l}_H w\ps{l}_H
      \right)^2 
      - \alpha\ps{l}_{H} w\ps{l}_{H} \cE\ps{l}_H
      + \left( \text{$w\ps{l}_H$-independent terms} \right) \\
    &= \frac{1}{2} \left( \alpha\ps{l}_H \right)^2 \left( w\ps{l}_H \right)^2
      - \left( \hat{f}_*\big( S\ps{l}_H \big) + \cE\ps{l}_H \right) \alpha\ps{l}_{H} w\ps{l}_H
      + \left( \text{$w\ps{l}_H$-independent terms} \right). 
  \end{align*}
  Recall that $\alpha\ps{l}_H := \prod_{i \in H} \alpha\ps{l}_i$ and $H$ is a subset of $[d + m]$ of size at most $D$.
  By Assumption~\ref{assumption: training induction hypothesis}\ref{itm: training IH: features} and the condition $\norm{f_*}_{L^2} \le 1$, we have $|\alpha\ps{l}_H| \le 1.2^D$. 
  Therefore, by Lemma~\ref{lemma: ub: 1D quad opt}, as long as we choose $\eta \le 2^{-2D}$, we will have, for any $t \ge 0$,  
  \begin{align*}
    \abs{ w\ps{l}_H(t) - \frac{ \hat{f}_*\big( S\ps{l}_H \big) + \cE\ps{l}_H }{ \alpha\ps{l}_H }   }
    &\le \exp\left(
        - \big( \alpha\ps{l}_H \big)^2 \eta t 
      \right)
      \abs{ \frac{ \hat{f}_*\big( S\ps{l}_H \big) + \cE\ps{l}_H }{ \alpha\ps{l}_H } }  \\
    &\le \exp\left(
        -  (\kappa/2)^{2D} \eta t 
      \right)
      \abs{ \frac{ \hat{f}_*\big( S\ps{l}_H \big) + \cE\ps{l}_H }{ \alpha\ps{l}_H } }, 
  \end{align*}
  where the second inequality comes from Assumption~\ref{assumption: training induction hypothesis}\ref{itm: training IH: features} and the condition that $f_*$ has signal strength $\kappa$. 
  As a result, for any $\eps_1 > 0$, we have 
  \[
    T 
    \ge \frac{\log(1/\eps_1)}{ (\kappa/2)^{2D} \eta}
    \quad\Rightarrow\quad 
    w\ps{l}_H(T ) 
    = (1 \pm \eps_1 ) \frac{ \hat{f}_*\big( S\ps{l}_H \big) + \cE\ps{l}_H }{ \alpha\ps{l}_H } 
    = (1 \pm 2 \eps_1 \pm 2 \eps_2 ) \frac{ \hat{f}_*\big( S\ps{l}_H \big) }{ \alpha\ps{l}_H }.
  \]
  In particular, when $\eps_1, \eps_2 \le 0.05$, this implies 
  \[
    \abs{ w\ps{l}_H(T)}
    = \abs{ (1 \pm 0.2 ) \frac{ \hat{f}_*\big( S\ps{l}_H \big) }{ \alpha\ps{l}_H } }
    \ge 0.8 \frac{ \kappa }{ 1.2^D } 
    \ge 2^{-D} \kappa.
  \]
  Hence, in line~\ref{line: truncation} of Algorithm~\ref{alg: training alg}, as long as we choose the truncation threshold $\lambda_w$ to be at most $2^{-D-1}\kappa$, we will not accidentally truncate $w\ps{l}_H(T)$ to $0$. 

  Now, consider case~(a). 
  First, suppose that there exists $H' < H$ with $S\ps{l}_{H'} = S\ps{l}_H$. 
  Let $H'$ be the earliest such coordinate.
  Then, by case~(b), after the $H'$-th iteration, we have $ w\ps{l}_{H'}(t) = (1 \pm \eps_1 \pm \eps_2 ) \hat{f}_*(S\ps{l}_H) / \alpha\ps{l}_{H'} $. 
  Therefore, the loss decomposition in Lemma~\ref{lemma: ub: loss decomposition} becomes
  \begin{multline*}
    \Loss\ps{l}\big(\w\ps{l}\big)
    = \frac{1}{2} \left(
        (\pm \eps_1 \pm \eps_2) \hat{f}_*\big( S\ps{l}_H \big) 
        - \alpha\ps{l}_{H} w\ps{l}_{H} 
        - \sum_{H'' \in \cC_H \setminus \{ H, H' \}} \alpha\ps{l}_{H''} w\ps{l}_{H''} 
      \right)^2 \\
      - \alpha\ps{l}_{H} w\ps{l}_{H} \cE\ps{l}_H
      + \left( \text{$w\ps{l}_H$-independent terms} \right).  
  \end{multline*}
  Note that this bound is also valid when $\hat{f}_*(S\ps{l}_H) = 0$. 
  Since we initialize $\w\ps{l} = 0$, $w\ps{l}_{H''} = 0$ for all $H'' > H$. 
  Assume as an induction hypothesis that $w\ps{l}_{H''} = 0$ for all $H'' \in \cC_H$ between $H'$ and $H$.
  Then, the above becomes 
  \begin{align*}
    \Loss\ps{l}\big(\w\ps{l}\big)
    &= \frac{1}{2} \left(
        (\pm \eps_1 \pm \eps_2) \hat{f}_*\big( S\ps{l}_H \big) 
        - \alpha\ps{l}_{H} w\ps{l}_{H} 
      \right)^2 
      - \alpha\ps{l}_{H} w\ps{l}_{H} \cE\ps{l}_H
      + \left( \text{$w\ps{l}_H$-independent terms} \right) \\
    &= \frac{1}{2} \big( \alpha\ps{l}_H  \big)^2 \big( w\ps{l}_H \big)^2
      - \left(
        (\pm \eps_1 \pm \eps_2) \hat{f}_*\big( S\ps{l}_H \big) 
        + \cE\ps{l}_H
      \right) 
      \alpha\ps{l}_H w\ps{l}_H
      + \left( \text{$w\ps{l}_H$-independent terms} \right). 
  \end{align*}
  By the same analysis, we know 
  \[
    \abs{ w\ps{l}_H(T) }
    = \abs{ (1 \pm \eps_1 ) \frac{ (\pm \eps_1 \pm \eps_2) \hat{f}_*\big( S\ps{l}_H \big) + \cE\ps{l}_H }{ \alpha\ps{l}_H } }
    \le \abs{ \frac{ 4 (\eps_1 + \eps_2)  }{ \alpha\ps{l}_H } }
    \le \frac{ 2^{D+2} (\eps_1 + \eps_2)  }{ \kappa^D }. 
  \]
  This implies that in order for line~\ref{line: truncation} to truncate $w\ps{l}_H(T)$ to $0$ when $\lambda_w = 2^{-D-1} \kappa$, it suffices to require 
  \[
    \frac{ 2^{D+2} (\eps_1 + \eps_2)  }{ \kappa^D }
    \le 2^{-D-1} \kappa
    \quad\Leftarrow\quad 
    \eps_1 + \eps_2 
    \le 2^{-2D-3} \kappa^{D+1}.
  \]
  Note that this constraint is stronger than the previous constraint imposed by case~(b), and the same argument remains valid for $\hat{f}_*(S\ps{l}_H) = 0$. 
  Hence, it suffices to keep this constraint and choose $\eps_1 = \eps_2 = 2^{-2D-4} \kappa^{D+1}$.
  In addition, since $w\ps{l}_H$ always moves toward the corresponding minimizer (cf.~Lemma~\ref{lemma: ub: 1D quad opt}) and we initialize it with $0$, we also have 
  \[
    \norm{\w\ps{l}}_\infty 
    \le \max_H \abs{ \frac{ \hat{f}_*\big( S\ps{l}_H \big) + \cE\ps{l}_H }{ \alpha\ps{l}_H } }
    \le 2 \cdot \frac{2^D}{\kappa^D}. 
  \] 
  Finally, using Lemma~\ref{lemma: ub: error term concatenation} and Lemma~\ref{lemma: ub: 1D quad opt}, we can translate the conditions on $\eps_1, \eps_2$ into the following constraint on the number of samples and steps: 
  \[
    N 
    \gtrsim \frac{M^2 \log( M / \delta_{\P} )}{\kappa^{4D+4}}, \quad 
    T 
    \gtrsim \frac{D \log(1/\kappa)}{ \kappa^{2D} \eta}.
  \]
\end{proof}

With this lemma, we can now show that the induction hypothesis Assumption~\ref{assumption: training induction hypothesis} holds and estimate the corresponding $\alpha$'s.

\begin{lemma}[Induction step]
  \label{lemma: ub: induction step}
  Suppose that Assumption~\ref{assumption: training induction hypothesis} holds at layer $l \in [L]$ and choose the algorithm parameters according to Lemma~\ref{lemma: one-pass coordinate descent}.
  Then, Assumption~\ref{assumption: training induction hypothesis} also holds at layer $l+1$.
\end{lemma}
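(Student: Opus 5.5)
We work on the high-probability event of Lemma~\ref{lemma: one-pass coordinate descent} and read off the structure of $\y\ps{l} = \Select_{\w\ps{l}}(\w\ps{l} \odot \z\ps{l})$, hence of $\x\ps{l+1} = \x \circ \y\ps{l}$. Let $\cH_l^+ \subset \cH_l$ be the set of coordinates $H$ falling in case~(b) of Lemma~\ref{lemma: one-pass coordinate descent}. These are the $H$ with $\hat{f}_*(S\ps{l}_H) \ne 0$ that come first in the ordering among $\cC_H$. Every coordinate outside $\cH_l$ keeps weight $0$, because $z\ps{l}_H \equiv 0$ there. Lemma~\ref{lemma: one-pass coordinate descent}(a) gives $w\ps{l}_H = 0$ on $\cH_l \setminus \cH_l^+$. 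Hence the nonzero coordinates of $\w\ps{l}$ are exactly $\cH_l^+$. The map $H \mapsto S\ps{l}_H$ is injective on $\cH_l^+$, and its image is $\cF_{l+1/2} \cap \cS_*$. This image is contained in $\cS_*$, so $|\cH_l^+| \le m$. Therefore $\Select$ keeps all of these entries and pads with zeros, so no feature is lost to the width cap.

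We now verify the three parts of Assumption~\ref{assumption: training induction hypothesis} at layer $l+1$. For \ref{itm: training IH: zero, nonzero}, the first $d$ coordinates of $\x\ps{l+1}$ are $x_i = \chi_{\{i\}}$, which are nowhere zero. Each padded coordinate is identically zero. Each selected coordinate equals $w\ps{l}_H z\ps{l}_H = w\ps{l}_H \alpha\ps{l}_H \chi_{S\ps{l}_H}$ with $w\ps{l}_H \ne 0$, so it is nowhere zero. For \ref{itm: training IH: features}, the coefficient of a selected coordinate is $\alpha\ps{l+1}_k = w\ps{l}_H \alpha\ps{l}_H$. Lemma~\ref{lemma: one-pass coordinate descent}(b) gives $w\ps{l}_H = (1\pm0.2)\hat{f}_*(S\ps{l}_H)/\alpha\ps{l}_H$, hence $\alpha\ps{l+1}_k \in (1 \pm 0.2)\hat{f}_*(S\ps{l+1}_k)$. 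The raw coordinates keep $\alpha = 1$.

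The uniqueness clause in \ref{itm: training IH: features} needs care, and it is the step I expect to be the main obstacle. Among the $y$-coordinates, uniqueness follows from the injectivity of $H \mapsto S\ps{l}_H$ on $\cH_l^+$. A feature $S$ with $|S| \ge 2$ cannot appear among the raw coordinates, since those are singletons. The only remaining collisions would be a $y$-feature equal to a singleton $\{i\}$, or equal to $\varnothing$. The uniqueness clause only concerns $|S| \ge 2$, so singleton duplicates are harmless. Still, I will check that they do not break the counting bounds used later: the estimate $\norm{\balpha\ps{l}} \le 2^D\sqrt{M}$ needs only the coefficient bound, and $\Select$ needs only $|\cH_l^+| \le m$. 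The empty feature should likewise cause no problem, because the clause does not cover it. I will also confirm that $\alpha\ps{l}_H$ makes sense for every $H \in \cH_l$, since each $H \in \cH_l$ is a subset of $I_l$.

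For \ref{itm: training IH: covering Kl}, take $S \in \cK_{l+1} \setminus \{\varnothing\}$. If $S \in \cK_1$, then $S = \{i\}$ is a raw coordinate. If $S \in \cK_l$ with $|S| \ge 2$, then $S \in \cS_*$, and by hypothesis $S = S\ps{l}_k$ for some $k \in I_l$. The singleton index set $H = \{k\}$ then lies in $\cH_l$ with $S\ps{l}_H = S$. If $S \in \cK_l^{\oplus D} \cap \cS_*$, write $S = K_1 \oplus \cdots \oplus K_D$ with $K_j \in \cK_l$. Drop the empty sets. Then cancel repeated nonempty sets in pairs, which is allowed because $K \oplus K = \varnothing$. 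Each remaining $K_j$ is represented by some index in $I_l$. These indices are distinct because the remaining sets are distinct and the non-singleton ones have unique representatives; singletons equal to some $y$-feature are simply taken from the raw coordinates. This produces $H \subset I_l$ with $|H| \le D$ and $S\ps{l}_H = S$.

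So in every case $S \in \cF_{l+1/2} \cap \cS_*$, and the case~(b) representative of $\cC_H$ lies in $\cH_l^+$. That representative is selected, so $S \in \cF_{l+1}$. This establishes all three parts of Assumption~\ref{assumption: training induction hypothesis} at layer $l+1$, which completes the induction step.
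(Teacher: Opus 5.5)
Your proposal is correct and follows essentially the same route as the paper. On the event of Lemma~\ref{lemma: one-pass coordinate descent}, the only nonzero weights are the first representatives of the features in $\cF_{l+1/2} \cap \cS_*$, so at most $m$ entries survive and $\Select$ discards nothing, while concatenation with $\x$ only creates harmless singleton duplicates. The one difference is that you verify $\cF_{l+1/2} \supset \cK_{l+1}$ explicitly: old features persist via $H = \{k\}$, and you cancel repeated sets in pairs before forming $H$. The paper instead recalls this inclusion from its proof sketch.
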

\begin{proof}
  Recall from the discussion after Fact~\ref{fact: ub: base case} that $\cF_{l+1/2} := \{ S\ps{l}_H \}_{H \in  \cH_l}$ contains all elements in $\cK_{l+1}$.
  By Lemma~\ref{lemma: one-pass coordinate descent}, for any $S \in \cK_{l+1} \cap \cS_*$, there exists a unique coordinate $H_S \in [M]$ with $w\ps{l}_{H_S} = (1 \pm 0.2) \hat{f}_*(S) / \alpha\ps{l}_{H_S}$. 
  Combining this identity with \eqref{eq: zlH} gives
  \[
    w\ps{l}_{H_S} z\ps{l}_{H_S}
    = (1 \pm 0.2) \hat{f}_*(S) \chi_S(\x).
  \]
  In other words, $\alpha\ps{l+1}_{H_S} = (1 \pm 0.2) \hat{f}_*(S)$.
  Note that the same bound and uniqueness result still hold for $S \in \cS_* \setminus \cK_{l+1} $, but we do not guarantee the existence of $H_S$. 
  Also, by Lemma~\ref{lemma: one-pass coordinate descent}, all other coordinates of $\w\ps{l}$ and, therefore, the corresponding entries in $\w\ps{l} \odot \z\ps{l}$, are always $0$. 
  In particular, this implies that the number of nonzero entries in $\w\ps{l} \odot \z\ps{l}$ is at most $|\cS_*| \le m$. 
  Hence, $\Select$ will never discard needed entries. 
  Recall that $\x\ps{l+1}$ is obtained by concatenating $\x$ with $\y\ps{l}$. 
  Therefore, each of the sets $\{1\}, \dots, \{d\}$ can correspond to at most two entries in $\x\ps{l+1}$, with the corresponding $\alpha$ equal either to $1$ or to the corresponding Fourier coefficient.
\end{proof}

Our main optimization theorem is a corollary of this lemma. 
\begin{proof}[Proof of Theorem~\ref{thm: upper bound}]
  First, by Fact~\ref{fact: ub: base case} and Lemma~\ref{lemma: ub: induction step}, Assumption~\ref{assumption: training induction hypothesis} holds for all $l \in [L]$. 
  For the last layer, to achieve the target accuracy $\eps_*$, we need more samples and steps. 
  Let $\tilde{\eps}_* > 0$ be a parameter to be determined later. 
  The proof of Lemma~\ref{lemma: one-pass coordinate descent}, \textit{mutatis mutandis}, shows that to ensure $w\ps{L}_H = (1 \pm \tilde{\eps}_*) \hat{f}_*(S\ps{L}_H) / \alpha\ps{L}_H$ for every $H$ belonging to case (b), it suffices to choose $\eps_1 \vee \eps_2 \le \tilde{\eps}_* \wedge 2^{-2D-4} \kappa^{D+1}$, where $\eps_1, \eps_2$ are defined in the proof of Lemma~\ref{lemma: one-pass coordinate descent}. 
  In particular, this translates into the constraints:
  \[
    N 
    \gtrsim \frac{ M^2 \log( M / \delta_{\P} ) }{
      \kappa^{2D+2} 
      \left( \tilde{\eps}_*^2 \wedge \kappa^{2D+2} \right) 
    } , \quad 
    T 
    \gtrsim \frac{
      \log\left(\tilde{\eps}_*\inv \vee 1/\kappa \right)
    }{ 
      \kappa^{2D} \eta
    }.
  \] 
  These two conditions are stronger than those required by earlier layers. 
  For simplicity, we will just replace all earlier requirements with these two. 
  By Parseval's theorem, conditioned on the above event, we have 
  \[
    \norm{ f_* - f }_{L^2} 
    = \norm{ \hat{f}_* - \hat{f} }_2 
    \le \sqrt{|\cS_*|} \tilde{\eps}_*. 
  \] 
  Finally, taking a union bound over all $L$ layers and replacing $\delta_{\P}$ with $\delta_{\P}/L$ and $\tilde{\eps}_*$ with $\eps_*/\sqrt{|\cS_*|}$ completes the proof.
\end{proof}

\section{Compiling the learner network}
\label{appendix: approx ub net with lb net}

In this section, we show that we can efficiently compile the learner network from our upper bound (cf.~Theorem~\ref{thm: upper bound} and \eqref{eq: learner network}) into a network whose architecture matches our lower bound (cf.~\eqref{eq: lb: network definition} and Assumption~\ref{assumption: lower bound assumption}). 

First, recall from Appendix~\ref{appendix: upper bound proofs} that Assumption~\ref{assumption: training induction hypothesis} holds for every $l \in [L]$.
In particular, for any nontrivial coordinate $k \in I_l$, we have 
\[
  x\ps{l}_k = \alpha\ps{l}_k \chi_{S\ps{l}_k}(\x), \quad 
  \alpha\ps{l}_k \ne 0, \quad S\ps{l}_k \subset [d].
\]
Instead of computing $\x\ps{l}$ using a layer satisfying Assumption~\ref{assumption: lower bound assumption}, we compute a normalized version of it.
Specifically, we define 
\[
  \tilde{x}\ps{l}_k(\x)
  = \begin{cases}
    \chi_{S\ps{l}_k}(\x), & k \in I_l, \\
    0, & \text{otherwise},
  \end{cases}
  \quad \forall \x \in \{\pm 1\}^d,\ k \in [d+m]. 
\]
By our analysis in Appendix~\ref{appendix: upper bound proofs} and the fact that the first $d$ coordinates of $\tilde{\x}\ps{l}(\x)$ are always $\x$, for any $j \in [d + m]$, either $\tilde{x}\ps{l+1}_j \equiv 0$, or there exists $H\ps{l}_j \subset I_l$ such that 
\[
  \tilde{x}\ps{l+1}_j 
  \equiv \chi_{S\ps{l}_{H\ps{l}_j}} 
  \quad\text{where}\quad 
  S\ps{l}_{H\ps{l}_j} 
  := \bigoplus_{i \in H\ps{l}_j} S\ps{l}_i. 
\]
For each $l \in [L]$, let $\A\ps{l} \in \{0, 1\}^{(d+m) \times (d+m)}$ be the corresponding incidence matrix: 
\begin{equation}
  \label{eq: compile: A}
  A\ps{l}_{ji} := \indi\braces{ i \in H\ps{l}_j  },
\end{equation}
with the convention that $A\ps{l}_{ji} = 0$ if $j \notin I_{l+1}$.
Define the maximum fan-out of the network as 
\begin{equation}
  \label{eq: fan-out Delta}
  \Delta := \max_{l \in [L]} \max_{i \in [d+m]} \abs{ \braces{ j \in [d+m] \,:\, i \in H\ps{l}_j } }.
\end{equation}
In words, $\Delta$ is the maximum number of times a given input coordinate is used to construct the outputs of a layer.
The following are the main results of this section. 

\begin{lemma}[Compiling the network]
  \label{lemma: compile: main}
  Suppose that the target function $f_*: \{\pm 1\}^d \to \R$ belongs to $\HS_{L, D, m, \kappa}$ with $L \le d$, $D = O(1), m \le \poly(d), \kappa \ge 1/\poly(d)$ and $\norm{f_*}_{L^2} \le 1$.
  Let our learner network $(\x\ps{l})_l, f$ be defined by \eqref{eq: learner network} and suppose that it is (successfully) trained to fit $f_*$ to $\eps_*$-accuracy with $\eps_* = o(1)$ using Algorithm~\ref{alg: training alg}.\footnote{
    We say the model is successfully trained if all high-probability events used in the proof occur. 
    By (the proof of) Theorem~\ref{thm: upper bound}, this happens with probability at least $1 - \delta_{\P}$.
  } 
  Let $\Delta$ denote the maximum fan-out of the learner network (cf.~\eqref{eq: fan-out Delta}). 

  Then, there exist activations $\bphi_0, \bphi_1, \dots, \bphi_{L+1}: \R^{d+m} \to \R^{d+m}$ and weight matrices $\W\ps{0} \in \R^{(d+m) \times d}$, $\W\ps{1}, \dots, \W\ps{L+1} \in \R^{(d+m) \times (d+m)}$, $\a \in \R^{d+m}$ such that the following hold: 
  \begin{enumerate}[(a)]
    \item They satisfy Assumption~\ref{assumption: lower bound assumption}\ref{itm: lb: op norm}-\ref{itm: lb: analytic, strip growth} with $C_W = 2 \vee (\pi/2) \sqrt{D \Delta}$, $c_\phi = 1$, $C_\phi = 2$, $m_\phi = 1$. 
    \item Define $\hat{\x}\ps{1}, \dots, \hat{\x}\ps{L+2}: \{\pm 1\}^d \to \R^{d + m}$ inductively by 
      \[
        \hat{\x}\ps{1}(\x) = \bphi_0\left( \W\ps{0} \x \right), \quad 
        \hat{\x}\ps{l+1}(\x) = \bphi_l\left( \W\ps{l}\hat{\x}\ps{l}(\x) \right), \quad 
        \forall l \in [L+1].
      \]
      We have $f(\x) = \inprod{\a}{\hat{\x}\ps{L+2}(\x)}$ for all $\x \in \{\pm 1\}^d$.
  \end{enumerate}
  Moreover, the activation functions and weights can be constructed from the trained network in polynomial time. 
\end{lemma}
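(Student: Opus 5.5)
The plan is to realize the trained network exactly on the hypercube. Each layer works with the normalized features $\tilde{\x}\ps{l}$, whose nonzero entries are $\pm 1$-valued monomials $\chi_{S\ps{l}_k}$. The network then multiplies them using the parity identity: for $y_i \in \{\pm 1\}$,
\[
  \prod_{i \in H} y_i = \cos\Big( \tfrac{\pi}{2} \sum_{i \in H} (1 - y_i) \Big) = \cos\Big( \tfrac{\pi}{2}|H| - \tfrac{\pi}{2} \sum_{i \in H} y_i \Big).
\]
This works because the cosine argument equals $\pi$ times the number of $-1$'s. The constant shift $\tfrac{\pi}{2}|H|$ is absorbed into the coordinate-dependent activation, as the paper allows. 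So a single linear map followed by a shifted cosine computes every needed product exactly.

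\textbf{Input layer.} Take $\W\ps{0} = [\Id_d; \mbf{0}]$ and let $\bphi_0$ be the identity on the first $d$ coordinates and $0$ elsewhere. Then $\hat{\x}\ps{1} = \tilde{\x}\ps{1} = \x \circ \mbf{0}$.

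\textbf{Middle layers $l \in [L]$.} For output coordinate $j$, choose row $j$ of $\W\ps{l}$ and the activation $\phi_{l,j}$ in one of three ways.
\begin{itemize}
  \item If $j \le d$, the output must reproduce the copy of $\x$. Row $j$ is $\e_j\trans$ and the activation is the identity.
  \item If $j \in I_{l+1}$ is a learned feature, row $j$ is $\tfrac{\pi}{2} \A\ps{l}_{j,:}$ with $\A\ps{l}$ the incidence matrix in \eqref{eq: compile: A}. The activation is $\phi_{l,j}(z) = \cos(\tfrac{\pi}{2}|H\ps{l}_j| - z)$.
  \item Otherwise, the row is zero and $\phi_{l,j} \equiv 0$.
\end{itemize}
By induction using the parity identity, $\hat{\x}\ps{l+1} = \tilde{\x}\ps{l+1}$ on $\{\pm 1\}^d$.

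\textbf{Readout layers.} The final learner layer gives $f = \sum_{H} w\ps{L}_H \alpha\ps{L}_H \chi_{S\ps{L}_H}$. By Lemma~\ref{lemma: one-pass coordinate descent}, at most $|\cS_*| \le m$ of these terms are nonzero. Layer $L+1$ computes these monomials from $\hat{\x}\ps{L+1}$ with the same cosine gadget, one per nonzero $H$, so they fit in width $d+m$. The vector $\a$ carries the coefficients $w\ps{L}_H \alpha\ps{L}_H = (1 \pm 0.2)\hat{f}_*(S)$; the factor $\hat{\x}\ps{L+2}$ in the statement is indexed so that this uses the last weight matrix. The norm of these coefficients can be up to $1.2\norm{f_*}_{L^2}$, possibly above $1$. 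To get $\norm{\a} \le 1$, scale the last activations to $1.2\cos(\cdot)$ and divide $\a$ by $1.2$.

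\textbf{Assumption~\ref{assumption: lower bound assumption}\ref{itm: lb: sigma derivatives bounds}--\ref{itm: lb: analytic, strip growth}.}
\begin{itemize}
  \item The identity has derivatives bounded by $1$ and $|a+ib| \le 1+|a|+|b|$, which gives growth with $m_\phi = 1$.
  \item The functions $c\cos(\theta - z)$ with $c \le 1.2$ are entire, with $|\phi(0)|, |\phi'|, |\phi''| \le 1.2 \le 2$. On $|\Im z| \le 1/2$ they satisfy $|c\cos(\cdot)| \le 1.2\cosh(1/2) < 2$.
\end{itemize}
So $c_\phi = 1$, $C_\phi = 2$, $m_\phi = 1$ work.

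\textbf{Operator norm, the main obstacle.} For a nonnegative matrix, $\norm{\W}_{\OP}^2 \le (\max \text{row sum})(\max \text{column sum})$ by Schur's test.
\begin{itemize}
  \item Each feature row of $\W\ps{l}$ has at most $D$ entries equal to $\pi/2$, and each identity row has a single $1$.
  \item Each column has at most $\Delta$ entries equal to $\pi/2$, by the definition \eqref{eq: fan-out Delta} of the fan-out, plus possibly one $1$ from the identity copy.
\end{itemize}
This gives $\norm{\W\ps{l}}_{\OP} \lesssim (\pi/2)\sqrt{D(\Delta+1)}$. The delicate point is bookkeeping to land on the stated constant $2 \vee (\pi/2)\sqrt{D\Delta}$. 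I would split $\W\ps{l}$ into the identity block and the $\tfrac{\pi}{2}\A\ps{l}$ block. Since the two blocks act on disjoint sets of rows, $\norm{\W\ps{l}}_{\OP}^2 \le 1 + (\pi/2)^2 D\Delta$. I would then absorb the $1$, using $\Delta \ge 1$ or slack in the constants. The same bounds apply to $\W\ps{0}$ and $\W\ps{L+1}$.

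\textbf{Polynomial time.} The sets $H\ps{l}_j$ are read off from the surviving nonzero weights of the trained network, in polynomial time. The coefficients in $\a$ are read off from $\w\ps{L}$ together with the known products $\alpha\ps{L}_H$.
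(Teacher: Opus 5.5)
Your construction matches the paper's: the shifted-cosine parity neuron, the incidence matrix $\A\ps{l}$ with Schur's test, the trivial input layer, and output weights read off from $\w\ps{L}$. Your two variations are legitimate. You use identity activations for the copies of $\x$, and you scale the last activations by $1.2$ instead of the paper's $\W\ps{L+1}=2\Id$ with $\a$ halved.

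The step you flagged does not close as written. The block split gives $\norm{\W\ps{l}}_{\OP}\le\sqrt{1+(\pi/2)^2D\Delta}$. This strictly exceeds $(\pi/2)\sqrt{D\Delta}$, and it exceeds $2$ as soon as $D\Delta\ge 2$. The target $2\vee(\pi/2)\sqrt{D\Delta}$ has no slack to absorb the $1$.

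If $\Delta$ counts only the feature rows, as your bookkeeping assumes, the stated constant is actually false for your matrix. Take the rows $\e_1\trans,\e_2\trans,\e_3\trans,\tfrac{\pi}{2}(\e_1+\e_2)\trans,\tfrac{\pi}{2}(\e_1+\e_3)\trans,\tfrac{\pi}{2}(\e_2+\e_3)\trans$, which arise in the first layer when $x_1x_2,x_1x_3,x_2x_3\in\cS_*$. They have $D=2$, feature fan-out $2$, and operator norm $\sqrt{1+\pi^2}>\pi$.

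The fix is that \eqref{eq: fan-out Delta} already includes the copy rows. For $j\le d$ one has $\tilde x\ps{l+1}_j=x_j$ with $H\ps{l}_j=\{j\}$, so each $i\le d$ spends one unit of its fan-out on its own copy. This is why the paper's corollary counts fan-out $2$ in the first layer of a deep quadratic function. Then column $i\le d$ of your matrix has one entry $1$ and at most $\Delta-1$ entries $\pi/2$. So every column sum is at most $(\pi/2)\Delta$, every row sum is at most $(\pi/2)D$, and Schur's test gives $(\pi/2)\sqrt{D\Delta}$ directly. The paper avoids the issue entirely by implementing the copies with the same gadget: $\cos(\pi/2-z)=\sin z$ applied to $\tfrac{\pi}{2}x_j$, so that $\W\ps{l}=\tfrac{\pi}{2}\A\ps{l}$ exactly.

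Your readout also has an indexing slip. Since $\y\ps{L}=\Select_{\w\ps{L}}(\w\ps{L}\odot\z\ps{L})$, your middle layer $l=L$ (built from $\A\ps{L}$) already places each monomial $\chi_{S\ps{L}_H}$ with $w\ps{L}_H\neq 0$ in its own coordinate of $\hat\x\ps{L+1}=\tilde\x\ps{L+1}$. The sets $H\subset I_L$ index coordinates of $\hat\x\ps{L}$, not of $\hat\x\ps{L+1}$. So ``the same cosine gadget, one per nonzero $H$'' cannot be applied at layer $L+1$. That layer only needs to copy the selected coordinates, using singleton gadgets scaled by $1.2$ or a scaled identity as in the paper. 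With that correction, your normalization $\norm{\a}\le 1$ and the activation bounds go through.
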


\begin{corollary}
  \label{cor: compile: deep quad}
  Let our learner network $(\x\ps{l})_l, f$ be defined by \eqref{eq: learner network} and suppose that it is (successfully) trained to fit a deep quadratic function to $\eps_*$-accuracy with $\eps_* = o(1)$ using Algorithm~\ref{alg: training alg}. 
  Then, $\Delta \le 5$. 
  As a result, Lemma~\ref{lemma: compile: main} holds, and the constructed network satisfies all conditions in Assumption~\ref{assumption: lower bound assumption} except for the condition $L = O(1)$.
\end{corollary}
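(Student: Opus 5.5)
The plan is to read off the structure of the trained network from the induction of Appendix~\ref{appendix: upper bound proofs}, and then bound the fan-out by a combinatorial argument on the spectrum of a deep quadratic function. Once $\Delta \le 5$ is shown, Lemma~\ref{lemma: compile: main} gives $C_W = 2 \vee (\pi/2)\sqrt{2 \cdot 5} = O(1)$, $C_\phi = 2$, $c_\phi = 1$, $m_\phi = 1$. Moreover $d_{\max} = d + m \le 2d$, since the width is $d - 1$ by Fact~\ref{fact: deep quad function}. Hence every condition in Assumption~\ref{assumption: lower bound assumption} holds, in particular $C_W, C_\phi \le d^{c_0}$ for large $d$, except the requirement that $L$ be constant.

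\textbf{Structure of $\cS_*$.} By Definition~\ref{def: deep quad function}, $\cS_*$ is the set of blocks of a binary tree on $[d]$: level-$k$ blocks have size $2^k$, $k \ge 1$, and each block is the disjoint union of two sibling blocks one level down, with level-$1$ blocks being pairs of singletons. This family together with the singletons is laminar. A short case check shows that for two distinct members $T, T'$ of (tree blocks) $\cup$ (singletons), $T \oplus T' \in \cS_*$ happens only in two situations. Either $T, T'$ are siblings, and $T \oplus T'$ is their parent. Or one is the parent of the other, and $T \oplus T'$ is the other child. Disjoint non-siblings give a non-block union. Nested non-parent pairs give a set that is not a block. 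A block combined with a singleton has odd size, which is also excluded.

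\textbf{Fan-out count.} By Lemma~\ref{lemma: one-pass coordinate descent} and the proof of Lemma~\ref{lemma: ub: induction step}, each nonzero output coordinate $j$ of layer $l$ corresponds to a unique feature $S_j \in \cS_*$ realized by a unique $H\ps{l}_j$ with $|H\ps{l}_j| \le 2$. Distinct $j$ give distinct $S_j$, because duplicates are truncated to $0$ by case~(a). Also, every nonzero input coordinate carries either a singleton (only the $\x$ part, since $\cS_*$ has no singletons) or a block from $\cS_*$, and blocks appear exactly once.

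Now fix an input coordinate $i$ carrying a feature $T$, and count the outputs $j$ with $i \in H\ps{l}_j$. If $H\ps{l}_j = \{i\}$, then $S_j = T$, which gives at most $1$ output. Otherwise $H\ps{l}_j = \{i, i'\}$ with $S_j = T \oplus T'$, and by the structural fact $S_j$ is one of three things: the parent of $T$ (partner is $T$'s sibling, at most $1$), the sibling of $T$ (partner is $T$'s parent, at most $1$), or one of the two children of $T$ (partner is the other child, at most $2$). Since the $S_j$ are distinct, at most $1 + 1 + 1 + 2 = 5$ outputs use $i$, so $\Delta \le 5$.

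\textbf{Main obstacle.} The delicate step is making sure that no $H$ containing $i$ accidentally represents an element of $\cS_*$ in an unexpected way. The laminarity case analysis above handles this. The distinctness of the $S_j$, which comes from the deduplication in Lemma~\ref{lemma: one-pass coordinate descent}, then converts the short list of possible features into the bound $\Delta \le 5$.
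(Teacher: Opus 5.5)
Your proposal is correct and follows essentially the same route as the paper. A feature $T$ can feed only the outputs $T$, its parent, its sibling, and its two children; deduplication makes each appear at most once, so $\Delta \le 5$, and then $D = 2$ and width $d-1$ give the remaining conditions. Your power-of-two size argument on the laminar family also supplies the justification for this five-case enumeration, which the paper only asserts.
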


To prove Lemma~\ref{lemma: compile: main}, first we show that each output feature $\tilde{x}\ps{l+1}_j$ can be represented using a single neuron.
\begin{lemma}
  \label{lemma: compile: single neuron}
  Fix $l \in [L]$ and $H \subset I_l$.
  Define $\w_H \in \R^{d+m}$ by $w_{H, i} := (\pi / 2) \indi\{ i \in H \}$, and let $\phi_H(z) := \cos( \pi |H|/2 - z )$.
  Then, we have 
  \[
    \phi_H\left( \w_H \cdot \tilde{\x}\ps{l}(\x) \right)
    = \chi_{S\ps{l}_H}(\x), \quad 
    \forall \x \in \{\pm 1\}^d.
  \]
\end{lemma}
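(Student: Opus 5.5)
The identity is a direct computation once we write each input coordinate as a sign. By the definition of $\tilde{\x}\ps{l}$, for every $i \in H \subset I_l$ we have $\tilde{x}\ps{l}_i(\x) = \chi_{S\ps{l}_i}(\x) \in \{\pm 1\}$. Coordinates outside $H$ receive zero weight in $\w_H$, so
\[
  \w_H \cdot \tilde{\x}\ps{l}(\x) = \frac{\pi}{2} \sum_{i \in H} \chi_{S\ps{l}_i}(\x).
\]
Let $k := |\{ i \in H : \chi_{S\ps{l}_i}(\x) = -1 \}|$ count the negative entries. Then $\sum_{i \in H} \chi_{S\ps{l}_i}(\x) = |H| - 2k$, and therefore
\[
  \phi_H\left( \w_H \cdot \tilde{\x}\ps{l}(\x) \right) = \cos\left( \tfrac{\pi |H|}{2} - \tfrac{\pi}{2}(|H| - 2k) \right) = \cos(\pi k) = (-1)^k.
\]

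It remains to identify $(-1)^k$ with $\chi_{S\ps{l}_H}(\x)$. Since each factor is $\pm 1$, the product $\prod_{i \in H} \chi_{S\ps{l}_i}(\x)$ also equals $(-1)^k$. The same computation as in \eqref{eq: zlH}, using $\chi_A \chi_B = \chi_{A \oplus B}$ on the hypercube because $x_j^2 = 1$, gives $\prod_{i \in H} \chi_{S\ps{l}_i} = \chi_{\bigoplus_{i \in H} S\ps{l}_i} = \chi_{S\ps{l}_H}$. Combining the two displays proves the claim.

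The case $H = \varnothing$ is consistent with this: the argument is $0$, $\phi_\varnothing(0) = \cos 0 = 1$, and this equals $\chi_\varnothing$. No step here is a real obstacle. The only point needing care is the bookkeeping that $\tilde{x}\ps{l}_i$ is exactly a $\pm 1$ character for $i \in I_l$, which is what removes the $\alpha$ scalings. This follows from the normalization in the definition of $\tilde{\x}\ps{l}$ together with the hypothesis $H \subset I_l$.
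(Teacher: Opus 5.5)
Your proof is correct and matches the paper's argument: you count the number $k$ of coordinates $i \in H$ with $\tilde{x}\ps{l}_i(\x) = -1$, reduce the preactivation to $\frac{\pi}{2}(|H| - 2k)$ so that $\phi_H$ returns $\cos(\pi k) = (-1)^k$, and identify this with $\prod_{i \in H}\chi_{S\ps{l}_i}(\x) = \chi_{S\ps{l}_H}(\x)$. Your check of the case $H = \varnothing$ is a small addition that the paper does not state explicitly.
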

\begin{proof}
  For notational simplicity, we drop the superscript $l$ in this proof. 
  Let $K_H(\x) := \abs{ \{ i \in H \,:\, \tilde{x}_i(\x) = -1 \} }$.
  By construction, we have 
  \[
    \inprod{\w_H}{\tilde{\x}(\x)}
    = \sum_{i \in I_l} w_{H, i} \tilde{x}_i(\x)
    = \frac{\pi}{2} \sum_{i \in H} \chi_{S_i}(\x)
    = \frac{\pi}{2} \left( |H| - 2 K_H(\x) \right). 
  \]
  Therefore, 
  \[
    \phi_H( \w_H \cdot \tilde{\x}(\x) )
    = \cos\left( \frac{\pi}{2} |H| - \frac{\pi}{2} \left( |H| - 2 K_H(\x) \right) \right)
    = \cos( \pi K_H(\x) )
    = \begin{cases}
      1,  & \text{$K_H(\x)$ is even}, \\
      -1, & \text{$K_H(\x)$ is odd}.
    \end{cases}
  \]
  The quantity $K_H(\x)$ is even if and only if $\prod_{i \in H} \tilde{x}_i(\x) = 1$, which holds if and only if $\chi_{S_H}(\x) = 1$.
  In other words, we have $\phi_H( \w_H \cdot \tilde{\x}(\x) ) = \chi_{S_H}(\x)$.
\end{proof}

\begin{lemma}
  \label{lemma: compile: one layer}
  For any $l \in [L]$, there exist $\bphi_l$ and $\W\ps{l}$ satisfying Assumption~\ref{assumption: lower bound assumption} with $C_W = (\pi/2) \sqrt{D \Delta}$, $c_\phi = 1$, $C_\phi = 2$, $m_\phi = 0$ such that 
  \[
    \tilde{\x}\ps{l+1}(\x) 
    := \bphi_l\left( \W\ps{l} \tilde{\x}\ps{l}(\x)  \right), \quad \forall \x \in \{\pm 1\}^d. 
  \]
\end{lemma}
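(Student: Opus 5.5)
The plan is to assemble the layer row by row from Lemma~\ref{lemma: compile: single neuron}. For each output coordinate $j \in [d+m]$ there are two cases. If $j \in I_{l+1}$, there is a set $H\ps{l}_j \subset I_l$ with $\tilde{x}\ps{l+1}_j \equiv \chi_{S\ps{l}_{H\ps{l}_j}}$. We then take the $j$-th row of $\W\ps{l}$ to be $\w_{H\ps{l}_j} = (\pi/2)\A\ps{l}_{j,:}$, with $\A\ps{l}$ the incidence matrix from \eqref{eq: compile: A}, and the activation $\phi_{l,j}(z) := \cos(\pi |H\ps{l}_j|/2 - z)$. Lemma~\ref{lemma: compile: single neuron} then gives exactly $\phi_{l,j}(\w_{H\ps{l}_j}\cdot\tilde{\x}\ps{l}(\x)) = \tilde{x}\ps{l+1}_j(\x)$ for every $\x \in \{\pm1\}^d$. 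If $j \notin I_{l+1}$, we set the row to zero and $\phi_{l,j} \equiv 0$. The identity $\tilde{\x}\ps{l+1} = \bphi_l(\W\ps{l}\tilde{\x}\ps{l})$ then holds coordinatewise. Because of the coordinate-dependent activations, the phase $\pi|H|/2$ is absorbed into $\phi_{l,j}$, so no bias is needed.

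Next I check the activation conditions of Assumption~\ref{assumption: lower bound assumption}. Each $\phi_{l,j}$ is a shifted cosine or zero, so $|\phi(0)|, \|\phi'\|_\infty, \|\phi''\|_\infty \le 1 \le C_\phi = 2$. For \ref{itm: lb: analytic, strip growth}, the cosine is entire, hence analytic on the strip with $c_\phi = 1$. For $|b| \le 1/2$ we have $|\cos(c - a - ib)| \le \cosh(b) \le \cosh(1/2) < 2$, which gives the strip-growth bound with $C_\phi = 2$ and $m_\phi = 0$.

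The main step is the operator-norm bound $\|\W\ps{l}\|_{\OP} \le (\pi/2)\sqrt{D\Delta}$. Since $\W\ps{l} = (\pi/2)\A\ps{l}$ with $\A\ps{l}$ a $0/1$ matrix, I would apply the Schur test, $\|\A\|_{\OP} \le \sqrt{\|\A\|_{1\to1}\|\A\|_{\infty\to\infty}}$, i.e., the square root of the maximum column sum times the maximum row sum. Each row $j$ has $|H\ps{l}_j| \le D$ ones, because the features produced by layer $l$ are indexed by sets of size at most $D$. Each column $i$ has at most $\Delta$ ones by the definition \eqref{eq: fan-out Delta} of the fan-out. Hence $\|\A\ps{l}\|_{\OP} \le \sqrt{D\Delta}$, which yields the claimed $C_W$.

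One subtlety needs care. The first $d$ coordinates of $\tilde{\x}\ps{l+1}$ are the raw inputs $x_i$. These correspond to the singleton sets $H = \{i\}$, so $\phi(z) = \cos(\pi/2 - z) = \sin z$ and $\sin(\pm\pi/2) = \pm 1$ reproduces them. I also expect to verify that these passthrough rows are counted in $\Delta$, consistent with how $H\ps{l}_j$ and \eqref{eq: fan-out Delta} are defined. The only real obstacle is ensuring that the row sparsity $D$ and the column sparsity $\Delta$ are exactly those in the definitions; the rest is direct.
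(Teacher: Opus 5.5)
Your proposal is correct and matches the paper's proof essentially step for step. It uses the same row-by-row construction from Lemma~\ref{lemma: compile: single neuron}, with zero rows and zero activations for $j \notin I_{l+1}$. It gets the same Schur-test bound $\norm{\A\ps{l}}_{\OP} \le \sqrt{D\Delta}$ from row sums at most $D$ and column sums at most $\Delta$. It checks the shifted cosines the same way, via $\abs{\cos(a + x_0 + ib)} \le \cosh\abs{b} \le 2$ for $\abs{b} \le 1/2$.

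Your remark about the passthrough coordinates also agrees with the paper. The paper implicitly takes $H\ps{l}_j = \{j\}$ for $j \in [d]$, so these copies are counted in $\Delta$, as in the fan-out count in the proof of Corollary~\ref{cor: compile: deep quad}.
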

\begin{proof}
  For notational simplicity, we write $S\ps{l}_j := S\ps{l}_{H\ps{l}_j}$ in this proof. 
  Recall that for any $j \in [d+m]$, we have 
  \[
    \tilde{x}\ps{l+1}_j(\x) 
    = \begin{cases}
      \chi_{S\ps{l}_j}(\x), & j \in I_{l+1}, \\
      0, & \text{otherwise}, 
    \end{cases} 
    \quad \forall \x \in \{\pm 1\}^d. 
  \] 
  For $j \notin I_{l+1}$, we simply set $\phi_{l, j} \equiv 0$ and $\w\ps{l}_j = 0$. 
  For $j \in I_{l+1}$, we choose 
  \[
    \w\ps{l}_j = \left( \frac{\pi}{2} \indi\braces{ i \in H\ps{l}_j } \right)_{i \in [d+m]}, \quad 
    \phi_{l, j}(z) = \cos\left( \frac{\pi}{2}\abs{ H\ps{l}_j } - z \right).
  \]
  Then, by Lemma~\ref{lemma: compile: single neuron}, we have 
  \[
    \tilde{\x}\ps{l+1}(\x) 
    := \bphi_l\left( \W\ps{l} \tilde{\x}\ps{l}(\x)  \right), \quad \forall \x \in \{\pm 1\}^d. 
  \]
  Now, we upper bound the operator norm of $\W\ps{l}$. 
  Recall the definition of $\A\ps{l} \in \{0, 1\}^{(d+m) \times (d+m)}$ from the beginning of this section.
  Note that we have 
  \[
    \norm{\W\ps{l}}_{\OP}
    = \frac{\pi}{2} \norm{\A\ps{l}}_{\OP}
    \le \frac{\pi}{2} \sqrt{ \norm{\A\ps{l}}_1 \norm{\A\ps{l}}_\infty  },
  \]
  where $\norm{\A}_1 := \max_{i} \sum_{j} |A_{ji}|$ and $\norm{\A}_\infty := \max_j \sum_i |A_{ji}|$.
  Since $|H\ps{l}_j| \le D$, we have $\norm{\A\ps{l}}_\infty \le D$. 
  In addition, by the definition of maximum fan-out (cf.~\eqref{eq: fan-out Delta}), we have $\norm{\A\ps{l}}_1 \le \Delta$. 
  Therefore, 
  \[
    \norm{\W\ps{l}}_{\OP}
    \le \frac{\pi}{2} \sqrt{D \Delta}.
  \]
  Finally, we show that $\bphi_l$ satisfies Assumption~\ref{assumption: lower bound assumption}\ref{itm: lb: sigma derivatives bounds}-\ref{itm: lb: analytic, strip growth} with $c_\phi = 1$, $C_\phi = 2$ and $m_\phi = 0$.
  Since our activation functions are shifted cosines, it is clear that Assumption~\ref{assumption: lower bound assumption}\ref{itm: lb: sigma derivatives bounds} holds as long as $C_\phi \ge 1$. 
  In addition, for any $x_0 \in \R$, $z \mapsto \cos(x_0 + z)$ is entire and satisfies $|\cos( a + x_0 + i b )|\le \cosh|b|$ for all $a, b \in \R$. 
  Take $c_\phi = 1$. 
  Then, for any $|b| \le 1/2$, we have $\abs{ \cos(a + x_0 + i b) } \le \cosh|b| \le 2$.
  In other words, Assumption~\ref{assumption: lower bound assumption}\ref{itm: lb: analytic, strip growth} holds with $c_\phi = 1$, $C_\phi = 2$ and $m_\phi = 0$.
\end{proof}

Now, we are ready to prove the main results of this section.
\begin{proof}[Proof of Lemma~\ref{lemma: compile: main}]
  First, by Lemma~\ref{lemma: compile: one layer}, we can choose $\bphi_1, \dots, \bphi_L$ and $\W\ps{1}, \dots, \W\ps{L}$ satisfying the requirements such that 
  \[
    \tilde{\x}\ps{l+1}(\x) 
    := \bphi_l\left( \W\ps{l} \tilde{\x}\ps{l}(\x)  \right), 
    \quad \forall \x \in \{\pm 1\}^d, \, l \in [L]. 
  \]
  Hence, to complete the proof, it suffices to construct the $0$th layer and the output layer(s). 
  Recall that the raw input is $\x \in \{\pm 1\}^d$ and $\tilde{\x}\ps{1}(\x) = \x \circ \mbf{0} \in \{0, \pm 1\}^{d + m}$.
  Hence, to ensure $\hat{\x}\ps{1} \equiv \tilde{\x}\ps{1}$, it suffices to choose $\bphi_0 = \mrm{Id}$ and $\W\ps{0} = [ \Id_d \; \mbf{0} ]\trans \in \R^{(d+m) \times d}$.
  Then, define $(\hat{\x}\ps{l})_{l \in [L+1]}$ as in Lemma~\ref{lemma: compile: main}.
  We have $\hat{\x}\ps{l} \equiv \tilde{\x}\ps{l}$ for all $l \in [L+1]$.
  In particular, this implies that for every $S \in \cS_*$, there exists some $j_S \in [d + m]$ such that $\hat{x}\ps{L+1}_{j_S} \equiv \chi_S$. 
  In addition, by the proof of Theorem~\ref{thm: upper bound}, we know that after training, the trained model satisfies
  \[
    f 
    = \sum_{S \in \cS_*} \tilde{a}_S \chi_S,
    \quad\text{where }\, \abs{ \tilde{a}_S  - \hat{f}_*(S) } \le \eps_* / \sqrt{|\cS_*|},
    \; \forall S \in \cS_*.
  \]
  Thus, in principle, it suffices to set the entry of the output weight $\a$ associated with each $S$ equal to the corresponding $\tilde{a}_S$.
  Note that because of these small errors, the norm of this $\a$ can potentially be slightly larger than $1$. 
  To fix this, we set $\W\ps{L+1} = 2 \Id_{d+m}$ and $\bphi_{L+1} = \mrm{Id}$, and then set the corresponding entry in $\a$ to be $\tilde{a}_S / 2$.
  Then, we have $\norm{\a}_2 = \norm{\tilde{\a}}/2 \le (1 + \eps_*)/2 \le 1$.
  Also, note that the identity activations satisfy Assumption~\ref{assumption: lower bound assumption} with $c_\phi = 1, C_\phi = 2, m_\phi = 1$.

  Finally, to verify that these weights and activations can be constructed efficiently from the original trained network, observe that the incidence matrix \eqref{eq: compile: A} and output weights can be constructed by recording the original index of the intermediate features and inspecting the nonzero entries of the learner weights.  
\end{proof}

\begin{proof}[Proof of Corollary~\ref{cor: compile: deep quad}]
  It suffices to show that the maximum fan-out $\Delta$ is at most $5$.
  Without loss of generality, we may assume that the target function is the base deep quadratic function.

  Recall that $\x\ps{l} = \x \circ \y\ps{l-1}$. 
  In the first layer, the degree-$1$ feature will be copied once and used to construct the (disjoint) degree-$2$ features. 
  Hence, the maximum fan-out of the first layer is~$2$.

  Now, fix $l \ge 2$.
  Recall from our induction hypothesis (Assumption~\ref{assumption: training induction hypothesis}) that the features stored in $\y\ps{l}$ are unique. 
  In addition, note that each input feature can be used at most $5$ times. 
  To see this, consider some $P \subset [d]$ in an intermediate representation. 
  Let $Q$ denote its sibling, $A, B$ its two unique children, and $R = P \cup Q$ its (unique) parent. 
  Then, $P$ can only be potentially used to construct the output features $P$, $A = P \oplus B$, $B = P \oplus A$, $Q = R \oplus P$ and $R = P \oplus Q$. 
  Hence, $\Delta \le 5$.

  To complete the proof, recall from Fact~\ref{fact: deep quad function} that the local degree and width of any deep quadratic function are $2$ and $d-1$, respectively.   
\end{proof}

\section{Further discussion and examples of hierarchical staircase functions}
\label{appendix: examples}

In this section, we give further examples of hierarchical staircase functions and discuss their relationship to and differences from related function classes.

\subsection{Staircase-type functions}
\label{appendix: examples: staircase}

\paragraph{Staircase function.}
\cite{abbe_staircase_2021} introduce (basic) staircase functions, which are defined as Boolean functions $f = \sum_{S \subset [d]} \hat{f}(S) \chi_S$ satisfying the property that for every $S \in \cS_f$ with $|S| \ge 2$, there exists $S' \subset S$ such that $S' \in \cS_f$ and $|S \setminus S'| = 1$. 
In words, every set $S$ in the Fourier spectrum can be constructed by taking the union of another smaller set $S'$ in the spectrum and one degree-$1$ monomial. 
It is clear that such functions are also hierarchical staircase functions with local degree $2$ and depth at most the size of the largest set in $\cS_f$ plus one. 

\paragraph{Merged-staircase function.}
Later, \cite{abbe_merged-staircase_2022} generalize staircase functions to merged-staircase functions by replacing $S'$ with a union of earlier sets.
Formally, a Boolean function $f$ is said to be a merged-staircase function if there is an ordering $\{ S_1, \dots, S_r \}$ of $\cS_f$ such that 
\begin{equation}
  \label{eq: merged staircase}
  \textstyle
  \abs{ S_i \setminus \bigcup_{j=1}^{i-1} S_j } \le 1, \quad \forall i \in [r],
\end{equation}
and the optimization guarantees in \cite{abbe_merged-staircase_2022} further require $f$ to be $O(1)$-sparse, i.e., $\big| \bigcup_{S \in \cS_f} S \big| = O(1)$.
The function classes and results in this paper and \cite{abbe_merged-staircase_2022} are not directly comparable. 
An obvious difference is that we allow the function to have genuinely high-degree terms, while the sparsity condition in \cite{abbe_merged-staircase_2022} effectively allows only constant-degree terms. 
Hence, in principle, all $k$-sparse merged-staircase functions are depth-$2$ hierarchical staircase functions with local degree $k$.
Even if we drop the sparsity constraint, these two function classes are still not comparable, since $\bigcup_{j=1}^{i-1} S_j$ can potentially involve a superconstant number of earlier sets.
For example, take $S_i = \{i\}$ for $i \in [d/2]$ and $S_{d/2+1} = \{1, \dots, d/2\}$. The collection $\{S_1, \dots, S_{d/2+1}\}$ satisfies \eqref{eq: merged staircase}, but it is not a hierarchical staircase function with constant local degree.
However, if we restrict the number of $\{ S_j \}_{j < i}$ that can be used by $S_i$ to be a constant $D$ and replace the union and set difference with symmetric difference in \eqref{eq: merged staircase}, then the resulting functions will be hierarchical staircase functions with local degree $D+1$. 
In addition, we believe that some constraint on sparsity and/or on the number of usable sets is necessary for efficient learning.
To see this, note that $S_i = \{i\}$ for $i \in [d]$ and $S_{d+1} = \{ 1, \dots, d/2 \}$ also satisfy \eqref{eq: merged staircase}, and learning $S_{d+1}$ after learning $(S_i)_{i \in [d]}$ is essentially learning $(d/2)$-parity, which is known to be hard for statistical query models (\cite{kearns_efficient_1998}) and certain gradient-based methods (\cite{shalev-shwartz_failures_2017}).

\paragraph{Leap complexity.}
\cite{abbe_sgd_2023} further generalize the concept of merged-staircase functions by allowing the right-hand side of \eqref{eq: merged staircase} to be a constant larger than $1$. 
The smallest possible such constant is called the leap complexity. 
As in the preceding discussion, if we restrict the number of usable sets on the left-hand side of \eqref{eq: merged staircase} to be constant and replace the union and set difference with symmetric difference, then functions with constant leap complexity are hierarchical staircase functions whose local degree is at most the number of usable sets plus the leap complexity.

\subsection{Globally read-once decision trees}

A \tnbf{Boolean decision tree} $f_T$ over $\{\pm 1\}^d$ is a rooted binary tree whose leaves are labeled by elements of $\{\pm 1\}$ and whose internal nodes $v$ are labeled by coordinates $i_v \in [d]$; the two outgoing edges of each internal node are labeled by the elements of $\{\pm 1\}$.
To evaluate $f_T(\x)$ for $\x \in \{\pm 1\}^d$, we start at the root. At an internal node $v$, we query $x_{i_v}$ and follow the edge labeled by its value.
The output is the value of the leaf we eventually reach. 
We say a decision tree is \tnbf{globally read-once} if each coordinate of $\x$ can appear at most once in the decision tree. 
For more background on decision trees, one may refer to Section~3.2 of \cite{odonnell_analysis_2014}.

\begin{lemma}
  Let $f_T$ be a depth-$h$ globally read-once Boolean decision tree, where $h \ge 1$.
  Then, $f_T \in \HS_{h, 2, m, \kappa}$ with $m \le 4^h$ and $\kappa \ge 2^{-h}$.
\end{lemma}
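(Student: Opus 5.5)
We will compute the Fourier expansion of $f_T$ explicitly via the standard path decomposition and then check the three quantitative claims separately. For each root-to-leaf path $P$ with queried coordinates $i_1,\dots,i_k$ ($k \le h$) and edge labels $b_1,\dots,b_k$, the indicator of reaching the leaf is $\prod_{j=1}^k \frac{1+b_j x_{i_j}}{2}$. Hence $f_T = \sum_P \mathrm{val}(P)\, 2^{-|P|} \sum_{U \subset P} \big(\prod_{j\in U} b_j\big) \chi_{U}$. Because the tree is globally read-once, distinct paths query disjoint sets of coordinates except along their shared prefix. Every $S \in \cS_{f_T}$ is therefore a subset of the coordinate set of a single root-to-leaf path.

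\textbf{Width and signal strength.} A path has at most $2^h$ subsets, and there are at most $2^h$ leaves, so $m \le |\cS_{f_T}| \le 4^h$. For the signal strength, a monomial $\chi_S$ with $S$ contained in some path and $|S| = k$ receives contributions only from leaves below the deepest node of $S$ along that path. Each coefficient has the form (sign)$\cdot 2^{-|P|}$ with $|P|\le h$, so every nonzero coefficient is a sum of dyadic rationals with denominators dividing $2^h$. Any nonzero such number has absolute value at least $2^{-h}$, which gives $\kappa \ge 2^{-h}$.

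\textbf{Hierarchical structure.} This is the main step. We need every $S \in \cS_{f_T}$ to enter some $\cK_l$ with $l \le h$ using local degree $2$. The natural idea is induction on $|S|$: write $S = S' \oplus \{i\}$, where $i$ is the deepest coordinate of $S$ on its path, and require $S' \in \cS_{f_T}$. Since $\{i\} \in \cK_1$, we would get $S \in \cK_l^{\oplus 2}$ once $S' \in \cK_l$. The obstacle is that $\hat f_T(S')$ may vanish through cancellation.

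To get around this, we will instead argue with restrictions. Let $v$ be the root with coordinate $r$, and write $f_T = \frac{1+x_r}{2} f_+ + \frac{1-x_r}{2} f_-$, where $f_\pm$ are read-once trees on disjoint variable sets, of depth at most $h-1$. Then $\cS_{f_T} \subset \cS_{f_+}\cup\cS_{f_-}\cup\{S\cup\{r\}: S \in \cS_{f_+}\cup \cS_{f_-}\}$. Disjointness of supports prevents cancellation between $f_+$ and $f_-$, so each $S\in\cS_{f_\pm}$ survives in $f_T$ either as $S$ or as $S\cup\{r\}$, since the coefficients are $(\hat f_+ \pm \hat f_-)/2$ with one of the two vanishing.

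We then induct on the depth so that the levels line up. The sets in $\cS_{f_\pm}$ are reached by level $h-1$ through the inductive structure, and adjoining $\{r\}\in\cK_1$ adds one more level. The delicate case is when a set's own parent-in-recursion is absent from $\cS_{f_T}$, because it appears only with $r$ adjoined. In that case we chain through the symmetric difference with $\{r\}$ instead, and counting levels carefully yields depth $h$. The hardest part will be this bookkeeping, which guarantees each intermediate set actually lies in $\cS_{f_T}$; we expect to handle it by strengthening the inductive hypothesis to track both $S$ and $S\cup\{r\}$.
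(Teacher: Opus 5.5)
Your width and signal-strength arguments are correct and match the paper's leaf expansion. The gap is in the hierarchical step, which you leave as unspecified ``bookkeeping''. It comes from misreading your own non-cancellation observation. For nonempty $S$ with $r \notin S$, at most one of $\hat f_+(S), \hat f_-(S)$ is nonzero. So if $S \in \cS_{f_a}$, then
\[
  \hat f_T(S) = \tfrac12 \hat f_a(S), \qquad \hat f_T(S \cup \{r\}) = \tfrac{a}{2}\, \hat f_a(S).
\]
That is, \emph{both} $S$ and $S \cup \{r\}$ lie in $\cS_{f_T}$, not ``either $S$ or $S \cup \{r\}$''. Hence your ``delicate case'' (a parent present only with $r$ adjoined) never occurs, and the deferred step is immediate. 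Since $\cS_{f_a} \setminus \{\varnothing\} \subseteq \cS_{f_T}$, induction on $l$ gives $\cK_l(f_a) \subseteq \cK_l(f_T)$, where each sequence is built from its own spectrum. So the inductive hypothesis $\cS_{f_a} \subseteq \cK_{h-1}(f_a)$ transfers to $f_T$. Then $S \cup \{r\} = S \oplus \{r\}$, with $\{r\} \in \cK_1$, lands in $\cK_h(f_T)$. The only possible cancellation is between the constant terms of $f_\pm$. That affects only $\varnothing$ and $\{r\}$, both already in $\cK_1$. This is exactly the paper's proof.

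Without this fact your sketch does not establish $\cS_{f_T} \subseteq \cK_h$. Transferring the subtree's construction requires its intermediate sets to lie in $\cS_{f_T}$, not merely in $\cS_{f_\pm}$. Your proposed workaround of chaining through $\oplus\{r\}$ would not help even if the case were real: it would produce $S \cup \{r\}$ rather than $S$, and cost an extra round.

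Your worry about the first idea is justified, though: peeling off the \emph{deepest} coordinate can fail. Take $r$ above $k$ on a path, with the two subtrees below $k$ computing $x_j$ and $x_m$. Then $\{r,k,j\} \in \cS_{f_T}$ with coefficient $1/4$, but $\hat f_T(\{r,k\}) = 0$, because the constant terms of the subtrees below $k$ cancel. The root recursion instead corresponds to peeling off the \emph{shallowest} coordinate of $S$, which always leaves a set in the spectrum whenever that set is nonempty.
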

\begin{proof}
  We first verify the hierarchical staircase property by induction on $h$.
  Let $i$ be the coordinate queried at the root, and let $f_-$ and $f_+$ be the functions computed by the subtrees reached when $x_i=-1$ and $x_i=+1$, respectively.
  Then
  \begin{equation}
    \label{eq: globally read-once recursion}
    f_T(\x)
    = \frac{1-x_i}{2} f_-(\x) + \frac{1+x_i}{2} f_+(\x).
  \end{equation}
  Let $V_-$ and $V_+$ denote the sets of coordinates queried in the two subtrees.
  Since $T$ is globally read-once, $V_-$ and $V_+$ are disjoint and neither contains $i$.
  Note that for any Boolean function $f$ and $S \subset [d]$, $\hat{f}(S) = 0$ if there exists some $i \in S$ on which $f$ does not depend. 
  Hence, it follows from \eqref{eq: globally read-once recursion} that, for each $a \in \{\pm 1\}$ and each nonempty $S \in \cS_{f_a}$, 
  \[
    i \notin S, \qquad
    \hat f_T(S) = \frac{1}{2}\hat f_a(S), \qquad
    \hat f_T(S \cup \{i\}) = \frac{a}{2}\hat f_a(S).
  \]
  In particular, both $S$ and $S \cup \{i\}$ belong to $\cS_{f_T}$.
  Moreover, apart from $\varnothing$ and $\{i\}$, every set in $\cS_{f_T}$ is of one of these two forms.
  In particular, in the second case, $S \cup \{i\} = S \oplus \{i\}$ can be constructed by combining an existing feature and a degree-$1$ term. 

  If $h=1$, both subtrees are constant, so $\cS_{f_T} \subseteq \{\varnothing,\{i\}\} \subseteq \cK_1$.
  Now suppose $h\ge 2$.
  By the induction hypothesis, every nonempty set in the spectrum of either subtree can be constructed within $h-1$ rounds.
  The same construction remains valid for $f_T$, since all of its nonempty intermediate sets also belong to $\cS_{f_T}$ by the preceding display.
  Thus, every such $S$ belongs to $\cK_{h-1}$ for $f_T$.
  Since $\{i\}\in\cK_1\subseteq\cK_{h-1}$ and
  \(
    S \cup \{i\} = S \oplus \{i\},
  \)
  every set of the second form belongs to $\cK_h$.
  We conclude that $\cS_{f_T}\subseteq\cK_h$, so $f_T$ has depth at most $h$ and local degree at most $2$.

  It remains to bound the width and signal strength.
  For each leaf $l$, let $b_l\in\{\pm1\}$ be its label, let $P_l$ be the set of coordinates queried on the root-to-$l$ path, and let $a_{l,j}\in\{\pm1\}$ be the edge label followed when coordinate $j\in P_l$ is queried.
  Since the leaf events partition the hypercube,
  \[
    f_T(\x)
    = \sum_{l} b_l
      \prod_{j\in P_l}\frac{1+a_{l,j}x_j}{2}.
  \]
  A leaf at depth $t$ contributes at most $2^t\le 2^h$ Fourier monomials, and a depth-$h$ binary tree has at most $2^h$ leaves.
  Hence,
  \[
    |\cS_{f_T}| \le \sum_l 2^{|P_l|} \le 4^h.
  \]
  Finally, after expanding the preceding leaf representation, every Fourier coefficient is a sum of terms of the form $\pm 2^{-|P_l|}$.
  Since $|P_l|\le h$, every Fourier coefficient is an integer multiple of $2^{-h}$.
  Therefore, every nonzero Fourier coefficient has magnitude at least $2^{-h}$, which proves the claimed lower bound on $\kappa$.
\end{proof}

\subsection{Addressing functions}

Fix $k \in \bbN_+$.
Let $\a = (a_1, \dots, a_k) \in \{\pm 1\}^k$ be the address variables and let $\y = (y_b)_{b \in \{\pm 1\}^k} \in \{\pm 1\}^{2^k}$ be the data variables.
The $k$-bit \tnbf{addressing function} is
\begin{equation}
  \label{eq: addressing function}
  \Addr_k(\a, \y)
  := y_{\a}
  = \sum_{b \in \{\pm 1\}^k}
    y_b \prod_{j=1}^k \frac{1 + b_j a_j}{2}.
\end{equation}
In words, $\Addr_k$ returns the data bit indexed by the address $\a$.

\begin{lemma}
  \label{lemma: addressing function is hs}
  \(
    \Addr_k \in \HS_{k+1, 2, 4^k, 2^{-k}}.
  \)
\end{lemma}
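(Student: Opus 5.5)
Expanding the product in \eqref{eq: addressing function} gives
\[
  \Addr_k(\a, \y)
  = 2^{-k} \sum_{b \in \{\pm 1\}^k} \sum_{J \subset [k]} \Big( \prod_{j \in J} b_j \Big) \, y_b \, \chi_J(\a).
\]
Each pair $(b, J)$ yields a distinct monomial $y_b \chi_J(\a)$, so no cancellation occurs. Hence $\cS_{\Addr_k} = \{ \{y_b\} \cup J : b \in \{\pm 1\}^k, J \subset [k] \}$, where we identify $J$ with the corresponding address coordinates. Every coefficient equals $\pm 2^{-k}$. This immediately gives width $|\cS| = 2^k \cdot 2^k = 4^k$ and signal strength exactly $2^{-k}$. (The ambient dimension is $d = k + 2^k$; we relabel coordinates as needed.)

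For the hierarchical structure, I would show by induction on $t$ that $\cK_{t+1}$ contains every set $\{y_b\} \cup J$ with $|J| \le t$. For $t = 0$, the sets $\{y_b\}$ are singletons, so they lie in $\cK_1$.

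For the inductive step, suppose $|J| = t+1 \ge 1$ and pick any $j \in J$. By the induction hypothesis, $\{y_b\} \cup (J \setminus \{j\}) \in \cK_{t+1}$, and $\{a_j\} \in \cK_1 \subset \cK_{t+1}$. These two sets are disjoint, so their symmetric difference is $\{y_b\} \cup J$. This set belongs to $\cS_{\Addr_k}$, so it lies in $\cK_{t+1}^{\oplus 2} \cap \cS \subset \cK_{t+2}$.

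Since $|J| \le k$, every spectral set lies in $\cK_{k+1}$. Therefore the depth is at most $k+1$ with local degree $2$. Here $\cK^{\oplus D}$ uses exactly $D$ sets, and if needed we pad with $\varnothing \in \cK_1$ so that combining fewer than $D$ sets is also allowed.

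The only point needing care is confirming that every intermediate set used in the recursion actually lies in $\cS_{\Addr_k}$, since $\cK_{l+1}$ only adds sets that belong to the spectrum. This holds because all sets of the form $\{y_b\} \cup J$ have nonzero coefficient. This is exactly the step the recursion above relies on, so it is the place I would double-check. It is also worth noting the contrast with the decision-tree lemma: the addressing function is not globally read-once, which is why we argue directly from its Fourier expansion instead of invoking that lemma.
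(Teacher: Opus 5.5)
Your proposal is correct and matches the paper's proof essentially step for step. The Fourier expansion gives $4^k$ distinct monomials with coefficients $\pm 2^{-k}$, and induction on $|J|$ via $\{y_b\}\cup J = \bigl(\{y_b\}\cup (J\setminus\{j\})\bigr)\oplus\{a_j\}$ places every spectral set in $\cK_{|J|+1}\subseteq\cK_{k+1}$; your closing remark about read-once trees is not needed, and it is not quite accurate for $k=1$, where $\Addr_1$ is globally read-once.
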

\begin{proof}
  Expanding \eqref{eq: addressing function} gives
  \begin{equation}
    \label{eq: addressing fourier expansion}
    \Addr_k(\a, \y)
    =
    2^{-k}
    \sum_{b \in \{\pm 1\}^k}
    \sum_{A \subseteq [k]}
      \left(\prod_{j \in A} b_j\right)
      y_b \prod_{j \in A} a_j.
  \end{equation}
  The $4^k$ monomials in \eqref{eq: addressing fourier expansion} are distinct, and every coefficient has magnitude $2^{-k}$.
  Hence, $\Addr_k$ has width $4^k$ and signal strength $2^{-k}$.

  For $A \subseteq [k]$ and $b \in \{\pm 1\}^k$, let $S_{A,b}$ denote the support of the monomial
  $
    y_b \prod_{j \in A} a_j.
  $
  We show by induction on $|A|$ that $S_{A,b} \in \cK_{|A|+1}$.
  The base case follows because $S_{\varnothing,b} = \{y_b\} \in \cK_1$.
  If $A$ is nonempty, fix any $j \in A$.
  Then
  \[
    S_{A,b}
    = S_{A \setminus \{j\},b} \oplus \{a_j\}.
  \]
  The first set belongs to $\cK_{|A|}$ by the induction hypothesis, the singleton $\{a_j\}$ belongs to $\cK_1 \subseteq \cK_{|A|}$, and $S_{A,b}$ belongs to the spectrum by \eqref{eq: addressing fourier expansion}.
  Therefore, $S_{A,b} \in \cK_{|A|+1}$.
  Since $|A| \le k$, the full spectrum is contained in $\cK_{k+1}$, proving the hierarchical staircase claim.
\end{proof}

\end{document}